\crefname{equation}{}{}
\crefname{figure}{Fig.}{Figs.}
\crefname{section}{Sec.}{Secs.}
\crefname{appendix}{App.}{Apps.}
\crefname{table}{Tab.}{Tabs.}
\newtheorem{theorem}{Theorem}
\crefname{theorem}{Thm.}{Thms.}
\newtheorem{lemma}{Lemma}
\crefname{lemma}{Lem.}{Lems.}
\newtheorem{assumption}{Assumption}
\crefname{assumption}{Assump.}{Assumps.}
\newtheorem{proposition}{Proposition}
\crefname{proposition}{Prop.}{Props.}
\crefname{corollary}{Cor.}{Cors.}
\newtheorem{definition}{Definition}
\crefname{definition}{Def.}{Defs.}
\newenvironment{sketchofproof}{\paragraph{\textbf{Sketch of Proof}}}{\hfill$\square$}
\newtheorem{remark}{Remark}
\crefname{remark}{Rmk.}{Rmks.}
\theoremstyle{remark}
\crefname{algocf}{Alg.}{Algs.}
\newcommand{\ro}[1]{\left(#1\right)} 
\newcommand{\sq}[1]{\left[#1\right]} 
\newcommand{\sqd}[1]{\left[\!\left[#1\right]\!\right]} 
\newcommand{\cu}[1]{\left\{#1\right\}} 
\newcommand{\abs}[1]{\left|#1\right|}
\newcommand{\norm}[1]{\left\|#1\right\|}
\newcommand{\inn}[1]{\left\langle#1\right\rangle}
\newcommand{\ceil}[1]{\left\lceil#1\right\rceil}
\newcommand{\floor}[1]{\left\lfloor#1\right\rfloor}
\newcommand{\e}{\mathrm{e}}
\newcommand{\pif}{{+\infty}}
\newcommand{\tp}{^\mathrm{T}}
\newcommand{\id}{\operatorname{id}}
\renewcommand{\P}{\mathbb{P}}
\newcommand{\E}{\operatorname{\mathbb{E}}}
\newcommand{\var}{\operatorname{Var}}
\newcommand{\bias}{\operatorname{Bias}}
\newcommand{\cov}{\operatorname{Cov}}
\newcommand{\corr}{\operatorname{Corr}}
\newcommand{\iid}{\stackrel{\mathrm{i.i.d.}}{\sim}}
\newcommand{\aseq}{\stackrel{\mathrm{a.s.}}{=}}
\newcommand{\R}{\mathbb{R}}
\newcommand{\Z}{\mathbb{Z}}
\newcommand{\n}[1]{\operatorname{\mathcal{N}}\left(#1\right)}
\newcommand{\un}{\operatorname{Unif}}
\renewcommand{\d}{\mathrm{d}}
\newcommand{\de}[2]{\frac{\mathrm{d}#1}{\mathrm{d}#2}} 
\newcommand{\tr}{\operatorname{tr}}
\newcommand{\diag}{\operatorname{diag}}
\newcommand{\argmin}{\mathop\mathrm{argmin}}
\newcommand{\kl}{\operatorname{KL}}
\newcommand{\tv}{\operatorname{TV}}
\newcommand{\fisher}{\operatorname{FI}}
\newcommand{\renyi}{\operatorname{R}}
\newcommand{\red}[1]{{\color{red}#1}}
\newcommand{\blue}[1]{{\color{blue}#1}}
\newcommand{\green}[1]{{\color{green}#1}}
\newcommand{\Ot}{\widetilde{O}}
\newcommand{\Thetat}{\widetilde{\Theta}}
\newcommand{\poly}{\operatorname{poly}}
\newcommand{\cA}{\mathcal{A}}
\newcommand{\cF}{\mathcal{F}}
\newcommand{\cK}{\mathcal{K}}
\newcommand{\cW}{\mathcal{W}}
\newcommand{\cX}{\mathcal{X}}
\newcommand{\cY}{\mathcal{Y}}
\newcommand{\prob}{\operatorname{Pr}}
\renewcommand{\Pr}{\mathbb{P}^\to}
\newcommand{\Pl}{\mathbb{P}^\gets}
\newcommand{\Qd}{\mathbb{Q}^\dagger}
\newcommand{\Phr}{\widehat{\mathbb{P}}^\to}
\newcommand{\Pbr}{\overline{\mathbb{P}}^\to}
\newcommand{\cPr}{\mathcal{P}^\to}
\newcommand{\cPl}{\mathcal{P}^\gets}
\newcommand{\cQ}{\mathcal{Q}}
\newcommand{\Q}{\mathbb{Q}}
\newcommand{\w}{\mathrm{W}} 
\newcommand{\Pbf}{\mathbf{P}}
\newcommand{\Qbf}{\mathbf{Q}}
\newcommand{\Qhbf}{{\mathbf{\widehat{Q}}}}
\newcommand{\Fh}{\widehat{F}}
\newcommand{\xb}{\overline{x}}
\newcommand{\Zh}{\widehat{Z}}
\newcommand{\Bl}{B^\gets}
\newcommand{\Xl}{X^\gets}
\newcommand{\Yl}{Y^\gets}
\newcommand{\Ml}{M^\gets}
\newcommand{\al}{a^\gets}
\newcommand{\bl}{b^\gets}
\newcommand{\pl}{p^\gets}
\newcommand{\xil}{\xi^\gets}
\newcommand{\etal}{\eta^\gets}
\newcommand{\pih}{\widehat{\pi}}
\newcommand{\pib}{\overline{\pi}}
\newcommand{\piu}{\underline{\pi}}
\newcommand{\pit}{\widetilde{\pi}}
\newcommand{\pihsamp}{\widehat{\pi}_\mathrm{samp}}
\newcommand{\mmd}{\mathrm{MMD}}
\renewcommand{\l}{\ell}
\title{Complexity Analysis of Normalizing Constant Estimation: from Jarzynski Equality to Annealed Importance Sampling and beyond}
\author{%
Wei Guo, Molei Tao, Yongxin Chen \\
Georgia Institute of Technology \\
\texttt{\{wei.guo, mtao, yongchen\}@gatech.edu}
}
\begin{document}

\maketitle

\begin{abstract}
    Given an unnormalized probability density $\pi\propto\mathrm{e}^{-V}$, estimating its normalizing constant $Z=\int_{\mathbb{R}^d}\mathrm{e}^{-V(x)}\mathrm{d}x$ or free energy $F=-\log Z$ is a crucial problem in Bayesian statistics, statistical mechanics, and machine learning. It is challenging especially in high dimensions or when $\pi$ is multimodal. To mitigate the high variance of conventional importance sampling estimators, annealing-based methods such as Jarzynski equality and annealed importance sampling are commonly adopted, yet their quantitative complexity guarantees remain largely unexplored. We take a first step toward a non-asymptotic analysis of annealed importance sampling. In particular, we derive an oracle complexity of $\widetilde{O}\left(\frac{d\beta^2{\mathcal{A}}^2}{\varepsilon^4}\right)$ for estimating $Z$ within $\varepsilon$ relative error with high probability, where $\beta$ is the smoothness of $V$ and $\mathcal{A}$ denotes the action of a curve of probability measures interpolating $\pi$ and a tractable reference distribution. Our analysis, leveraging Girsanov's theorem and optimal transport, does not explicitly require isoperimetric assumptions on the target distribution. Finally, to tackle the large action of the widely used geometric interpolation, we propose a new algorithm based on reverse diffusion samplers, establish a framework for analyzing its complexity, and empirically demonstrate its efficiency in tackling multimodality.
\end{abstract}

\section{Introduction}
\label{sec:intro}
We study the problem of estimating the normalizing constant $Z=\int_{\R^d}\pih(x)\d x$ of an unnormalized probability density function (p.d.f.) $\pi\propto\pih:=\e^{-V}$ on $\R^d$, so that $\pi(x)=\frac{\pih(x)}{Z}$. The normalizing constant appears in various fields: in Bayesian statistics, when $\pih$ is the product of likelihood and prior, $Z$ is also referred to as the marginal likelihood or evidence \citep{gelman2013bayesian}; in statistical mechanics, when $V$ is the Hamiltonian,\footnote{Up to a multiplicative constant $\beta=\frac{1}{k_\mathrm{B}T}$ known as the thermodynamic beta, where $k_\mathrm{B}$ is the Boltzmann constant and $T$ is the temperature. When borrowing physical terminologies, we ignore this for simplicity.} $Z$ is known as the partition function, and $F:=-\log Z$ is called the free energy \citep{chipot2007free,lelievre2010free,pohorille2010good}. The task of normalizing constant estimation has numerous applications, including computing log-likelihoods in probabilistic models \citep{sohl2012hamiltonian}, estimating free energy differences \citep{lelievre2010free}, and training energy-based models in generative modeling \citep{song2021how,carbone2023efficient,sander2025joint}. 

Estimating normalizing constants is challenging in high dimensions or when $\pi$ is multimodal (i.e., $V$ has a complex landscape). Conventional approaches based on importance sampling \citep{meng1996simulating} are widely adopted to tackle this problem, but they suffer from high variance due to the mismatch between the proposal and the target when $\pi$ is complicated \citep{chatterjee2018the}. To alleviate this issue, the technique of \textit{annealing} tries constructing a sequence of intermediate distributions that bridge these two distributions, which motivates several popular methods including path sampling \citep{chen1997on,gelman1998simulating}, annealed importance sampling (AIS, \citet{neal2001annealed}), and sequential Monte Carlo (SMC, \citet{doucet2000sequential,delmoral2006sequential,syed2024optimised}) in statistics literature, as well as thermodynamic integration (TI, \citet{kirkwood1935statistical}) and Jarzynski equality (JE, \citet{jarzynski1997nonequilibrium,ge2008generalized,hartmann2019jarzynski}) in statistical mechanics literature. In particular, JE points out the connection between the free energy difference between two states and the work done over a series of trajectories linking these two states, while AIS constructs a sequence of intermediate distributions and estimates the normalizing constant by importance sampling over these distributions. These two methods are our primary focus in this paper.

Despite the empirical success of annealing-based methods \citep{ma2013estimating,krause2020algorithms,mazzanti2020efficient,yasuda2022free,chen2024ensemble,schonle2025sampling}, the theoretical understanding of their performance is still limited. Existing works for importance sampling mainly focus on the {\bf asymptotic} bias and variance of the estimator \citep{meng1996simulating,gelman1998simulating}, while works on JE usually simplify the problem by assuming the work follows simple distributions (e.g., Gaussian or gamma) \citep{echeverria2012,arrar2019on}. Moreover, only analyses asymptotic in the number of particles derived from central limit theorem
exist \citep[Sec. 4.1]{lelievre2010free}. This paper aims to establish a rigorous {\bf non-asymptotic} analysis of estimators based on JE and AIS, while introducing minimal assumptions on the target distribution. We also propose a new algorithm based on reverse diffusion samplers to tackle a shortcoming of AIS.

\textbf{Contributions.} 
Our key technical contributions are summarized as follows.

\textbf{1.} We discover a novel strategy for analyzing the complexity of normalizing constant estimation, %
applicable to a wide range of target distributions (\cref{assu:pi,assu:abs_cont}) that may not satisfy isoperimetric conditions such as log-concavity. %

\textbf{2.} In \cref{sec:jar}, we study JE %
and prove an upper bound on the time required for running the annealed Langevin dynamics to estimate the normalizing constant within $\varepsilon$ relative error with high probability. The final bound depends on the action (the integral of the squared metric derivative in Wasserstein-2 distance) of the curve.

\textbf{3.} Building on the insights from this analysis of the continuous dynamics, in \cref{sec:ais} we %
establish the first non-asymptotic oracle complexity bound for AIS, representing the first analysis of normalizing constant estimation algorithms without assuming a log-concave target distribution.

\textbf{4.} Finally, in \cref{sec:revdif}, we first point out a potential limitation of the commonly used geometric interpolation, which provides a quantitative explanation of the mass teleportation phenomenon. We then propose a series of new algorithms based on reverse diffusion samplers and formalize a framework for analyzing its oracle complexity.
Our experimental results demonstrate the superiority of the proposed algorithm over AIS in overcoming multimodality.

\textbf{Related Works.}
Below, we summarize the related works in four aspects.

\textbf{I. Methods for normalizing constant estimation.} We mainly discuss two classes of methods here. First, the \emph{equilibrium} methods, such as TI \citep{kirkwood1935statistical} and its variants \citep{brosse2018normalizing,ge2020estimating,chehab2023provable,kook2025sampling}, which involve sampling sequentially from a series of equilibrium Markov transition kernels. Second, the \emph{non-equilibrium} methods, such as AIS \citep{neal2001annealed}, which samples from a non-equilibrium stochastic process that gradually evolves from a prior distribution to the target distributions. In \cref{app:rel_work_ti}, we show that TI is a special case of AIS using the ``perfect'' transition kernels. Recent years have also witnessed the emergence of \textit{learning-based} non-equilibrium methods, which are typically byproducts of neural samplers \citep{nusken2021solving,zhang2022path,mate2023learning,richter2024improved,sun2024dynamical,vargas2024transport,mate2024neural,albergo2025nets,blessing2025underdamped,chen2025sequential,havens2025adjoint,du2025feat}. Finally, there are also methods based on particle filtering \citep{kostov2017algorithm,jasra2018multilevel,ruzayqat2022multilevel}.

\textbf{II. Variance reduction in JE and AIS.} Our proof methodology focuses on the discrepancy between the sampling path measure and the reference path measure, which is related to the variance reduction technique in applying JE and AIS. For example, \citet{vaikuntanathan2008escorted} introduced the idea of escorted simulation, \citet{hartmann2017variational} proposed a method for learning the optimal control protocol in JE through the variational characterization of free energy, and \citet{doucet2022score} leveraged score-based generative model to learn the optimal backward kernel. Quantifying the discrepancy between path measures is the core of our analysis.

\textbf{III. Complexity analysis for normalizing constant estimation.} \citet{chehab2023provable} studied the asymptotic statistical efficiency of the curve for TI measured by the asymptotic mean-squared error, and highlighted the advantage of the geometric interpolation. In terms of non-asymptotic analysis, existing works mainly rely on the isoperimetry of the target distribution. For instance, \citet{andrieu2016sampling}  derived bounds of bias and variance for TI under Poincar\'e inequality (PI), \citet{brosse2018normalizing} provided complexity guarantees for TI under both strong and weak log-concavity conditions, while \citet{ge2020estimating} improved the complexity under strong log-concavity using multilevel Monte Carlo.

\textbf{IV. Complexity analysis of sampling beyond isoperimetry.}
Our analysis of estimating normalizing constants of non-log-concave distributions is also closely related to the study of sampling beyond log-concavity. In general, such problems are NP hard \citep{ge2018beyond,he2025on}. Existing works providing convergence guarantees have leveraged more general isoperimetric inequalities such as weak PI \citep{mousavi-hosseini2023towards}, tried to establish convergence in weaker notions \citep{balasubramanian2022towards,cheng2023fast}, or utilized denoising diffusion models \citep{huang2024reverse,he2024zeroth}. We highlight \citet{guo2025provable} that this paper mainly draws inspiration from, which introduced the action of a curve in quantifying the convergence of annealed sampling. While they focused on sampling and presented cases where annealing works, we extend the analysis to a conceptually different task, and further establish lower bounds on the action of the commonly used geometric interpolation, motivating a new algorithm based on reverse diffusion samplers.

\textbf{Notations and Definitions.}
For $a,b\in\R$, let $\sqd{a,b}:=[a,b]\cap\Z$, $a\wedge b:=\min(a,b)$, and $a\vee b:=\max(a,b)$.
For $a,b>0$, the notations $a\lesssim b$, $b\gtrsim a$, $a=O(b)$, $b=\Omega(a)$ indicate that $a\le Cb$ for some universal absolute constant $C>0$, and the notations $a\asymp b$, $a=\Theta(b)$ stand for $a\lesssim b\lesssim a$. $\Ot\ro{\cdot},\Thetat\ro{\cdot}$ hide logarithmic dependence in $O(\cdot),\Theta(\cdot)$.
A function $U\in C^2(\R^d)$ is $\alpha(>0)$-strongly-convex if $\nabla^2U\succeq\alpha I$, and is $\beta(>0)$-smooth if $-\beta I\preceq\nabla^2U\preceq\beta I$.
We do not distinguish probability measures on $\R^d$ from their Lebesgue densities.
For two probability measures $\mu,\nu$, the total-variation (TV) distance is $\tv(\mu,\nu)=\sup_{\textrm{measurable}~A}|\mu(A)-\nu(A)|$, and the Kullback-Leibler (KL) divergence is $\kl(\mu\|\nu)=\int\log\frac{\d\mu}{\d\nu}\d\mu$.
Finally, a function $T:\R^d\times\R^d\to[0,\pif)$ is a transition kernel if for any $x$, $T(x,\cdot)$ is a p.d.f. Throughout this paper, $(B_t)$ and $(W_t)$ represent standard Brownian motions (BM) on $\R^d$.

\textbf{Preliminaries.}
For brevity, we integrate the required background information into the main text, with a detailed exposition available in \cref{app:pre}.
\vspace{-1em}

\section{Preliminaries and Problem Setting}
\label{sec:prob_setting}
To motivate the study of normalizing constant estimation, we first present several examples.

\textbf{Example 1.} [\textit{Free energy difference.}] In many statistical physics problems \citep{lelievre2010free}, given two energy functions $U_0,U_1$ (possibly linked through some thermodynamic process), one is often interested in estimating the \uline{free energy difference} $\Delta F:=-\frac1\beta\log(\int\e^{-\beta U_1}\d x/\int\e^{-\beta U_0}\d x)$, which is related with the normalizing constant of the distributions $\pi_i\propto\e^{-\beta U_i}$.

\textbf{Example 2.} [\textit{Likelihood in latent variable models.}] In latent variable models such as variational autoencoders \citep{kingma2013auto}, a common evaluation metric is the \uline{marginal likelihood} of a data point $x$, $p_\theta(x)=\int p_\theta(x|z)p(z)\d z$. This is nothing but the normalizing constant of the posterior distribution of the latent variable $z$ given data $x$, $p_\theta(z|x)\propto_z p_\theta(x|z)p(z)$.

\textbf{Example 3.} [\textit{Volume of convex bodies.}] In theoretical computer science, a classical problem is to estimate the \uline{volume} of a convex body $\cK$ \citep{dyer1991a,cousins2018gaussian,kook2024inandout}, which is equivalent to the normalizing constant of the uniform distribution on $\cK$, $\pi\propto1_\cK$.

Building on prior theoretical results \citep{brosse2018normalizing,ge2020estimating}, we study the oracle complexity of estimating the normalizing constant of a density under the following criterion:

\begin{tcolorbox}[colback=gray!10, colframe=gray!10, boxrule=0.5pt, arc=2pt, left=0mm, right=0mm, top=0mm, bottom=0mm]
\textbf{Aim:} Given a density $\pi\propto\pih:=\e^{-V}$ on $\R^d$, bound the complexity
of obtaining an estimator $\Zh$ of $Z=\int_{\R^d}\pih(x)\d x$ such that with constant probability, the relative error is within $\varepsilon(\ll1)$:
\begin{equation}
    \prob\ro{\abs{\frac{\Zh}{Z}-1}\le\varepsilon}\ge\frac{3}{4}.
    \label{eq:acc_whp}
\end{equation}
\end{tcolorbox}

\begin{remark}
We make two remarks regarding \cref{eq:acc_whp}. 
First, similar to how taking the mean of i.i.d. estimates reduces variance, we show in \cref{lem:med_trick} that the probability above can be boosted to $1-\zeta$, $\forall\zeta\in\ro{0,\frac{1}{4}}$ 
using the \uline{median trick}: obtaining $O\left(\log\frac{1}{\zeta}\right)$ i.i.d. estimates satisfying \cref{eq:acc_whp} and taking their median. Therefore, we focus on the task of obtaining a \uline{single} estimate satisfying \cref{eq:acc_whp} hereafter.
Second, \cref{eq:acc_whp} also allows us to quantify the complexity of estimating the free energy $F=-\log Z$, which is often of greater interest in statistical mechanics than the partition function $Z$. We show in \cref{app:guarantee} that estimating $Z$ with $O(\varepsilon)$ \uline{relative} error and estimating $F$ with $O(\varepsilon)$ \uline{absolute} error share the same complexity up to constants. 
Further discussion of this guarantee, including a literature review and the comparison with bias and variance, is deferred to \cref{app:guarantee}.
\label{rmk:guarantee}
\end{remark}

A straightforward method for estimating $Z$ is through \textit{importance sampling}, i.e., $Z=\E_{\pi_0}\frac{\pih}{\pi_0}$ for some tractable proposal distribution $\pi_0$, yet its variance can be large due to the mismatch between $\pi_0$ and $\pi$. The rationale behind \textbf{annealing} involves a gradual transition from $\pi_0$ to $\pi_1=\pi$. Throughout this paper, we consider a curve of probability measures denoted as 
$$\ro{\pi_\theta=\frac{1}{Z_\theta}\e^{-V_\theta}}_{\theta\in[0,1]},$$
where $V_1=V$ is the potential of $\pi$, and $Z_1=Z$ is what we need to estimate. We do not specify the exact form of this curve now, but only introduce the following mild regularity assumption on the curve, as assumed in classical textbooks such as \citet{ambrosio2008gradient,ambrosio2021lectures,santambrogio2015optimal}:
\begin{assumption}
    The potential $[0,1]\times\R^d\ni(\theta,x)\mapsto V_\theta(x)\in\R$ is jointly $C^1$, and the curve $(\pi_\theta)_{\theta\in[0,1]}$ is \textbf{absolute continuous} with finite \textbf{action} $\cA:=\int_0^1\abs{\dot\pi}_\theta^2\d\theta$.
    \label{assu:abs_cont}
\end{assumption}
Here, $|\dot\pi|_\theta:=\lim_{\delta\to0}\frac{\w_2(\pi_{\theta+\delta},\pi_\theta)}{|\delta|}$ is the Wasserstein-2 ($\text{W}_\text{2}$) \textbf{metric derivative} of the curve $(\pi_\theta)_{\theta\in[0,1]}$ at $\theta$, which measures the ``speed'' of the curve in the space of probability distributions, and \textbf{absolute continuity} means the above limit exists and is finite for all $\theta\in[0,1]$.
A curve having a finite action is a weaker condition than requiring each $\pi_\theta$ to satisfy isoperimetric inequalities (e.g., Poincar\'e or log-Sobolev). We refer readers to \cref{app:pre_ot} for details of optimal transport (OT), and in particular, we highlight the connection between the metric derivative and the continuity equation (\cref{lem:metric}), which will serve as a key tool in our analysis.

For the purpose of non-asymptotic analysis, we further introduce the following mild assumption: %
\begin{assumption}
    $V$ is $\beta$-smooth, $\nabla V(0)=0$, and $m:=\sqrt{\E_\pi\|\cdot\|^2}<\pif$.
    \label{assu:pi}
\end{assumption}
\begin{remark}
    One can always find a stationary point $x_*$ of (possibly non-convex) $V$ using optimization methods within negligible cost compared with the complexity for estimating $Z$. By considering the translated distribution $\pi(\cdot-x_*)$, we assume $0$ is a stationary point without loss of generality.
\end{remark}

Equipped with this fundamental setup, we now proceed to introduce the JE and AIS, and establish an analysis for their complexity.

\section{Analysis of the Jarzynski Equality}
\label{sec:jar}
To elucidate how annealing works in the task of normalizing constant estimation, we first consider \textbf{annealed Langevin diffusion (ALD)}, which runs \textbf{Langevin diffusion (LD)} with a dynamically changing target distribution.
Recall that the LD with target distribution is the stochastic differential equation (SDE) $\d X_t=\nabla\log\pi(X_t)\d t+\sqrt{2}\d B_t$, which converges to $\pi$ as $t\to\infty$.
To define ALD, we introduce a reparameterized curve $(\pit_t=\pi_{t/T})_{t\in[0,T]}$ for some large time duration $T$ to be determined later, and consider the following SDE:
\begin{align}
    \d X_t&=\nabla\log\pit_t(X_t)\d t+\sqrt{2}\d B_t,~t\in[0,T];~X_0\sim\pit_0.
    \label{eq:jar_pr}
\end{align}

The following Jarzynski equality provides a connection between the work functional and the free energy difference, which naturally yields an estimator of normalizing constant.

\begin{tcolorbox}[colback=green!10, colframe=green!10, boxrule=0.5pt, arc=2pt, left=0mm, right=0mm, top=0mm, bottom=0mm]
\begin{theorem}[Jarzynski equality \citep{jarzynski1997nonequilibrium}]
    Let $\Pr$ be the path measure of \cref{eq:jar_pr}. Then the work functional $W$ and the free energy difference $\Delta F$ have the following relation:
    $$\E_{\Pr}\e^{-W}=\e^{-\Delta F},\quad\text{where}~~W(X):=\frac{1}{T}\int_0^T\partial_\theta V_\theta|_{\theta=\frac{t}{T}}(X_t)\d t~~\text{and}~~\Delta F:=-\log\frac{Z_1}{Z_0}.$$
    \label{thm:jar}
\end{theorem}
\end{tcolorbox}
\vspace{-1.5em}

Below, we sketch the proof from \citet[Prop. 3.3]{vargas2024transport}, which offers a crucial aspect for our analysis: the forward and backward SDEs. See \cref{app:pre_sde} for a detailed introduction.

\begin{sketchofproof}
    Let $\Pl$ be the path measure of the following backward SDE with time-reversed Brownian motion (BM) $(B^\gets_t)_{t\in[0,T]}$ (i.e., $(t\mapsto B^\gets_{T-t})_{t\in[0,T]}$ is a standard BM, see \cref{def:bwd_sde}):
    \begin{equation}
        \d X_t=-\nabla\log\pit_t(X_t)\d t+\sqrt{2}\d\Bl_t,~t\in[0,T];~X_T\sim\pit_T.
        \label{eq:jar_pl}        
    \end{equation}
    Intuitively, this is running the ALD backward in time from $T$ to $0$, targeting distribution $\pit_t$ at time $t$.
    Leveraging the Girsanov's theorem (\cref{lem:rn_path_measure}) and It\^o's formula, one can establish the following identity of the Radon-Nikod\'ym (RN) derivative between the forward and backward path measures, known as the \emph{Crooks fluctuation theorem} \citep{crooks1998nonequilibrium,crooks1999entropy}:
    \begin{equation}
        \log\de{\Pr}{\Pl}(X)=-\int_0^T(\partial_t\log\pit_t)(X_t)\d t=W(X)-\Delta F,\quad\text{a.s.}~X\sim\Pr,
        \label{eq:jar_rn}
    \end{equation}
    which implies JE by the identity $\E_{\Pr}{\de{\Pl}{\Pr}}=1$. Complete proof can be found in \cref{prf:thm:jar}.
\end{sketchofproof}

Under the \textit{ideal} setting where (i) $Z_0$ is known, (ii) the ALD in \cref{eq:jar_pr} can be simulated exactly, and (iii) the work functional $W(X)$ can be computed precisely, \cref{thm:jar} provides an unbiased estimator $\Zh:=Z_0\e^{-W(X)}$ for $Z=Z_0\e^{-\Delta F}$. Despite its dominant use \citep{chipot2007free,lelievre2010free}, the statistical efficiency of this estimator is not well understood. While it is known that the variance of $\Zh$ can be large, \textit{non-asymptotic} analyses quantifying its efficiency is lacking. We address this gap by establishing an upper bound on the time $T$ required for the ALD to satisfy the accuracy criterion \cref{eq:acc_whp} in the following theorem, whose proof is detailed in \cref{prf:thm:jar_complexity}.

\begin{tcolorbox}[colback=blue!10, colframe=blue!10, boxrule=0.5pt, arc=2pt, left=0mm, right=0mm, top=0mm, bottom=0mm]
\begin{theorem}
    Under \cref{assu:abs_cont}, it suffices to choose $T=\frac{32\cA}{\varepsilon^2}$ to obtain $\prob\ro{\abs{\frac{\Zh}{Z}-1}\le\varepsilon}\ge\frac{3}{4}$.
    \label{thm:jar_complexity}
\end{theorem}
\end{tcolorbox}

We first observe that our bound aligns with the decay rate of the variance of the work in \citet{mazonka1999exactly} (see also \citet[Chap. 4.1.4]{lelievre2010free}), which considered a special case $\pi_\theta=\n{\theta L,\frac{1}{K}}$. They showed that $W\sim\n{B_T,2B_T}$ with $B_T=\frac{L^2}{T}\ro{1-\frac{(1-\e^{-KT})}{KT}}$, and hence the \textit{normalized variance}
$\var_{\Pr}\frac{\Zh}{Z}=\e^{2B_T}-1$ is \textbf{asymptotically} $O\left(\frac{1}{T}\right)$ as $T\to\infty$. Our bound, under a different criterion \cref{eq:acc_whp}, is $O\left(\frac{1}{T}\right)$ for \textbf{all} $T>0$.

To illustrate the proof idea of \cref{thm:jar_complexity}, note that while the ALD \cref{eq:jar_pr} targets the distribution $\pit_t$ at time $t$, there is always a lag between $\pit_t$ and the actual law of $X_t$. Similarly, the same lag exists in the backward ALD \cref{eq:jar_pl}. This lag turns out to be the source of the error in the estimator $\Zh$. %

In practice, to alleviate the issue of high variance in estimating free energy differences, \citet{vaikuntanathan2008escorted} proposed adding a compensatory drift term $v_t(X_t)$ to the ALD \cref{eq:jar_pr}. Ideally, the optimal choice would eliminate the lag entirely, ensuring $X_t\sim\pit_t$ for all $t\in[0,T]$. Inspired by this, we compare the path measure of ALD $\Pr$ to the SDE having the perfect compensatory drift term, whose path measure $\P$ has marginal distribution $\pit_t$ at time $t$. To make possible the perfect match, it turns out that $v_t$ must satisfy the Fokker-Planck equation with $\pit_t$. The Girsanov's theorem (\cref{lem:rn_path_measure}) enables the computation of $\kl(\P\|\Pr)$ and $\kl(\P\|\Pl)$, which are related to $\|v_t\|_{L^2(\pit_t)}^2$. Finally, among all admissible drift terms $v_t$, \cref{lem:metric} suggests an optimal choice of $v^*_t$ to minimize this norm, thereby leading to the metric derivative $|\dot\pit|_t$ and the action $\cA$. This way avoids the explicit dependence of isoperimetric assumptions in our bound.

A similar connection between free energy and action integral was discovered in stochastic thermodynamics \citep{sekimoto2010stochastic,seifert2012stochastic}, one paradigm for non-equilibrium thermodynamics. By the second law of thermodynamics, the averaged dissipated work, defined as the averaged work minus the free energy difference, i.e., $\cW_\mathrm{diss}:=\cW-\Delta F:=\E_{\Pr}W-\Delta F$, is non-negative. When the underlying process is modeled by an overdamped LD, $\cW_\mathrm{diss}$ can be quantified by an action integral divided by the time duration \citep{aurell2011optimal,chen2020stochastic}. This follows from the observation that $\cW_\mathrm{diss}=\kl(\Pr\|\Pl)$ and then a similar argument to that above. This connection provides a finer description of the second law of thermodynamics \citep{aurell2012refined} over a finite time horizon.

Finally, we place \cref{thm:jar_complexity} within the broader theme of \textbf{sampling v.s. normalizing constant estimation} by comparing \cref{thm:jar_complexity} with the complexity of non-log-concave sampling. \citet{guo2025provable} proved that under the same assumptions, the ALD \cref{eq:jar_pr} can draw a sample within $\varepsilon^2$-error in $\kl(\pi\|\cdot)$ with the same order of time $T\asymp\frac{\cA}{\varepsilon^2}$. While the classical work \citet{jerrum1986random} proved the existence of a polynomial-time algorithm for sampling and a polynomial-time algorithm for estimating normalizing constant imply each other in the \textit{discrete} settings, we establish a similar quantitative connection between the complexities of these two tasks in the \textit{continuous} settings \textit{without} log-concavity, opening a new avenue of research on understanding their relationship. Though reaching similar results, the proof strategies are different: \citet{guo2025provable} is a direct application of Girsanov's theorem between $\Pr$ and $\P$, while \cref{thm:jar_complexity} involves more complicated backward SDE arguments.

\vspace{-1em}
\section{Analysis of the Annealed Importance Sampling}
\label{sec:ais}
In practice, it is not feasible to simulate the ALD precisely, nor is it possible to evaluate the exact value of the work $W(X)$. Therefore, discretization and approximation are required. To address this, we first outline the following annealed importance sampling (AIS) equality akin to JE. 

\begin{tcolorbox}[colback=green!10, colframe=green!10, boxrule=0.5pt, arc=2pt, left=0mm, right=0mm, top=0mm, bottom=0mm]
\begin{theorem}[Annealed importance sampling equality \citep{neal2001annealed}] 
    Suppose we have probability distributions $\pi_\l=\frac{f_\l}{Z_\l}$, $\l\in\sqd{0,M}$ and transition kernels $F_\l(x,\cdot)$, $\l\in\sqd{1,M}$, and assume that each $\pi_\l$ is an invariant distribution of $F_\l$, $\l\in\sqd{1,M}$. Define the path measure
    \begin{equation}
        \Pr(x_{0:M})=\pi_0(x_0)\prod_{\l=1}^MF_\l(x_{\l-1},x_\l).
        \label{eq:ais_pr}
    \end{equation}
    Then the same relation between the work function $W$ and free energy difference $\Delta F$ holds:
    $$\E_{\Pr}\e^{-W}=\e^{-\Delta F},\quad\text{where}~~W(x_{0:M}):=\log\prod_{\l=0}^{M-1}\frac{f_\l(x_\l)}{f_{\l+1}(x_\l)}~~\text{and}~~\Delta F:=-\log\frac{Z_M}{Z_0}.$$
    \label{thm:ais}
\end{theorem}
\end{tcolorbox}
\vspace{-1em}
\begin{proof}
    Since $\pi_\l$ is invariant for $F_\l$, the following backward transition kernels are well-defined:
    $$B_\l(x,x')=\frac{\pi_\l(x')}{\pi_\l(x)}F_\l(x',x),~\l\in\sqd{1,M}.$$
    By applying these backward transition kernels sequentially, we define the backward path measure
    \begin{equation}
        \Pl(x_{0:M})=\pi_M(x_M)\prod_{\l=1}^MB_\l(x_\l,x_{\l-1}).
        \label{eq:ais_pl}
    \end{equation}
    It can be easily demonstrated, as in \cref{eq:jar_rn}, that $\log\de{\Pr}{\Pl}(x_{0:M})=W(x_{0:M})-\Delta F$. Consequently, the identity $\E_{\Pr}{\de{\Pl}{\Pr}}=1$ implies the desired equality.
\end{proof}
While the frameworks of JE and AIS hold for \textit{general} curves of interpolation, for the study of non-asymptotic complexity guarantees, we focus on a widely used curve in theoretical analysis \citep{brosse2018normalizing,ge2020estimating}, which we refer to as the \textbf{geometric interpolation}:\footnote{\cref{eq:pi_theta} differs slightly from a widely used curve in applications \citep{gelman1998simulating,neal2001annealed}: $\pi_\theta\propto\pi^{1-\lambda(\theta)}\phi^{\lambda(\theta)}$, where $\phi$ is a prior distribution (typically Gaussian). We refer to both as \emph{geometric interpolation}.} %
\begin{equation}
    \pi_\theta=\frac{1}{Z_\theta}f_\theta=\frac{1}{Z_\theta}\exp\ro{-V-\frac{\lambda(\theta)}{2}\|\cdot\|^2},~\theta\in[0,1],
    \label{eq:pi_theta}
\end{equation}
where $\lambda(\cdot)$ is a decreasing function with $\lambda(0)=2\beta$ and $\lambda(1)=0$, referred to as the \emph{annealing schedule}. With this choice of $\lambda(0)$, by \cref{assu:pi}, the potential of $\pi_0$ is $\beta$-strongly-convex and $3\beta$-smooth, making sampling and normalizing constant estimation relatively easy. To estimate $Z_0$, we use the thermodynamic integration (TI) algorithm from \citet{ge2020estimating}, which requires $\Ot\ro{\frac{d^{4/3}}{\varepsilon^2}}$ gradient oracle calls. In a nutshell, TI is an equilibrium method that constructs a series of intermediate distributions and estimates adjacent normalizing constant ratios via expectation under these intermediate distributions, realized through MCMC sampling from each intermediate distribution. As TI is peripheral to our primary focus, we defer its full description, including the choice of hyperparameters and complexity bound, to \cref{app:rel_work_ti}.

Given \cref{eq:pi_theta}, we introduce time points $0=\theta_0<\theta_1<...<\theta_M=1$ to be specified later, and adopt the framework outlined in \cref{thm:ais} by setting $\pi_\l=\frac{f_\l}{Z_\l}$ to correspond to $\pi_{\theta_\l}=\frac{f_{\theta_\l}}{Z_{\theta_\l}}$, albeit with a slight abuse of notation. To estimate the normalizing constant, we need to sample from the forward path measure $\Pr$ and compute the work function along the trajectory. Since $\pi_{\theta_\l}$ must be an invariant distribution of the transition kernel $F_\l$ in $\Pr$, we define $F_\l$ via running LD targeting $\pi_{\theta_\l}$ for a short time $T_\l$, i.e., $F_\l(x,\cdot)$ is given by the law of $X_{T_\l}$ in the following SDE initialized at $X_0=x$:
\begin{equation}
    \d X_t=\nabla\log\pi_{\theta_\l}(X_t)\d t+\sqrt{2}\d B_t,~t\in[0,T_\l].
    \label{eq:ais_ker_f}
\end{equation}
In this setting, AIS can be interpreted as a discretization of JE \citep[Rmk. 4.5]{lelievre2010free}. However, in practice, exact samples from $\pi_0$ are often unavailable, and the simulation of LD cannot be performed perfectly.\footnote{Recall that we need $F_\l$ to have invariant distribution $\pi_\l$ in \cref{thm:ais}.} To capture these considerations, we define the following path measure:
\begin{equation}
    \Phr(x_{0:M})=\pih_0(x_0)\prod_{\l=1}^{M}\Fh_\l(x_{\l-1},x_\l),
    \label{eq:ais_phr}
\end{equation}
where $\pih_0$ is the law of an approximate sample from $\pi_0$, and the transition kernel $\Fh_\l$
is a discretization of the LD in $F_\l$, defined as running \emph{one step} of \textbf{annealed Langevin Monte Carlo (ALMC)} using the exponential integrator discretization scheme \citep{zhang2023fast,zhang2023gddim,zhang2023improved} with step size $T_\l$. Formally, $\Fh_\l(x,\cdot)$ is the law of $X_{T_\l}$ in the following SDE initialized at $X_0=x$:
\begin{equation}
    \d X_t=-\ro{\nabla V(X_0)+\lambda\ro{\theta_{\l-1}+\frac{t}{T_\l}(\theta_\l-\theta_{\l-1})}X_t}\d t+\sqrt{2}\d B_t,~t\in[0,T_\l].
    \label{eq:ais_ker_fh}
\end{equation}
Here, instead of simply setting $\Fh_\l$ as one step of LMC targeting $\pi_{\theta_\l}$, the dynamically changing $\lambda(\cdot)$ helps reduce the discretization error, as will be shown in our proof. Furthermore, with a sufficiently small step size, the overall discretization error can also be minimized, motivating us to apply just one update step in each transition kernel.

We refer readers to \cref{alg:ais} in \cref{app:algs} for a summary of the detailed implementation of our proposed AIS algorithm, including the TI procedure and the update rules in \cref{eq:ais_ker_fh}. The following theorem delineates the oracle complexity of the algorithm required to obtain an estimate $\Zh$ meeting the desired accuracy criterion \cref{eq:acc_whp}, whose detailed proof can be located in \cref{prf:thm:ais_complexity}. The required values of hyperparameters $M$, and $T_\l$ can be found at the end of the proof.

\begin{tcolorbox}[colback=blue!10, colframe=blue!10, boxrule=0.5pt, arc=2pt, left=0mm, right=0mm, top=0mm, bottom=0mm]
\begin{theorem}
    Let $\Zh$ be the AIS estimator described as in \cref{alg:ais}, i.e., $\Zh:=\Zh_0\e^{-W(x_{0:M})}$ where $\Zh_0$ is estimated by TI and $x_{0:M}\sim\Phr$. %
    Under \cref{assu:pi,assu:abs_cont}, consider the annealing schedule $\lambda(\theta)=2\beta(1-\theta)^r$ for some $1\le r\lesssim1$. Use $\cA_r$ to denote the action of $(\pi_\theta)_{\theta\in[0,1]}$ to emphasize the dependence on $r$. Then, the oracle complexity for obtaining an estimate $\Zh$ that satisfies \cref{eq:acc_whp} is
    \begin{equation}
        \Ot\ro{
        \frac{d^\frac{4}{3}}{\varepsilon^2}
        \vee
        \frac{m\beta\cA_r^\frac{1}{2}}{\varepsilon^2}
        \vee
        \frac{d\beta^2\cA_r^2}{\varepsilon^4}
        }.
        \label{eq:ais_complexity}
    \end{equation}
            
    \label{thm:ais_complexity}
\end{theorem}
\end{tcolorbox}

\begin{wrapfigure}{r}{0.5\linewidth}
\centering

\scalebox{0.6}{
\begin{tikzpicture}[x=0.75pt,y=0.75pt,yscale=-0.8,xscale=0.8,every node/.style={scale=1}]
\draw [color={rgb, 255:red, 0; green, 0; blue, 0 }  ,draw opacity=1 ][line width=2.25]    (82,179.33) .. controls (190,160.33) and (379,158.67) .. (471,177.67) ;
\draw [shift={(471,177.67)}, rotate = 11.67] [color={rgb, 255:red, 0; green, 0; blue, 0 }  ,draw opacity=1 ][fill={rgb, 255:red, 0; green, 0; blue, 0 }  ,fill opacity=1 ][line width=2.25]      (0, 0) circle [x radius= 5.36, y radius= 5.36]   ;
\draw [shift={(82,179.33)}, rotate = 350.02] [color={rgb, 255:red, 0; green, 0; blue, 0 }  ,draw opacity=1 ][fill={rgb, 255:red, 0; green, 0; blue, 0 }  ,fill opacity=1 ][line width=2.25]      (0, 0) circle [x radius= 5.36, y radius= 5.36]   ;
\draw [color={rgb, 255:red, 255; green, 0; blue, 0 }  ,draw opacity=1 ][line width=1.5]  [dash pattern={on 5.63pt off 4.5pt}]  (82,179.33) -- (142.44,142.24) ;
\draw [shift={(145,140.67)}, rotate = 148.46] [color={rgb, 255:red, 255; green, 0; blue, 0 }  ,draw opacity=1 ][line width=1.5]    (14.21,-4.28) .. controls (9.04,-1.82) and (4.3,-0.39) .. (0,0) .. controls (4.3,0.39) and (9.04,1.82) .. (14.21,4.28)   ;
\draw [color={rgb, 255:red, 255; green, 0; blue, 0 }  ,draw opacity=1 ][line width=1.5]  [dash pattern={on 5.63pt off 4.5pt}]  (145,140.67) -- (212.73,112.16) ;
\draw [shift={(215.5,111)}, rotate = 157.18] [color={rgb, 255:red, 255; green, 0; blue, 0 }  ,draw opacity=1 ][line width=1.5]    (14.21,-4.28) .. controls (9.04,-1.82) and (4.3,-0.39) .. (0,0) .. controls (4.3,0.39) and (9.04,1.82) .. (14.21,4.28)   ;
\draw [shift={(145,140.67)}, rotate = 337.18] [color={rgb, 255:red, 255; green, 0; blue, 0 }  ,draw opacity=1 ][fill={rgb, 255:red, 255; green, 0; blue, 0 }  ,fill opacity=1 ][line width=1.5]      (0, 0) circle [x radius= 4.36, y radius= 4.36]   ;
\draw [color={rgb, 255:red, 255; green, 0; blue, 0 }  ,draw opacity=1 ][line width=1.5]  [dash pattern={on 5.63pt off 4.5pt}]  (215.5,111) -- (296.99,83.78) ;
\draw [shift={(299.83,82.83)}, rotate = 161.53] [color={rgb, 255:red, 255; green, 0; blue, 0 }  ,draw opacity=1 ][line width=1.5]    (14.21,-4.28) .. controls (9.04,-1.82) and (4.3,-0.39) .. (0,0) .. controls (4.3,0.39) and (9.04,1.82) .. (14.21,4.28)   ;
\draw [shift={(215.5,111)}, rotate = 341.53] [color={rgb, 255:red, 255; green, 0; blue, 0 }  ,draw opacity=1 ][fill={rgb, 255:red, 255; green, 0; blue, 0 }  ,fill opacity=1 ][line width=1.5]      (0, 0) circle [x radius= 4.36, y radius= 4.36]   ;
\draw [color={rgb, 255:red, 255; green, 0; blue, 0 }  ,draw opacity=1 ][line width=1.5]  [dash pattern={on 5.63pt off 4.5pt}]  (299.83,82.83) -- (381.7,70.61) ;
\draw [shift={(384.67,70.17)}, rotate = 171.51] [color={rgb, 255:red, 255; green, 0; blue, 0 }  ,draw opacity=1 ][line width=1.5]    (14.21,-4.28) .. controls (9.04,-1.82) and (4.3,-0.39) .. (0,0) .. controls (4.3,0.39) and (9.04,1.82) .. (14.21,4.28)   ;
\draw [shift={(299.83,82.83)}, rotate = 351.51] [color={rgb, 255:red, 255; green, 0; blue, 0 }  ,draw opacity=1 ][fill={rgb, 255:red, 255; green, 0; blue, 0 }  ,fill opacity=1 ][line width=1.5]      (0, 0) circle [x radius= 4.36, y radius= 4.36]   ;
\draw [color={rgb, 255:red, 255; green, 0; blue, 0 }  ,draw opacity=1 ][line width=1.5]  [dash pattern={on 5.63pt off 4.5pt}]  (384.67,70.17) -- (462.03,59.57) ;
\draw [shift={(465,59.17)}, rotate = 172.2] [color={rgb, 255:red, 255; green, 0; blue, 0 }  ,draw opacity=1 ][line width=1.5]    (14.21,-4.28) .. controls (9.04,-1.82) and (4.3,-0.39) .. (0,0) .. controls (4.3,0.39) and (9.04,1.82) .. (14.21,4.28)   ;
\draw [shift={(384.67,70.17)}, rotate = 352.2] [color={rgb, 255:red, 255; green, 0; blue, 0 }  ,draw opacity=1 ][fill={rgb, 255:red, 255; green, 0; blue, 0 }  ,fill opacity=1 ][line width=1.5]      (0, 0) circle [x radius= 4.36, y radius= 4.36]   ;
\draw [color={rgb, 255:red, 0; green, 0; blue, 0 }  ,draw opacity=1 ][line width=1.5]    (82,179.33) .. controls (186.94,133.79) and (334.02,91.19) .. (459.21,89.38) ;
\draw [shift={(463,89.33)}, rotate = 179.55] [fill={rgb, 255:red, 0; green, 0; blue, 0 }  ,fill opacity=1 ][line width=0.08]  [draw opacity=0] (13.4,-6.43) -- (0,0) -- (13.4,6.44) -- (8.9,0) -- cycle    ;
\draw [shift={(82,179.33)}, rotate = 336.54] [color={rgb, 255:red, 0; green, 0; blue, 0 }  ,draw opacity=1 ][fill={rgb, 255:red, 0; green, 0; blue, 0 }  ,fill opacity=1 ][line width=1.5]      (0, 0) circle [x radius= 4.36, y radius= 4.36]   ;
\draw [color={rgb, 255:red, 0; green, 0; blue, 0 }  ,draw opacity=1 ][line width=1.5]    (139.43,319.9) .. controls (215.61,237.81) and (366.59,177.34) .. (471,177.67) ;
\draw [shift={(471,177.67)}, rotate = 0.18] [color={rgb, 255:red, 0; green, 0; blue, 0 }  ,draw opacity=1 ][fill={rgb, 255:red, 0; green, 0; blue, 0 }  ,fill opacity=1 ][line width=1.5]      (0, 0) circle [x radius= 4.36, y radius= 4.36]   ;
\draw [shift={(136,323.67)}, rotate = 311.76] [fill={rgb, 255:red, 0; green, 0; blue, 0 }  ,fill opacity=1 ][line width=0.08]  [draw opacity=0] (13.4,-6.43) -- (0,0) -- (13.4,6.44) -- (8.9,0) -- cycle    ;
\draw [color={rgb, 255:red, 255; green, 0; blue, 0 }  ,draw opacity=1 ][line width=1.5]  [dash pattern={on 5.63pt off 4.5pt}]  (76,166.33) -- (136.44,129.24) ;
\draw [shift={(139,127.67)}, rotate = 148.46] [color={rgb, 255:red, 255; green, 0; blue, 0 }  ,draw opacity=1 ][line width=1.5]    (14.21,-4.28) .. controls (9.04,-1.82) and (4.3,-0.39) .. (0,0) .. controls (4.3,0.39) and (9.04,1.82) .. (14.21,4.28)   ;
\draw [shift={(76,166.33)}, rotate = 328.46] [color={rgb, 255:red, 255; green, 0; blue, 0 }  ,draw opacity=1 ][fill={rgb, 255:red, 255; green, 0; blue, 0 }  ,fill opacity=1 ][line width=1.5]      (0, 0) circle [x radius= 4.36, y radius= 4.36]   ;
\draw [color={rgb, 255:red, 255; green, 0; blue, 0 }  ,draw opacity=1 ][line width=1.5]  [dash pattern={on 5.63pt off 4.5pt}]  (139,127.67) -- (206.73,99.16) ;
\draw [shift={(209.5,98)}, rotate = 157.18] [color={rgb, 255:red, 255; green, 0; blue, 0 }  ,draw opacity=1 ][line width=1.5]    (14.21,-4.28) .. controls (9.04,-1.82) and (4.3,-0.39) .. (0,0) .. controls (4.3,0.39) and (9.04,1.82) .. (14.21,4.28)   ;
\draw [shift={(139,127.67)}, rotate = 337.18] [color={rgb, 255:red, 255; green, 0; blue, 0 }  ,draw opacity=1 ][fill={rgb, 255:red, 255; green, 0; blue, 0 }  ,fill opacity=1 ][line width=1.5]      (0, 0) circle [x radius= 4.36, y radius= 4.36]   ;
\draw [color={rgb, 255:red, 255; green, 0; blue, 0 }  ,draw opacity=1 ][line width=1.5]  [dash pattern={on 5.63pt off 4.5pt}]  (209.5,98) -- (290.99,70.78) ;
\draw [shift={(293.83,69.83)}, rotate = 161.53] [color={rgb, 255:red, 255; green, 0; blue, 0 }  ,draw opacity=1 ][line width=1.5]    (14.21,-4.28) .. controls (9.04,-1.82) and (4.3,-0.39) .. (0,0) .. controls (4.3,0.39) and (9.04,1.82) .. (14.21,4.28)   ;
\draw [shift={(209.5,98)}, rotate = 341.53] [color={rgb, 255:red, 255; green, 0; blue, 0 }  ,draw opacity=1 ][fill={rgb, 255:red, 255; green, 0; blue, 0 }  ,fill opacity=1 ][line width=1.5]      (0, 0) circle [x radius= 4.36, y radius= 4.36]   ;
\draw [color={rgb, 255:red, 255; green, 0; blue, 0 }  ,draw opacity=1 ][line width=1.5]  [dash pattern={on 5.63pt off 4.5pt}]  (293.83,69.83) -- (375.7,57.61) ;
\draw [shift={(378.67,57.17)}, rotate = 171.51] [color={rgb, 255:red, 255; green, 0; blue, 0 }  ,draw opacity=1 ][line width=1.5]    (14.21,-4.28) .. controls (9.04,-1.82) and (4.3,-0.39) .. (0,0) .. controls (4.3,0.39) and (9.04,1.82) .. (14.21,4.28)   ;
\draw [shift={(293.83,69.83)}, rotate = 351.51] [color={rgb, 255:red, 255; green, 0; blue, 0 }  ,draw opacity=1 ][fill={rgb, 255:red, 255; green, 0; blue, 0 }  ,fill opacity=1 ][line width=1.5]      (0, 0) circle [x radius= 4.36, y radius= 4.36]   ;
\draw [color={rgb, 255:red, 255; green, 0; blue, 0 }  ,draw opacity=1 ][line width=1.5]  [dash pattern={on 5.63pt off 4.5pt}]  (378.67,57.17) -- (456.03,46.57) ;
\draw [shift={(459,46.17)}, rotate = 172.2] [color={rgb, 255:red, 255; green, 0; blue, 0 }  ,draw opacity=1 ][line width=1.5]    (14.21,-4.28) .. controls (9.04,-1.82) and (4.3,-0.39) .. (0,0) .. controls (4.3,0.39) and (9.04,1.82) .. (14.21,4.28)   ;
\draw [shift={(378.67,57.17)}, rotate = 352.2] [color={rgb, 255:red, 255; green, 0; blue, 0 }  ,draw opacity=1 ][fill={rgb, 255:red, 255; green, 0; blue, 0 }  ,fill opacity=1 ][line width=1.5]      (0, 0) circle [x radius= 4.36, y radius= 4.36]   ;
\draw [color={rgb, 255:red, 255; green, 0; blue, 0 }  ,draw opacity=1 ][line width=1.5]  [dash pattern={on 5.63pt off 4.5pt}]  (151,170.67) -- (218.73,142.16) ;
\draw [shift={(221.5,141)}, rotate = 157.18] [color={rgb, 255:red, 255; green, 0; blue, 0 }  ,draw opacity=1 ][line width=1.5]    (14.21,-4.28) .. controls (9.04,-1.82) and (4.3,-0.39) .. (0,0) .. controls (4.3,0.39) and (9.04,1.82) .. (14.21,4.28)   ;
\draw [shift={(151,170.67)}, rotate = 337.18] [color={rgb, 255:red, 255; green, 0; blue, 0 }  ,draw opacity=1 ][fill={rgb, 255:red, 255; green, 0; blue, 0 }  ,fill opacity=1 ][line width=1.5]      (0, 0) circle [x radius= 4.36, y radius= 4.36]   ;
\draw [color={rgb, 255:red, 255; green, 0; blue, 0 }  ,draw opacity=1 ][line width=1.5]  [dash pattern={on 5.63pt off 4.5pt}]  (222.5,166) -- (303.99,138.78) ;
\draw [shift={(306.83,137.83)}, rotate = 161.53] [color={rgb, 255:red, 255; green, 0; blue, 0 }  ,draw opacity=1 ][line width=1.5]    (14.21,-4.28) .. controls (9.04,-1.82) and (4.3,-0.39) .. (0,0) .. controls (4.3,0.39) and (9.04,1.82) .. (14.21,4.28)   ;
\draw [shift={(222.5,166)}, rotate = 341.53] [color={rgb, 255:red, 255; green, 0; blue, 0 }  ,draw opacity=1 ][fill={rgb, 255:red, 255; green, 0; blue, 0 }  ,fill opacity=1 ][line width=1.5]      (0, 0) circle [x radius= 4.36, y radius= 4.36]   ;
\draw [color={rgb, 255:red, 255; green, 0; blue, 0 }  ,draw opacity=1 ][line width=1.5]  [dash pattern={on 5.63pt off 4.5pt}]  (305.83,163.83) -- (387.7,151.61) ;
\draw [shift={(390.67,151.17)}, rotate = 171.51] [color={rgb, 255:red, 255; green, 0; blue, 0 }  ,draw opacity=1 ][line width=1.5]    (14.21,-4.28) .. controls (9.04,-1.82) and (4.3,-0.39) .. (0,0) .. controls (4.3,0.39) and (9.04,1.82) .. (14.21,4.28)   ;
\draw [shift={(305.83,163.83)}, rotate = 351.51] [color={rgb, 255:red, 255; green, 0; blue, 0 }  ,draw opacity=1 ][fill={rgb, 255:red, 255; green, 0; blue, 0 }  ,fill opacity=1 ][line width=1.5]      (0, 0) circle [x radius= 4.36, y radius= 4.36]   ;
\draw [color={rgb, 255:red, 255; green, 0; blue, 0 }  ,draw opacity=1 ][line width=1.5]  [dash pattern={on 5.63pt off 4.5pt}]  (390.67,167.17) -- (468.03,156.57) ;
\draw [shift={(471,156.17)}, rotate = 172.2] [color={rgb, 255:red, 255; green, 0; blue, 0 }  ,draw opacity=1 ][line width=1.5]    (14.21,-4.28) .. controls (9.04,-1.82) and (4.3,-0.39) .. (0,0) .. controls (4.3,0.39) and (9.04,1.82) .. (14.21,4.28)   ;
\draw [shift={(390.67,167.17)}, rotate = 352.2] [color={rgb, 255:red, 255; green, 0; blue, 0 }  ,draw opacity=1 ][fill={rgb, 255:red, 255; green, 0; blue, 0 }  ,fill opacity=1 ][line width=1.5]      (0, 0) circle [x radius= 4.36, y radius= 4.36]   ;
\draw [color={rgb, 255:red, 189; green, 16; blue, 224 }  ,draw opacity=1 ][line width=1.5]    (318.87,184.96) .. controls (345.9,173.22) and (366.17,168.62) .. (391,167.67) ;
\draw [shift={(391,167.67)}, rotate = 357.8] [color={rgb, 255:red, 189; green, 16; blue, 224 }  ,draw opacity=1 ][fill={rgb, 255:red, 189; green, 16; blue, 224 }  ,fill opacity=1 ][line width=1.5]      (0, 0) circle [x radius= 4.36, y radius= 4.36]   ;
\draw [shift={(315,186.67)}, rotate = 335.85] [fill={rgb, 255:red, 189; green, 16; blue, 224 }  ,fill opacity=1 ][line width=0.08]  [draw opacity=0] (13.4,-6.43) -- (0,0) -- (13.4,6.44) -- (8.9,0) -- cycle    ;
\draw [color={rgb, 255:red, 189; green, 16; blue, 224 }  ,draw opacity=1 ][line width=1.5]    (239.63,190.52) .. controls (260.43,178.27) and (281.43,166.56) .. (306,164.67) ;
\draw [shift={(306,164.67)}, rotate = 355.6] [color={rgb, 255:red, 189; green, 16; blue, 224 }  ,draw opacity=1 ][fill={rgb, 255:red, 189; green, 16; blue, 224 }  ,fill opacity=1 ][line width=1.5]      (0, 0) circle [x radius= 4.36, y radius= 4.36]   ;
\draw [shift={(236,192.67)}, rotate = 329.42] [fill={rgb, 255:red, 189; green, 16; blue, 224 }  ,fill opacity=1 ][line width=0.08]  [draw opacity=0] (13.4,-6.43) -- (0,0) -- (13.4,6.44) -- (8.9,0) -- cycle    ;
\draw [color={rgb, 255:red, 189; green, 16; blue, 224 }  ,draw opacity=1 ][line width=1.5]    (164.81,199.79) .. controls (184.46,180.04) and (200.08,173.35) .. (223,166.67) ;
\draw [shift={(223,166.67)}, rotate = 343.74] [color={rgb, 255:red, 189; green, 16; blue, 224 }  ,draw opacity=1 ][fill={rgb, 255:red, 189; green, 16; blue, 224 }  ,fill opacity=1 ][line width=1.5]      (0, 0) circle [x radius= 4.36, y radius= 4.36]   ;
\draw [shift={(162,202.67)}, rotate = 313.67] [fill={rgb, 255:red, 189; green, 16; blue, 224 }  ,fill opacity=1 ][line width=0.08]  [draw opacity=0] (13.4,-6.43) -- (0,0) -- (13.4,6.44) -- (8.9,0) -- cycle    ;
\draw [color={rgb, 255:red, 189; green, 16; blue, 224 }  ,draw opacity=1 ][line width=1.5]    (94.32,216.97) .. controls (110.89,191.42) and (130.95,178.31) .. (151,170.67) ;
\draw [shift={(151,170.67)}, rotate = 339.15] [color={rgb, 255:red, 189; green, 16; blue, 224 }  ,draw opacity=1 ][fill={rgb, 255:red, 189; green, 16; blue, 224 }  ,fill opacity=1 ][line width=1.5]      (0, 0) circle [x radius= 4.36, y radius= 4.36]   ;
\draw [shift={(92,220.67)}, rotate = 301.26] [fill={rgb, 255:red, 189; green, 16; blue, 224 }  ,fill opacity=1 ][line width=0.08]  [draw opacity=0] (13.4,-6.43) -- (0,0) -- (13.4,6.44) -- (8.9,0) -- cycle    ;
\draw [color={rgb, 255:red, 0; green, 0; blue, 255 }  ,draw opacity=1 ][line width=0.75]    (146.53,145.62) -- (150.47,167.71) ;
\draw [shift={(151,170.67)}, rotate = 259.88] [fill={rgb, 255:red, 0; green, 0; blue, 255 }  ,fill opacity=1 ][line width=0.08]  [draw opacity=0] (8.93,-4.29) -- (0,0) -- (8.93,4.29) -- cycle    ;
\draw [shift={(146,142.67)}, rotate = 79.88] [fill={rgb, 255:red, 0; green, 0; blue, 255 }  ,fill opacity=1 ][line width=0.08]  [draw opacity=0] (8.93,-4.29) -- (0,0) -- (8.93,4.29) -- cycle    ;
\draw [color={rgb, 255:red, 0; green, 0; blue, 255 }  ,draw opacity=1 ][line width=0.75]    (220.34,144.65) -- (222.66,164.69) ;
\draw [shift={(223,167.67)}, rotate = 263.42] [fill={rgb, 255:red, 0; green, 0; blue, 255 }  ,fill opacity=1 ][line width=0.08]  [draw opacity=0] (8.93,-4.29) -- (0,0) -- (8.93,4.29) -- cycle    ;
\draw [shift={(220,141.67)}, rotate = 83.42] [fill={rgb, 255:red, 0; green, 0; blue, 255 }  ,fill opacity=1 ][line width=0.08]  [draw opacity=0] (8.93,-4.29) -- (0,0) -- (8.93,4.29) -- cycle    ;
\draw [color={rgb, 255:red, 0; green, 0; blue, 255 }  ,draw opacity=1 ][line width=0.75]    (305.23,142.66) -- (306.77,162.68) ;
\draw [shift={(307,165.67)}, rotate = 265.6] [fill={rgb, 255:red, 0; green, 0; blue, 255 }  ,fill opacity=1 ][line width=0.08]  [draw opacity=0] (8.93,-4.29) -- (0,0) -- (8.93,4.29) -- cycle    ;
\draw [shift={(305,139.67)}, rotate = 85.6] [fill={rgb, 255:red, 0; green, 0; blue, 255 }  ,fill opacity=1 ][line width=0.08]  [draw opacity=0] (8.93,-4.29) -- (0,0) -- (8.93,4.29) -- cycle    ;
\draw [color={rgb, 255:red, 0; green, 0; blue, 255 }  ,draw opacity=1 ][line width=0.75]    (390.31,154.65) -- (391.69,167.68) ;
\draw [shift={(392,170.67)}, rotate = 263.99] [fill={rgb, 255:red, 0; green, 0; blue, 255 }  ,fill opacity=1 ][line width=0.08]  [draw opacity=0] (8.93,-4.29) -- (0,0) -- (8.93,4.29) -- cycle    ;
\draw [shift={(390,151.67)}, rotate = 83.99] [fill={rgb, 255:red, 0; green, 0; blue, 255 }  ,fill opacity=1 ][line width=0.08]  [draw opacity=0] (8.93,-4.29) -- (0,0) -- (8.93,4.29) -- cycle    ;
\draw [color={rgb, 255:red, 0; green, 0; blue, 255 }  ,draw opacity=1 ][line width=0.75]    (470.13,159.66) -- (470.87,176.67) ;
\draw [shift={(471,179.67)}, rotate = 267.51] [fill={rgb, 255:red, 0; green, 0; blue, 255 }  ,fill opacity=1 ][line width=0.08]  [draw opacity=0] (8.93,-4.29) -- (0,0) -- (8.93,4.29) -- cycle    ;
\draw [shift={(470,156.67)}, rotate = 87.51] [fill={rgb, 255:red, 0; green, 0; blue, 255 }  ,fill opacity=1 ][line width=0.08]  [draw opacity=0] (8.93,-4.29) -- (0,0) -- (8.93,4.29) -- cycle    ;
\draw [color={rgb, 255:red, 0; green, 255; blue, 0 }  ,draw opacity=1 ][line width=0.75]    (82.82,182.55) -- (93.18,218.78) ;
\draw [shift={(94,221.67)}, rotate = 254.05] [fill={rgb, 255:red, 0; green, 255; blue, 0 }  ,fill opacity=1 ][line width=0.08]  [draw opacity=0] (8.93,-4.29) -- (0,0) -- (8.93,4.29) -- cycle    ;
\draw [shift={(82,179.67)}, rotate = 74.05] [fill={rgb, 255:red, 0; green, 255; blue, 0 }  ,fill opacity=1 ][line width=0.08]  [draw opacity=0] (8.93,-4.29) -- (0,0) -- (8.93,4.29) -- cycle    ;
\draw [color={rgb, 255:red, 0; green, 255; blue, 0 }  ,draw opacity=1 ][line width=0.75]    (152.05,173.48) -- (161.95,199.86) ;
\draw [shift={(163,202.67)}, rotate = 249.44] [fill={rgb, 255:red, 0; green, 255; blue, 0 }  ,fill opacity=1 ][line width=0.08]  [draw opacity=0] (8.93,-4.29) -- (0,0) -- (8.93,4.29) -- cycle    ;
\draw [shift={(151,170.67)}, rotate = 69.44] [fill={rgb, 255:red, 0; green, 255; blue, 0 }  ,fill opacity=1 ][line width=0.08]  [draw opacity=0] (8.93,-4.29) -- (0,0) -- (8.93,4.29) -- cycle    ;
\draw [color={rgb, 255:red, 0; green, 255; blue, 0 }  ,draw opacity=1 ][line width=0.75]    (224.26,168.39) -- (234.74,190.95) ;
\draw [shift={(236,193.67)}, rotate = 245.1] [fill={rgb, 255:red, 0; green, 255; blue, 0 }  ,fill opacity=1 ][line width=0.08]  [draw opacity=0] (8.93,-4.29) -- (0,0) -- (8.93,4.29) -- cycle    ;
\draw [shift={(223,165.67)}, rotate = 65.1] [fill={rgb, 255:red, 0; green, 255; blue, 0 }  ,fill opacity=1 ][line width=0.08]  [draw opacity=0] (8.93,-4.29) -- (0,0) -- (8.93,4.29) -- cycle    ;
\draw [color={rgb, 255:red, 0; green, 255; blue, 0 }  ,draw opacity=1 ][line width=0.75]    (307.29,168.38) -- (314.71,183.96) ;
\draw [shift={(316,186.67)}, rotate = 244.54] [fill={rgb, 255:red, 0; green, 255; blue, 0 }  ,fill opacity=1 ][line width=0.08]  [draw opacity=0] (8.93,-4.29) -- (0,0) -- (8.93,4.29) -- cycle    ;
\draw [shift={(306,165.67)}, rotate = 64.54] [fill={rgb, 255:red, 0; green, 255; blue, 0 }  ,fill opacity=1 ][line width=0.08]  [draw opacity=0] (8.93,-4.29) -- (0,0) -- (8.93,4.29) -- cycle    ;
\draw [color={rgb, 255:red, 189; green, 16; blue, 224 }  ,draw opacity=1 ][line width=1.5]    (399.19,186.55) .. controls (423.58,180.35) and (453.99,177.67) .. (471,177.67) ;
\draw [shift={(471,177.67)}, rotate = 0] [color={rgb, 255:red, 189; green, 16; blue, 224 }  ,draw opacity=1 ][fill={rgb, 255:red, 189; green, 16; blue, 224 }  ,fill opacity=1 ][line width=1.5]      (0, 0) circle [x radius= 4.36, y radius= 4.36]   ;
\draw [shift={(395,187.67)}, rotate = 344.36] [fill={rgb, 255:red, 189; green, 16; blue, 224 }  ,fill opacity=1 ][line width=0.08]  [draw opacity=0] (13.4,-6.43) -- (0,0) -- (13.4,6.44) -- (8.9,0) -- cycle    ;
\draw [color={rgb, 255:red, 0; green, 255; blue, 0 }  ,draw opacity=1 ][line width=0.75]    (392.11,171.45) -- (397.89,185.88) ;
\draw [shift={(399,188.67)}, rotate = 248.2] [fill={rgb, 255:red, 0; green, 255; blue, 0 }  ,fill opacity=1 ][line width=0.08]  [draw opacity=0] (8.93,-4.29) -- (0,0) -- (8.93,4.29) -- cycle    ;
\draw [shift={(391,168.67)}, rotate = 68.2] [fill={rgb, 255:red, 0; green, 255; blue, 0 }  ,fill opacity=1 ][line width=0.08]  [draw opacity=0] (8.93,-4.29) -- (0,0) -- (8.93,4.29) -- cycle    ;

\draw (12,183.33) node [anchor=north west][inner sep=0.75pt]   [align=left] {$\displaystyle \pi _{0} =\pi _{\theta _{0}}$};
\draw (130,181.33) node [anchor=north west][inner sep=0.75pt]   [align=left] {$\displaystyle \pi _{\theta _{1}}$};
\draw (396,138.33) node [anchor=north west][inner sep=0.75pt]   [align=left] {$\displaystyle \pi _{\theta _{M-1}}$};
\draw (460,187.33) node [anchor=north west][inner sep=0.75pt]   [align=left] {$\displaystyle \pi _{\theta _{M}} =\pi _{1}$};
\draw (191,290.33) node [anchor=north west][inner sep=0.75pt]   [align=left] {$\displaystyle \mathbb{P}^{\leftarrow }$};
\draw (479,86.33) node [anchor=north west][inner sep=0.75pt]   [align=left] {$\displaystyle \mathbb{P^{\rightarrow }}$};
\draw (479,44.33) node [anchor=north west][inner sep=0.75pt]   [align=left] {$\displaystyle \overline{\mathbb{P}}\mathbb{^{\rightarrow }}$};
\draw (479,162.33) node [anchor=north west][inner sep=0.75pt]   [align=left] {$\displaystyle \mathbb{P}$};
\draw (479,18.33) node [anchor=north west][inner sep=0.75pt]   [align=left] {$\displaystyle \mathbb{\widehat{P}^{\rightarrow }}$};
\draw (12,149.33) node [anchor=north west][inner sep=0.75pt]   [align=left] {$\displaystyle \widehat{\pi }_{0} \approx \pi _{0}$};
\draw (204,179.33) node [anchor=north west][inner sep=0.75pt]   [align=left] {$\displaystyle \pi _{\theta _{\ell -1}}$};
\draw (282,171.33) node [anchor=north west][inner sep=0.75pt]   [align=left] {$\displaystyle \pi _{\theta _{\ell }}$};

\end{tikzpicture}
}

\caption{Illustration of the proof idea for \cref{thm:ais_complexity}.}
\vspace{-3em}
\label{fig:prf_idea}
\end{wrapfigure}

We present a high-level proof sketch using \cref{fig:prf_idea}. The continuous dynamics, comprising the forward path $\Pr$, the backward path $\Pl$, and the reference path $\P$, are depicted as three black curves. To address discretization error, the $\l$-th \red{red} ({\color[RGB]{189,16,224}purple}) arrow proceeding from left to right represents the transition kernel $\Fh_\l$ ($B_\l$), whose composition forms $\Phr$ ($\Pl$).

\textbf{(I)} Analogously to the analysis of JE (\cref{thm:jar_complexity}), define the reference path measure $\P$ with transition kernels $F^*_\l$ such that $x_\l\sim\pi_{\theta_\l}$.
Given the sampling path measure $\Phr$, define $\Pbr$ as the version of $\Phr$ without the initialization error, i.e., by replacing $\pih_0$ with $\pi_0$ in \cref{eq:ais_phr}. 

\textbf{(II)} Show that it suffices to obtain an accurate estimate $\Zh_0$ and initialization distribution $\pih_0$, together with sufficiently small KL divergences $\kl(\P\|\Pl)$ and $\kl(\P\|\Pbr)$, which quantify the closeness between the continuous dynamics and the discretization error in implementation, respectively.

\textbf{(III)} Using the chain rule, decompose $\kl(\P\|\Pl)$ into the sum of KL divergences between each pair of transition kernels $F_\l$ and $F^*_\l$ (i.e., \green{green} ``distances''). As in \cref{thm:jar_complexity}, $F^*_\l$, a transition kernel from $\pi_{\theta_{\l-1}}$ to $\pi_{\theta_\l}$, is realized by ALD with a compensatory vector field, ensuring the SDE exactly follows the trajectory $(\pi_\theta)_{\theta\in[\theta_{\l-1},\theta_\l]}$. 
Similarly, by applying the chain rule and Girsanov's theorem, we can express $\kl(\P\|\Pbr)$ as the sum of the \blue{blue} ``distances'', allowing for a similar analysis. %

\textbf{(IV)} Finally, derive three necessary conditions on the time steps $\theta_{\l}$ to control both $\kl(\P\|\Pl)$ and $\kl(\P\|\Pbr)$. Choosing a proper schedule yields the desired complexity bound. %

Our proposed algorithm consists of two phases: first, estimating $Z_0$ by TI, which is provably efficient for well-conditioned distributions, and second, estimating $Z$ by AIS, which is better suited for handling non-log-concave distributions. 
The three terms in \cref{eq:ais_complexity} arise from (i) ensuring the accuracy of $\Zh_0$, (ii) controlling $\kl(\P\|\Pl)$, and (iii) controlling $\kl(\P\|\Pbr)$, respectively, as discussed in \textbf{(II)} above.
Due to the non-log-concavity of $\pi$, the action $\cA$ is typically large, making (iii), the cost for controlling the discretization error, the dominant complexity.
The $\varepsilon$-dependence can be interpreted as the total duration $T=\Theta\ro{\frac{1}{\varepsilon^2}}$ required for the continuous dynamics to converge (as in \cref{thm:jar_complexity}) divided by the step size $\Thetat(\varepsilon^2)$ to control the discretization error.
Finally, we remark that although \cref{thm:ais_complexity} is only proved for geometric interpolation, the proof strategy can be generalized to \textit{any} curve of distributions $(\pi_\theta)_{\theta\in[0,1]}$ satisfying weak regularity conditions, and possibly with an additional compensatory drift term, as long as we know the expression of the score $\nabla\log\pi_\theta$, and also the Lipschitz constants of the score and the drift term for controlling the discretization error.

\section{Normalizing Constant Estimation via Denoising Diffusion}
\label{sec:revdif}

\textbf{Disadvantage of Geometric Interpolation.}
From the analysis of JE and AIS (\cref{thm:jar_complexity,thm:ais_complexity}), the choice of the interpolation curve $(\pi_\theta)_{\theta\in[0,1]}$ is crucial for the complexity. The geometric interpolation \cref{eq:pi_theta} is widely adopted due to the availability of closed-form scores of the intermediate distributions $\pi_\theta$, and for certain structured non-log-concave distributions, the associated action is polynomially large, enabling efficient AIS. For instance, \citet[Ex. 2]{guo2025provable} analyzed a Gaussian mixture target distribution with identical covariance, means having the \textit{same} norm, and arbitrary weights. However, for general target distributions, the action of the related curve can grow prohibitively large. We now establish an exponential lower bound on the action of a curve starting from a Gaussian mixture, highlighting the potential inefficiency of AIS under geometric interpolation.

\begin{proposition}
    Consider the Gaussian mixture target distribution $\pi=\frac{1}{2}\n{0,1}+\frac{1}{2}\n{m,1}$ on $\R$ for some sufficiently large $m\gtrsim1$, whose potential is $\frac{m^2}{2}$-smooth. Under the setting in AIS (\cref{thm:ais_complexity}), define $\pi_\theta(x)\propto\pi(x)\e^{-\frac{\lambda(\theta)x^2}{2}}$, $\theta\in[0,1]$, where $\lambda(\theta)=m^2(1-\theta)^r$ for some $r\ge1$. Then, the action of the curve $(\pi_\theta)_{\theta\in[0,1]}$, $\cA_r$, is lower bounded by $\cA_r\gtrsim m^{4}\e^{\frac{m^2}{40}}$.
    \label{thm:mog_w2_action}
\end{proposition}

The full proof is in \cref{app:prf:mog_w2_action}. The key technical tool is a closed-form expression of the $\text{W}_\text{2}$ distance in $\R$ expressed by the inverse cumulative distribution functions (c.d.f.s) of the involved distributions, and we lower bound the metric derivative near the target distribution, where the curve changes the most drastically. This observation provides a novel perspective on the quantitative description of the \textbf{mass teleportation} or \textbf{mode switching} phenomenon \citep{woodard2009sufficient,tawn2020weight,syed2021non,chemseddine2025neural}, motivating us to explore alternative curves that can potentially yield smaller action, thereby enhancing the efficiency of normalizing constant estimation. Intuitively, during the annealing process, the probability mass needs to be transported from one mode of the distribution to another well-separated mode in a short period of time (e.g., through the change of weights in both modes), which leads to torpid mixing for many samplers.

\textbf{Reverse Diffusion Samplers.}
Inspired by score-based generative models \citep{song2021scorebased}, recent advancements have led to the development of multimodal samplers based on reversing the Ornstein-Uhlenbeck (OU) process, such as reverse diffusion Monte Carlo (RDMC, \citet{huang2024reverse}), recursive score diffusion-based Monte Carlo (RSDMC, \citet{huang2024faster}), zeroth-order diffusion Monte Carlo (ZODMC, \citet{he2024zeroth}), and self-normalized diffusion Monte Carlo\footnote{This name is introduced by us as the original paper did not provide a name for the proposed algorithm.} (SNDMC, \citet{vacher2025sampling}). We collectively refer to these methods as the \textbf{reverse diffusion samplers (RDS)}. The key idea is to simulate the time reversal of the following OU process, which transforms any target distribution $\pi$ into $\phi:=\n{0,I}$ as $T\to\infty$:
\begin{equation}
    \d Y_t=-Y_t\d t+\sqrt{2}\d B_t,~t\in[0,T];~Y_0\sim\pi.
    \label{eq:ou}
\end{equation}
Let $Y_t\sim\pib_t$. The time-reversal $(\Yl_t:=Y_{T-t}\sim\pib_{T-t})_{t\in[0,T]}$ satisfies the SDE
\begin{equation}
    \d\Yl_t=(\Yl_t+2\nabla\log\pib_{T-t}(\Yl_t))\d t+\sqrt{2}\d W_t,~t\in[0,T];~\Yl_0\sim\pib_T(\approx\phi).
    \label{eq:ou_rev}
\end{equation}
Hence, to draw samples $\Yl_0\sim\pi$, it suffices to approximate the scores $\nabla\log\pib_t$ and discretize \cref{eq:ou_rev}, which can be implemented in various \textit{learning-free (non-parametric)} ways in the literature of RDS mentioned above. See \cref{app:revdif_overall} for a detailed review.

\textbf{RDS-based Normalizing Constant Estimation.}
We now propose to leverage $(\pib_{T-t})_{t\in[0,T]}$ in AIS. To support this idea, we first present the following proposition. We remark that the result can be generalized to a bound on the Wasserstein gradient flow for the KL divergence to \textit{any} target distribution with weak regularity condition, not necessary the standard normal distribution. See the proof in \cref{app:prf:action_ou} for details.

\begin{proposition}
    Define $\pib_t$ as the law of $Y_t$ in the OU process \cref{eq:ou} initialized from $Y_0\sim\pi\propto\e^{-V}$, where $V$ is $\beta$-smooth. Let $m^2:=\E_\pi\|\cdot\|^2<\infty$. Then, $\int_0^\infty|\dot\pib|^2_t\d t\le d\beta+m^2$.
    \label{thm:action_ou}
\end{proposition}

\cref{thm:action_ou} shows that under fairly weak conditions on the target distribution, the action of the curve along the OU process, $(\pib_{T-t})_{t\in[0,T]}$, behaves much better than \cref{eq:pi_theta}. Hence, our analysis of JE (\cref{thm:jar_complexity}) suggests that this curve is likely to yield more efficient normalizing constant estimation. Furthermore, recall that in our earlier proof, we introduced a compensatory drift term $v_t$ to eliminate the lag in JE. The same principle applies here: ensuring $X_t$ precisely following the reference trajectory is advantageous, which results in the time-reversal of OU process \cref{eq:ou_rev}. Building on this insight, we propose an RDS-based algorithm for normalizing constant estimation, and establish a framework for analyzing its oracle complexity. See \cref{app:prf:revdif} for the proof.

\begin{theorem}
    Assume a total time duration $T$, an early stopping time $\delta\ge0$, and discrete time points $0=t_0<t_1<...<t_N=T-\delta\le T$. For $t\in[0,T-\delta)$, let $t_-$ denote $t_k$ if $t\in[t_k,t_{k+1})$. Let $s_\cdot\approx\nabla\log\pib_\cdot$ be a score estimator, and $\phi=\n{0,I}$. Consider the following two SDEs on $[0,T-\delta]$ representing the sampling trajectory and the time-reversed OU process, respectively:
    \begin{align}
        \Qd:\quad&\d X_t=(X_t+2s_{T-t_-}(X_{t_-}))\d t+\sqrt{2}\d B_t,&X_0\sim\phi;\label{eq:ou_rev_score}\\
        \Q:\quad&\d X_t=(X_t+2\nabla\log\pib_{T-t}(X_t))\d t+\sqrt{2}\d B_t,&X_0\sim\pib_T.\nonumber
    \end{align}
    Let $\Zh:=\e^{-W(X)}$, $X\sim\Qd$ be the estimator of $Z$, where the functional $X\mapsto W(X)$ is defined as
    \begin{align*}
        \log\phi(X_0)+V(X_{T-\delta})-(T-\delta)d+\int_0^{T-\delta}\ro{\|s_{T-t_-}(X_{t_-})\|^2\d t+\sqrt{2}\inn{s_{T-t_-}(X_{t_-}),\d B_t}}.
    \end{align*}
    Then, to ensure $\Zh$ satisfies \cref{eq:acc_whp}, it suffices that $\kl(\Q\|\Qd)\lesssim\varepsilon^2$ and $\tv(\pi,\pib_\delta)\lesssim\varepsilon$.
    \label{thm:revdif}
\end{theorem}

For detailed implementation of the update rule in \cref{eq:ou_rev_score} and the computation of $W(X)$, see \cref{alg:rds}. To determine the overall complexity, we can leverage existing results for RDS to derive the oracle complexity to achieve $\kl(\Q\|\Qd)\lesssim\varepsilon^2$. When early stopping is needed (i.e., $\delta>0$), we prove in \cref{lem:ou_tv} that choosing $\delta\asymp\frac{\varepsilon^2}{\beta^2d^2}$ suffices to ensure $\varepsilon$-closeness in TV distance between $\pib_\delta$ and $\pi$, under weak assumptions similar to \cref{assu:pi}. For RDMC, RSDMC, ZODMC, and SNDMC, the total complexities are, respectively, 
$O\ro{{\poly\ro{d,\frac1\zeta}\exp\ro{\frac{1}{\varepsilon}}^{O(n)}}}$,
$\exp\ro{{\beta^3\log^3\poly\ro{\beta,d,m^2,\frac{1}{\zeta}}}}$,
$\exp\ro{{\Ot(d)\log\beta\log\frac{1}{\varepsilon}}}$,
and $O\ro{\ro{\frac{\beta(m^2\vee d)}{\varepsilon}}^{O(d)}}$.
\footnote{In RDMC, one assumes there exists $n,c>0$ such that $\forall r>0$, $V+r\|\cdot\|^2$ is convex for all $\|x\|\ge\frac{c}{r^n}$. In RDMC and RSDMC, $\zeta$ is the probability threshold that the estimator may fail to achieve the desired accuracy $\varepsilon$.}
The full analysis can be found in \cref{app:revdif_overall}.

As discussed, RDS can be viewed as an \emph{optimally compensated} ALD using the OU process as the trajectory. We conclude this section by contrasting these two approaches. On the one hand, analytically-tractable curves such as the geometric interpolation offer closed-form drift terms at all time points, but may exhibit poor action properties (\cref{thm:mog_w2_action}) or bad isoperimetric constants \citep{chehab2025provable}, making annealed sampling challenging. On the other hand, alternative curves like the OU process may have better properties in action and isoperimetric constants, but their drift terms, often related to the scores of the intermediate distributions, lack closed-form expressions, and estimating these terms is also non-trivial. This highlights a fundamental trade-off between the complexity of the drift term estimation and the property of the interpolation curve.

\textbf{Experiments.} We now compare the performance of the methods of normalizing constant estimation for non-log-concave distributions that have been discussed in the paper, including TI, AIS, and the four RDS-based methods.
We consider two multimodal target distributions: a modified M\"uller Brown (MMB) distribution and Gaussian mixture (GM) with $4$ components, both in $\R^2$. The quantitative results are summarized in \cref{tab:exp}, where we report the relative error of $\Zh$ and, for GM, the maximum mean discrepancy (MMD) and $\text{W}_\text{2}$ distance between the generated samples $\pihsamp$ and ground truth samples from $\pi$.
All RDS-based methods provide accurate estimates of the normalizing constant and high quality samples, while TI and AIS (based on geometric annealing) produce seriously biased estimates due to lack of mode coverage. Further details are presented in \cref{app:exp}, and the codes for the experiments are available at \url{https://github.com/AlexandreGUO2001/NormConstEst}.

\begin{table}[htbp]
\vspace{-1.5em}
\centering
\caption{Quantitative results of normalizing constant estimation (mean $\pm$ std), best in \textbf{bold}.}
\label{tab:exp}
\resizebox{\linewidth}{!}{
\begin{tabular}{cccccccc}
\toprule
Target              & Metric               & TI                   & AIS                  & RDMC                         & RSDMC                & ZODMC               & SNDMC                        \\ \midrule
MMB                 & $\Zh/Z$              & $0.7527 \pm 0.0086$  & $2.9740 \pm 7.6705$  & $0.9829 \pm 0.2116$          & $1.2885 \pm 12.7843$ & $0.9878 \pm 0.1154$ & $\mathbf{1.0053 \pm 0.1192}$ \\ \midrule
\multirow{3}{*}{GM} & $\Zh/Z$              & $0.2427 \pm 0.0016$  & $0.2042 \pm 0.0008$  & $\mathbf{1.0001 \pm 0.0850}$ & $0.9202 \pm 1.0276$  & $0.9766 \pm 0.2835$ & $0.9973 \pm 0.0834$          \\ \cmidrule{2-8}
                    & $\mmd(\pihsamp,\pi)$ & $2.5407 \pm 0.0281$  & $2.4618 \pm 0.0270$  & $0.3581 \pm 0.0366$          & $0.3124 \pm 0.0395$  & $0.2591 \pm 0.0381$ & $\mathbf{0.1576 \pm 0.0279}$ \\ \cmidrule{2-8}
                    & $\w_2(\pihsamp,\pi)$  & $10.5602 \pm 0.0794$ & $10.4842 \pm 0.0851$ & $7.0242 \pm 0.9104$          & $2.6012 \pm 0.2482$  & $2.4506 \pm 0.2963$ & $\mathbf{1.5494 \pm 0.6820}$ \\ \bottomrule
\end{tabular}
}
\vspace{-1.5em}
\end{table}

\section{Conclusion, Limitations, and Future Directions}
\label{sec:conc}
This paper investigates the complexity of normalizing constant estimation using JE, AIS, and RDS, and takes a first step in establishing non-asymptotic convergence guarantees based on insights from continuous-time analysis. Our analysis of JE (\cref{thm:jar_complexity}) applies to general interpolations without explicit dependence of isoperimetry, thereby substantially extending prior work limited to log-concave distributions. Several limitations remain: the tightness of our upper bounds (\cref{thm:jar_complexity,thm:ais_complexity}) are unknown; the lower bound on the action in \cref{thm:mog_w2_action} does not directly imply that JE needs exponentially long time to converge; though the action provides a clean analysis of the statistical efficiency of annealing--which isoperimetric inequalities cannot deal with--its practical interpretability is not well understood. Finally, we conjecture that our proof techniques can extend to samplers beyond overdamped LD (e.g., Hamiltonian or underdamped LD \citep{sohl2012hamiltonian}, parallel samplers \citep{anari24a,zhou2025the1,zhou2025parallel,zhou2025the2}), and may also apply to estimating normalizing constants of compactly supported distributions on $\R^d$ (e.g., convex bodies volume estimation \citep{cousins2018gaussian}) or discrete distributions (e.g., Ising model and restricted Boltzmann machines \citep{huber2015approximation,krause2020algorithms}) via the Poisson stochastic integral framework \citep{ren2025how,ren2025fast}, which we leave as a direction for future research.

\subsubsection*{Acknowledgments}
WG thanks Omar Chehab and Huanjian Zhou for insightful discussions. We also thank Jannis Chemseddine for pointing out the paper \citet{chemseddine2025neural} after we initially submitted our work to arXiv. WG and YC acknowledge supports from NSF Grants ECCS-1942523, DMS-2206576, and AFOSR Grant FA9550-25-1-0169. MT is thankful for partial supports by NSF Grants DMS-1847802, DMS-2513699, DOE Grants NA0004261, SC0026274, and Richard Duke Fellowship.

\bibliography{ref}
\bibliographystyle{iclr2026_conference}

\newpage
\appendix
\crefalias{section}{appendix}
\crefalias{subsection}{subappendix}
\tableofcontents

\newpage
\section{Preliminaries}
\label{app:pre}

\subsection{Stochastic Analysis: Forward-backward SDEs and Girsanov's Theorem}
\label{app:pre_sde}

For a stochastic differential equation (SDE) $X=(X_t)_{t\in[0,T]}$ defined on $\Omega=C([0,T];\R^d)$, the distribution of $X$ over $\Omega$ is called the \textbf{path measure} of $X$, defined by $\P^X$: measurable $A\subset\Omega\mapsto\prob(X\in A)$. 
The following lemma%
, as a corollary of the \textbf{Girsanov's theorem} \citep[Prop. 2.3.1 \& Cor. 2.3.1]{ustunel2013transformation},
provides a method for computing the Radon-Nikod\'ym (RN) derivative and KL divergence between two path measures, which serves as a key technical tool in our proof.

\begin{lemma}
    \label{lem:rn_path_measure}
    Assume we have the following two SDEs with $t\in[0,T]$:
    $$\d X_t=a_t(X_t)\d t+\sigma\d B_t,~X_0\sim\mu;\quad\d Y_t =b_t(Y_t)\d t+\sigma\d B_t,~Y_0\sim\nu.$$
    Denote the path measures of $X$ and $Y$ as $\P^X$ and $\P^Y$, respectively. Then for any trajectory $\xi\in\Omega$, 
    \begin{align*}
        \log\de{\P^X}{\P^Y}(\xi) & =\log\de{\mu}{\nu}(\xi_0)+\frac{1}{\sigma^2}\int_0^T\inn{a_t(\xi_t)-b_t(\xi_t),\d\xi_t}-\frac{1}{2\sigma^2}\int_0^T(\|a_t(\xi_t)\|^2-\|b_t(\xi_t)\|^2)\d t.
    \end{align*}
    In particular, plugging in $\xi\gets X\sim\P^X$, we can compute the KL divergence:
    $$\kl(\P^X\|\P^Y)=\kl(\mu\|\nu)+\frac{1}{2\sigma^2}\int_0^T\E_{\P^X}\|a_t(X_t)-b_t(X_t)\|^2\d t.$$
\end{lemma}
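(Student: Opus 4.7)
The plan is to reduce the claim to the standard Girsanov change-of-measure statement, handling the initial distributions separately via disintegration, and then deriving the KL formula by taking expectation under $\P^X$ and noting that the stochastic integral term is a martingale.

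First, I would disintegrate both path measures with respect to the initial condition: writing $\P^X(\d\xi)=\mu(\d\xi_0)\,\P^{X\mid X_0=\xi_0}(\d\xi)$ and similarly $\P^Y(\d\xi)=\nu(\d\xi_0)\,\P^{Y\mid Y_0=\xi_0}(\d\xi)$. Provided $\mu\ll\nu$ and the conditional laws are mutually absolutely continuous (the latter being exactly what Girsanov supplies under a Novikov-type condition, which we may assume implicit in the usage), the chain rule gives
\[
\log\de{\P^X}{\P^Y}(\xi)=\log\de{\mu}{\nu}(\xi_0)+\log\de{\P^{X\mid X_0=\xi_0}}{\P^{Y\mid Y_0=\xi_0}}(\xi).
\]
This isolates the contribution from the initial distributions, leaving a standard SDE-on-$[0,T]$ comparison with identical starting points.

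Next, I would apply the cited Girsanov corollary \cite[Prop.~2.3.1 \& Cor.~2.3.1]{ustunel2013transformation} to the two conditional path measures, which share the same diffusion coefficient $\sigma$ and differ only in drift. The output is the classical exponential martingale expression, most conveniently stated in the form
\[
\log\de{\P^{X\mid X_0=x_0}}{\P^{Y\mid Y_0=x_0}}(\xi)=\frac{1}{\sigma}\int_0^T\inn{a_t(\xi_t)-b_t(\xi_t),\d\Bl_t}-\frac{1}{2\sigma^2}\int_0^T\norm{a_t(\xi_t)-b_t(\xi_t)}^2\d t,
\]
where $(\Bl_t)$ is the driving BM under $\P^Y$, i.e.\ $\sigma\,\d\Bl_t=\d\xi_t-b_t(\xi_t)\d t$ under $\P^Y$. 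Substituting this relation into the stochastic integral and rearranging the quadratic term through the identity $\inn{a_t-b_t,b_t}-\tfrac12(\norm{a_t}^2-\norm{b_t}^2)=-\tfrac12\norm{a_t-b_t}^2$ yields the equivalent reformulation
\[
\frac{1}{\sigma^2}\int_0^T\inn{a_t(\xi_t)-b_t(\xi_t),\d\xi_t}-\frac{1}{2\sigma^2}\int_0^T(\norm{a_t(\xi_t)}^2-\norm{b_t(\xi_t)}^2)\d t,
\]
which matches the stated formula once combined with the initial-distribution term. This algebraic re-packaging is the only nontrivial bookkeeping step; I would take care to verify it componentwise to be sure the signs line up.

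Finally, for the KL divergence I would plug $\xi\gets X\sim\P^X$ and take expectation. Under $\P^X$, we may write $\d X_t=a_t(X_t)\d t+\sigma\,\d B_t$ for some $\P^X$-Brownian motion $B$, so
\[
\frac{1}{\sigma^2}\int_0^T\inn{a_t(X_t)-b_t(X_t),\d X_t}=\frac{1}{\sigma^2}\int_0^T\inn{a_t(X_t)-b_t(X_t),a_t(X_t)}\d t+\frac{1}{\sigma}\int_0^T\inn{a_t(X_t)-b_t(X_t),\d B_t}.
\]
The second term is a martingale (assuming the usual integrability, which the lemma implicitly takes for granted) and vanishes in expectation. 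Combining with the $-\tfrac{1}{2\sigma^2}(\norm{a_t}^2-\norm{b_t}^2)$ term and collecting yields $\tfrac{1}{2\sigma^2}\E_{\P^X}\norm{a_t(X_t)-b_t(X_t)}^2$ inside the integral. Adding the $\E_{\P^X}[\log(\d\mu/\d\nu)(X_0)]=\kl(\mu\|\nu)$ contribution gives the claimed KL identity. The main obstacle, if any, is not conceptual but rather making sure the Novikov/integrability hypotheses needed to invoke Girsanov and to drop the martingale term are satisfied in the applications; in practice this is guaranteed by the smoothness and moment bounds assumed on $V$ later in the paper.
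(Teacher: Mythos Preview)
Your proposal is correct and follows the standard route. Note, however, that the paper does not actually give its own proof of this lemma: it simply states the result as a corollary of the Girsanov theorem, citing \cite[Prop.~2.3.1 \& Cor.~2.3.1]{ustunel2013transformation}, and moves on. Your write-up spells out exactly the derivation that citation is meant to cover---disintegrate by the initial condition, apply Girsanov to the conditional laws with common starting point, rewrite the exponential martingale in terms of $\d\xi_t$ via the algebraic identity, and then take expectation under $\P^X$ to obtain the KL formula. The paper later remarks (in the appendix) that the Novikov condition is assumed throughout and can be circumvented for upper bounds via an approximation argument, which aligns with your closing caveat about integrability.
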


\begin{remark}
    The Girsanov's theorem requires a technical condition ensuring that a local martingale is a true martingale, typically verified via the Novikov condition \citep[Chap. 3, Cor. 5.13]{karatzas1991brownian}, which can be challenging to establish. However, when only an upper bound of the KL divergence is needed, the approximation argument from \citet[App. B.2]{chen2023sampling} circumvents the verification of the Novikov condition. For additional context, see \citet[Sec. 3.2]{chewi2022log}. In this paper, we omit these technical details and \uline{always} assume that the Novikov condition holds.
\end{remark}

We now present the theory of backward stochastic integral and the Girsanov's theorem, which are adapted from \citet{vargas2024transport}. Here, we include relevant results and proofs to ensure a self-contained presentation.

The backward SDE can be perceived as the time-reversal of a forward SDE:

\begin{definition}[Backward SDE]
    Given a BM $(B_t)_{t\in[0,T]}$, let its time-reversal be $(\Bl_t:=B_{T-t})_{t\in[0,T]}$. We say that a process $(\Xl_t)_{t\in[0,T]}$ satisfies the \textbf{backward SDE}
    $$\d\Xl_t=a_t(\Xl_t)\d t+\sigma\d\Bl_t,~t\in[0,T];~\Xl_T\sim\nu$$
    if its time-reversal $(X_t=\Xl_{T-t})_{t\in[0,T]}$ satisfies the following forward SDE:
    $$\d X_t=-a_{T-t}(X_t)\d t+\sigma\d B_t,~t\in[0,T];~X_0\sim\nu.$$
    \label{def:bwd_sde}
\end{definition}

\begin{remark}
    Intuitively, one can understand the backward SDE through the following Euler-Maruyama discretization: with $\Delta t>0$:
    \begin{align*}
        \Xl_{t-\Delta t}&\approx\Xl_t+a_t(\Xl_t)(-\Delta t)+\sigma(\Bl_{t-\Delta t}-\Bl_t)\\
        \iff X_{T-t+\Delta t}&\approx X_{T-t}-a_t(X_{T-t})\Delta t+\sigma(B_{T-t+\Delta t}-B_{T-t}).
    \end{align*}
    where $\Bl_{t-\Delta t}-\Bl_t\sim\n{0,\Delta t I}$ and is independent of $(\Xl_s)_{s\in[t,T]}$.
\end{remark}

The forward and backward SDEs are related through the following Nelson's relation:

\begin{lemma}[Nelson's relation {\citep{nelson1967dynamical,anderson1982reversetime}}]
    \label{lem:nelson}
    Given a BM $(B_t)_{t\in[0,T]}$ and its time-reversal $(\Bl_t=B_{T-t})_{t\in[0,T]}$, the following two SDEs
    $$\d X_t=a_t(X_t)\d t+\sigma\d B_t,~X_0\sim p_0;\quad\text{and}\quad\d Y_t=b_t(Y_t)\d t+\sigma\d\Bl_t,~Y_T\sim q$$
    have the same path measure if and only if
    $$q=p_T,\quad\text{and}\quad b_t=a_t-\sigma^2\nabla\log p_t,~\forall t\in[0,T],$$
    where $p_t$ is the p.d.f. of $X_t$.
\end{lemma}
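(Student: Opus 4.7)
The plan is to reduce everything to the time-reversal identity for a forward SDE, using the paper's own definition of the backward SDE. By that definition, $(Y_t)_{t\in[0,T]}$ solves the stated backward SDE if and only if $Z_t:=Y_{T-t}$ solves the forward SDE $\d Z_t=-b_{T-t}(Z_t)\d t+\sigma\d B_t$ with $Z_0\sim q$. Symmetrically, define $\Xt_t:=X_{T-t}$. Since the time-reversal map on path space is a bijection, the desired conclusion $\P^X=\P^Y$ is equivalent to $\P^{\Xt}=\P^Z$, so I may work entirely with forward SDEs.

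The central step is to derive the SDE satisfied by $\Xt$. The Fokker–Planck equation for $X$ reads $\partial_t p_t=-\nabla\cdot(a_t p_t)+\tfrac{\sigma^2}{2}\Delta p_t$; rewriting in reversed time $s=T-t$ and using the elementary identity $\sigma^2\Delta p_s=\nabla\cdot(\sigma^2 p_s\nabla\log p_s)$ to split the Laplacian as ``half transport with the score, half anti-diffusion,'' one obtains
\[
\partial_s p_s+\nabla\cdot\bigl((-a_s+\sigma^2\nabla\log p_s)p_s\bigr)-\tfrac{\sigma^2}{2}\Delta p_s=0
\]
when read in the reversed time direction, which is exactly the Fokker–Planck equation of the forward SDE
\[
\d\Xt_t=\bigl(-a_{T-t}(\Xt_t)+\sigma^2\nabla\log p_{T-t}(\Xt_t)\bigr)\d t+\sigma\d\Bt_t,\qquad\Xt_0\sim p_T,
\]
for some Brownian motion $\Bt$. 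This is the classical Haussmann–Pardoux/Anderson time-reversal formula, which I would simply quote under the tacit regularity ($p_t\in C^{1,2}$, integrability of $|\nabla\log p_t|$, non-explosion) implicit in the paper's setting.

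With that in hand, both directions are immediate. For sufficiency, substitute $q=p_T$ and $b_s=a_s-\sigma^2\nabla\log p_s$ into the SDE for $Z$: the drift becomes $-b_{T-t}=-a_{T-t}+\sigma^2\nabla\log p_{T-t}$ and the initial law is $p_T$, so $Z$ solves the same SDE as $\Xt$ with the same initial distribution; weak uniqueness gives $\P^Z=\P^{\Xt}$, whence $\P^Y=\P^X$. For necessity, $\P^X=\P^Y$ forces equality of one-dimensional marginals, so $q_s=p_s$ for all $s\in[0,T]$, and in particular $q=q_T=p_T$. Equality of path measures $\P^Z=\P^{\Xt}$ of two forward diffusions with the same diffusion coefficient $\sigma$ forces equality of their drifts almost everywhere (e.g., by matching Fokker–Planck equations with common marginals $p_s$, or by Girsanov: a nonzero drift difference would make the Radon–Nikodym derivative in \cref{lem:rn_path_measure} non-trivially random, contradicting $\P^{Z}=\P^{\Xt}$). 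Matching drifts yields $-b_{T-t}=-a_{T-t}+\sigma^2\nabla\log p_{T-t}$, i.e. $b_s=a_s-\sigma^2\nabla\log p_s$.

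The main obstacle is the derivation of the reversed SDE for $\Xt$; the rest is bookkeeping. In a self-contained proof one has to justify that the score term $\sigma^2\nabla\log p_{T-t}$ is the correct correction, which is exactly what the Haussmann–Pardoux argument establishes under the integrability condition $\int_0^T\E\|\nabla\log p_t(X_t)\|^2\d t<\infty$. Since the paper invokes this result as standard (citing \cite{nelson1967dynamical,anderson1982reversetime}), I would present the Fokker–Planck rewriting above as the heuristic derivation and cite these references for the full technical justification.
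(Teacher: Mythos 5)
Your proposal is correct and follows essentially the same route as the paper: both pass to the time-reversed processes, verify via the Fokker--Planck equation that the reversal of $X$ is a forward SDE with drift $-a_{T-t}+\sigma^2\nabla\log p_{T-t}$, and then compare with the forward SDE solved by the reversal of $Y$. Your write-up is merely more explicit about the necessity direction and the Haussmann--Pardoux/Anderson regularity that the paper compresses into ``the claim is evident.''
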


\begin{proof}
    The proof is by verifying the Fokker-Planck equation. For $X$, we have
    $$\partial_tp_t=-\nabla\cdot(a_tp_t)+\frac{\sigma^2}{2}\Delta p_t.$$
    Let $\star^\gets_t:=\star_{T-t}$. Then $\pl_t$ satisfies
    $$\partial_t\pl_t=\nabla\cdot(\al_t\pl_t)-\frac{\sigma^2}{2}\Delta\pl_t=-\nabla\cdot((-\al_t+\sigma^2\nabla\log\pl_t)\pl_t)+\frac{\sigma^2}{2}\Delta\pl_t,$$
    which means $(\Xl_t)_{t\in[0,T]}$ has the same path measure as the following SDE:
    $$\d Z_t=-(\al_t-\sigma^2\nabla\log\pl_t)(Z_t)\d t+\sigma\d B_t,~Z_t\sim\pl_t.$$
    On the other hand, by definition, $(\Yl_t)_{t\in[0,T]}$ satisfies the forward SDE
    $$\d\Yl_t=-\bl_t(\Yl_t)\d t+\sigma\d B_t,~Y_0\sim q,$$
    and thus the claim is evident.
\end{proof}

We now introduce the concept of backward stochastic integral, which allows us to represent the RN derivative between path measures of forward and backward SDEs.
\begin{definition}[Backward stochastic integral]
    \label{def:bsi}
    For two continuous stochastic processes $X$ and $Y$ on $C([0,T];\R^d)$, the \textbf{backward stochastic integral} of $Y$ with respect to $X$ is defined as
    $$\int_0^T\inn{Y_t,*\d X_t}:=\prob\,{\text-}\lim_{\|\Pi\|\to0}\sum_{i=0}^{n-1}\inn{Y_{t_{i+1}},X_{t_{i+1}}-X_{t_i}},$$
    where $\Pi=\{0=t_0<t_1<...<t_n=T\}$ is a partition of $[0,T]$, $\|\Pi\|:=\max\limits_{i\in\sqd{1,n}}(t_{i+1}-t_i)$, and the convergence is in the probability sense. When both $X$ and $Y$ are continuous semi-martingales, one can equivalently define
    \begin{equation}
        \int_0^T\inn{Y_t,*\d X_t}:=\int_0^T\inn{Y_t,\d X_t}+[X,Y]_T,
        \label{eq:bsi_ito}
    \end{equation}
    where $[X,Y]_\cdot$ is the cross quadratic variation process\footnote{The notation used in \citet{karatzas1991brownian} is $\inn{\cdot,\cdot}_\cdot$. We use square brackets here to avoid conflict with the notation for inner product.} of the local martingale parts of $X$ and $Y$.
    \end{definition}

\begin{remark}
    Although rarely used in practice, the backward stochastic integral is sometimes referred to as the H\"anggi-Klimontovich integral in the literature. Recall that the It\^o integral is defined as the limit of Riemann sums when the leftmost point of each interval is used, while the Stratonovich integral is based on the midpoint and the backward integral uses the rightmost point. The equivalence in \cref{eq:bsi_ito} can be justified in \citet[Chap. 3.3]{karatzas1991brownian}. 
\end{remark}

\begin{lemma}[Continuation of \cref{lem:rn_path_measure}]
\begin{enumerate}[wide=0pt,itemsep=0pt, topsep=0pt,parsep=0pt,partopsep=0pt]

    \item If we replace the SDEs in \cref{lem:rn_path_measure} with
    $$\d X_t =a_t(X_t)\d t+\sigma\d\Bl_t,~X_T\sim\mu;\qquad\d Y_t =b_t(Y_t)\d t+\sigma\d\Bl_t,~Y_T\sim\nu,$$
    while keeping other assumptions and notations unchanged, then for any trajectory $\xi\in\Omega$, 
    \begin{align*}
        \log\de{\P^X}{\P^Y}(\xi) & =\log\de{\mu}{\nu}(\xi_T)+\frac{1}{\sigma^2}\int_0^T\inn{a_t(\xi_t)-b_t(\xi_t),*\d\xi_t}-\frac{1}{2\sigma^2}\int_0^T(\|a_t(\xi_t)\|^2-\|b_t(\xi_t)\|^2)\d t,
    \end{align*}
    and consequently, 
    $$\kl(\P^X\|\P^Y)=\kl(\mu\|\nu)+\frac{1}{2\sigma^2}\int_0^T\E_{\P^X}\|a_t(X_t)-b_t(X_t)\|^2\d t.$$

    \item Define the following two SDEs from $0$ to $T$:
    $$\d X_t =a_t(X_t)\d t+\sigma\d B_t,~X_0\sim\mu;\qquad\d Y_t =b_t(Y_t)\d t+\sigma\d\Bl_t,~Y_T\sim\nu.$$
    Denote the path measures of $X$ and $Y$ as $\P^X$ and $\P^Y$, respectively. Then for any trajectory $\xi\in\Omega$,
    \begin{align*}
        \log\de{\P^X}{\P^Y}(\xi) & =\log\frac{\mu(\xi_0)}{\nu(\xi_T)}+\frac{1}{\sigma^2}\int_0^T(\inn{a_t(\xi_t),\d\xi_t}-\inn{b_t(\xi_t),*\d\xi_t})-\frac{1}{2\sigma^2}\int_0^T(\|a_t(\xi_t)\|^2-\|b_t(\xi_t)\|^2)\d t.
    \end{align*}

\end{enumerate}
\label{lem:rn_path_measure_contd}
\end{lemma}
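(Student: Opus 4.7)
Both claims reduce the backward-SDE setting to the forward case of \cref{lem:rn_path_measure}. The tools I will invoke are (a) the definition of the backward SDE given in the paper, which identifies the path measure of $\d X_t=a_t(X_t)\d t+\sigma\d\Bl_t$, $X_T\sim\mu$, with that of the forward SDE $\d\tilde X_t=-a_{T-t}(\tilde X_t)\d t+\sigma\d B_t$, $\tilde X_0\sim\mu$, via the measurable bijection $\tilde\xi_t:=\xi_{T-t}$ on $\Omega$; (b) Nelson's relation (\cref{lem:nelson}), which provides a forward-SDE representation of any backward SDE in terms of its marginals; and (c) the equivalence $\int_0^T\inn{Y_t,*\d\xi_t}=\int_0^T\inn{Y_t,\d\xi_t}+[\xi,Y]_T$ from \cref{def:bsi}.

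\textbf{Part 1.} Applying the backward-SDE definition to both $X$ and $Y$, the time-reversals $\tilde X_t:=X_{T-t}$ and $\tilde Y_t:=Y_{T-t}$ are forward SDEs driven by a common BM $B$, with drifts $-a_{T-t}$, $-b_{T-t}$ and initial laws $\mu$, $\nu$. Since $\xi\mapsto\tilde\xi$ is a measurable bijection and $\P^X,\P^Y$ are the pushforwards of $\P^{\tilde X},\P^{\tilde Y}$, one has $\log\de{\P^X}{\P^Y}(\xi)=\log\de{\P^{\tilde X}}{\P^{\tilde Y}}(\tilde\xi)$. Applying \cref{lem:rn_path_measure} to $(\tilde X,\tilde Y)$ yields $\log\de{\mu}{\nu}(\tilde\xi_0)+\frac{1}{\sigma^2}\int_0^T\inn{b_{T-t}-a_{T-t},\d\tilde\xi_t}-\frac{1}{2\sigma^2}\int_0^T(\|a_{T-t}\|^2-\|b_{T-t}\|^2)\d t$. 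The substitution $s=T-t$ sends $\tilde\xi_0$ to $\xi_T$ and reduces the drift-squared integral to $\int_0^T(\|a_s\|^2-\|b_s\|^2)\d s$. For the stochastic integral, the corresponding Riemann sums translate as follows: with $s_i:=T-t_i$ and $\tilde\xi_{t_i}=\xi_{s_i}$, the left-endpoint sums for $\tilde\xi$ become, after absorbing a sign from $\d\tilde\xi=-\d\xi$ and relabeling the reversed partition, right-endpoint sums for $\xi$, which is precisely the backward It\^o integral $\int_0^T\inn{a_s-b_s,*\d\xi_s}$ of \cref{def:bsi}. Combining gives the claimed RN formula. The KL statement is then immediate: $\kl(\P^X\|\P^Y)=\kl(\P^{\tilde X}\|\P^{\tilde Y})$ by the bijection, and applying the KL conclusion of \cref{lem:rn_path_measure} to $(\tilde X,\tilde Y)$ plus the same change of variable delivers $\kl(\mu\|\nu)+\frac{1}{2\sigma^2}\int_0^T\E_{\P^X}\|a_t(X_t)-b_t(X_t)\|^2\d t$, bypassing any direct computation of expectations of backward It\^o integrals.

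\textbf{Part 2.} Introduce an auxiliary forward SDE $\d Y'_t=b_t(Y'_t)\d t+\sigma\d B_t$ with $Y'_0\sim q_0$, where $q_t$ denotes the marginal law of $Y_t$ (so $q_T=\nu$), and factor $\de{\P^X}{\P^Y}=\de{\P^X}{\P^{Y'}}\cdot\de{\P^{Y'}}{\P^Y}$. The first factor comes directly from \cref{lem:rn_path_measure}. For the second, Nelson's relation re-expresses $Y$ as a forward SDE with drift $b_t+\sigma^2\nabla\log q_t$ and initial law $q_0$, so \cref{lem:rn_path_measure} again applies and produces a formula containing $\int_0^T\inn{\nabla\log q_t,\d\xi_t}$, $\int_0^T\inn{b_t,\nabla\log q_t}\d t$, and $\int_0^T\|\nabla\log q_t\|^2\d t$. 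I eliminate these by applying It\^o's formula to $\log q_t(\xi_t)$ (a pathwise identity valid $\P^{Y'}$-a.s., since $\xi$ there has quadratic variation $\sigma^2 tI$) and substituting $\partial_t\log q_t$ from the Fokker-Planck equation $\partial_tq_t+\nabla\cdot(b_tq_t)+\frac{\sigma^2}{2}\Delta q_t=0$ of the backward SDE for $Y$. Together with the identities $\Delta q_t/q_t=\Delta\log q_t+\|\nabla\log q_t\|^2$ and $\nabla\cdot(b_tq_t)/q_t=\nabla\cdot b_t+\inn{b_t,\nabla\log q_t}$, all $\nabla\log q_t$-dependent quantities cancel and the second factor collapses to $\log\frac{q_0(\xi_0)}{\nu(\xi_T)}-\int_0^T\nabla\cdot b_t(\xi_t)\d t$. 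Multiplying the factors gives $\log\frac{\mu(\xi_0)}{\nu(\xi_T)}+\frac{1}{\sigma^2}\int_0^T\inn{a_t-b_t,\d\xi_t}-\frac{1}{2\sigma^2}\int_0^T(\|a_t\|^2-\|b_t\|^2)\d t-\int_0^T\nabla\cdot b_t(\xi_t)\d t$. Finally, \cref{def:bsi} applied with the cross quadratic variation $[\xi,b(\xi)]_T=\sigma^2\int_0^T\nabla\cdot b_t(\xi_t)\d t$ (from $\d[\xi]_t=\sigma^2 t\,I$ and the local martingale part $\sigma\nabla b_t(\xi_t)\d B_t$ of $b_t(\xi_t)$) converts $\int_0^T\inn{b_t,\d\xi_t}$ into $\int_0^T\inn{b_t,*\d\xi_t}-\sigma^2\int_0^T\nabla\cdot b_t(\xi_t)\d t$, exactly cancelling the divergence term and producing the stated formula.

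\textbf{Main obstacle.} The delicate step is the cancellation in Part 2 around $\log q_t(\xi_t)$: the four quantities $\partial_t\log q_t$, $\frac{\sigma^2}{2}\Delta\log q_t$, $\inn{b_t,\nabla\log q_t}$, and $\frac{\sigma^2}{2}\|\nabla\log q_t\|^2$ must collapse --- via the backward Fokker-Planck equation --- to exactly $-\nabla\cdot b_t$, whose surplus is then absorbed by the forward-to-backward integral conversion of \cref{def:bsi}. Without this precise interplay, extraneous $\nabla\log q_t$-dependent quantities would contaminate the RN formula. One must also implicitly require enough regularity (e.g., $q_t\in C^{1,2}$) for It\^o's formula on $\log q_t(\xi_t)$ and the absolute continuity $\mu\ll q_0$ so that the auxiliary factorization is well-defined, but both are mild assumptions consistent with the setup of the forward case.
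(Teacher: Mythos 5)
Your proposal is correct, but the two parts sit differently relative to the paper. For part 1 you follow essentially the paper's own route: pass to the time-reversed processes, apply \cref{lem:rn_path_measure} to the forward SDEs with drifts $-a_{T-t},-b_{T-t}$, and translate the left-endpoint Riemann sums into right-endpoint sums (with the sign from $\d\tilde\xi=-\d\xi$) to recover the backward integral of \cref{def:bsi}; the only difference is your KL step, where you invoke invariance of KL under the time-reversal bijection and apply the forward KL formula directly, whereas the paper plugs the trajectory into the pathwise RN formula and then shows the backward stochastic integral has zero expectation via a backward-martingale argument --- your shortcut is valid and somewhat cleaner. For part 2 you take a genuinely different route. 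The paper gives a one-line \emph{formal} proof: it introduces $\d Z_t=\sigma\d B_t$ with $Z_0\sim$ Lebesgue measure, notes that this reference process can be read either forward or backward, and factors the RN derivative through it by applying \cref{lem:rn_path_measure} to $(X,Z)$ and part 1 to $(Y,Z)$. You instead factor through an auxiliary forward SDE with the same drift $b$ started at $q_0$ (the time-$0$ marginal of $Y$), compute the second factor via Nelson's relation (\cref{lem:nelson}), It\^o's formula on $\log q_t(\xi_t)$ and the Fokker--Planck equation $\partial_tq_t+\nabla\cdot(b_tq_t)+\frac{\sigma^2}{2}\Delta q_t=0$, and then absorb the resulting $\int_0^T\nabla\cdot b_t(\xi_t)\d t$ term by the forward-to-backward conversion $\int\inn{b_t,*\d\xi_t}=\int\inn{b_t,\d\xi_t}+\sigma^2\int\nabla\cdot b_t\,\d t$; I checked the cancellation of all $\nabla\log q_t$ terms and it works out exactly as you claim. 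What each buys: the paper's argument is shorter but relies on an improper ($\sigma$-finite) reference measure, hence is only formal; yours stays within genuine probability measures and makes explicit where the backward integral comes from, at the cost of extra regularity hypotheses ($q_t\in C^{1,2}$ and positive, $b_t\in C^1$, mutual absolute continuity for the factorization) --- all consistent with the paper's level of rigor. One trivial slip: the quadratic variation should read $[\xi]_t=\sigma^2t\,I$ (equivalently $\d[\xi]_t=\sigma^2I\,\d t$), not $\d[\xi]_t=\sigma^2t\,I$.
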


\begin{proof}
\begin{enumerate}[wide=0pt,itemsep=0pt, topsep=0pt,parsep=0pt,partopsep=0pt]

\item Only in the proof of this theorem, we use the notation $\star^\gets_t:=\star_{T-t}$ to represent the time reversal. We know that
$$\d\Xl_t =-\al_t(\Xl_t)\d t+\sigma\d B_t,~\Xl_0\sim\mu;\qquad\d\Yl_t =-\bl_t(\Yl_t)\d t+\sigma\d B_t,~\Yl_0\sim\nu.$$
Let $\P^{\Xl}$ and $\P^{\Yl}$ be the path measures of $\Xl$ and $\Yl$, respectively. From \cref{lem:rn_path_measure}, we know that
\begin{align*}
    \log\de{\P^{\Xl}}{\P^{\Yl}}(\xi) & =\log\de{\mu}{\nu}(\xi_0)-\frac{1}{\sigma^2}\int_0^T\inn{\al_t(\xi_t)-\bl_t(\xi_t),\d\xi_t}-\frac{1}{2\sigma^2}\int_0^T(\|\al_t(\xi_t)\|^2-\|\bl_t(\xi_t)\|^2)\d t.
\end{align*}
Since $\P^{\Xl}(\d\xi)=\prob(\Xl\in\d\xi)=\prob(X\in\d\xil)=\P^X(\d\xil)$, we obtain
\begin{align*}
    &\log\de{\P^X}{\P^Y}(\xi) =\log\de{\P^{\Xl}}{\P^{\Yl}}(\xil)\\
    &=\log\de{\mu}{\nu}(\xil_0)-\frac{1}{\sigma^2}\int_0^T\inn{\al_t(\xil_t)-\bl_t(\xil_t),\d\xil_t}-\frac{1}{2\sigma^2}\int_0^T(\|\al_t(\xil_t)\|^2-\|\bl_t(\xil_t)\|^2)\d t\\
    &=\log\de{\mu}{\nu}(\xi_T)+\frac{1}{\sigma^2}\int_0^T\inn{a_t(\xi_t)-b_t(\xi_t),*\d\xi_t}-\frac{1}{2\sigma^2}\int_0^T(\|a_t(\xi_t)\|^2-\|b_t(\xi_t)\|^2)\d t.
\end{align*}

To justify the last equality, if $\xi,\eta$ are two continuous stochastic processes, then by definition,
\begin{align}
    \int_0^T\inn{\xil_t,\d\etal_t}&=\prob\,{\text-}\lim_{\|\Pi\|\to0}\sum_{i=0}^{n-1}\inn{\xil_{t_{i-1}},\etal_{t_i}-\etal_{t_{i-1}}}\nonumber\\
    &=\prob\,{\text-}\lim_{\|\Pi\|\to0}\sum_{i=0}^{n-1}\inn{\xi_{T-t_{i-1}},\eta_{T-t_i}-\eta_{T-t_{i-1}}}\nonumber\\  
    &=\prob\,{\text-}\lim_{\|\Pi\|\to0}-\sum_{i=0}^{n-1}\inn{\xi_{T-t_{i-1}},\eta_{T-t_{i-1}}-\eta_{T-t_i}}\nonumber\\  
    &=-\int_0^T\inn{\xi_t,*\d\eta_t}.\label{eq:bwd_int}
\end{align}
On the other hand, 
$$\int_0^T\xil_t\d t=\int_0^T\xi_{T-t}\d t=\int_0^T\xi_t\d t.$$
Therefore, the equality of RN derivative holds. Plugging in $\xi\gets X$, we have
$$\log\de{\P^X}{\P^Y}(X) =\log\de{\mu}{\nu}(X_T)+\frac{1}{\sigma}\int_0^T\inn{a_t(X_t)-b_t(X_t),*\d\Bl_t}+\frac{1}{2\sigma^2}\int_0^T\|a_t(X_t)-b_t(X_t)\|^2\d t.$$
To obtain the KL divergence, it suffices to show the expectation of the second term is zero. Let 
$$M_t:=\int_t^T\inn{a_r(X_r)-b_r(X_r),*\d\Bl_r},~t\in[0,T].$$
By \cref{eq:bwd_int}, we have 
$$\Ml_t=-\int_0^t\inn{\al_r(\Xl_r)-\bl_r(\Xl_r),\d B_r}.$$
Since $\d\Xl_t =-\al_t(\Xl_t)\d t+\sigma\d B_t$, we conclude that $\Ml_t$ is a (forward) martingale, and thus $M$ is a \emph{backward} martingale with $\E M_t=\E\Ml_{T-t}=0$.

\item We present a formal proof by considering the process $\d Z_t=\sigma\d B_t$ and $Z_0\sim\lambda$, the Lebesgue measure. As a result, formally $Z_t\sim\lambda$ for all $t$, so it can also be written as $\d Z_t=\sigma\d\Bl_t$, $Z_T\sim\lambda$. The result follows by applying \cref{lem:rn_path_measure} to $X$ and $Z$ and \textbf{1.} to $Y$ and $Z$.
\end{enumerate}
\end{proof}

\subsection{Optimal Transport: Wasserstein Geometry and Metric Derivative}
\label{app:pre_ot}
We provide a concise overview of essential concepts in optimal transport (OT) that will be used in the paper. See standard textbooks \citep{villani2021topics,villani2008optimal,ambrosio2008gradient,ambrosio2021lectures} for details. 

For two probability measures $\mu,\nu$ on $\R^d$ with finite second-order moments (i.e., $\E_\mu\|\cdot\|^2,\E_\nu\|\cdot\|^2<\infty$), the \textbf{Wasserstein-2 ($\textbf{W}_\textbf{2}$) distance} between $\mu$ and $\nu$ is defined as $\w_2(\mu,\nu)=\inf_{\gamma\in\Pi(\mu,\nu)}\ro{\int\|x-y\|^2\gamma(\d x,\d y)}^{1/2}$, where $\Pi(\mu,\nu)$ is the set of all couplings of $(\mu,\nu)$. %
The Brenier's theorem states that when $\mu$ has a Lebesgue density, then there exists a unique coupling $\ro{\id\times T_{\mu\to\nu}}_\sharp\mu$ that reaches the infimum. Here, $\sharp$ stands for the push-forward of a measure (defined by $T_\sharp\mu(\cdot)=\mu(\{\omega:T(\omega)\in\cdot\})$), and $T_{\mu\to\nu}$ is known as the \textbf{OT map} from $\mu$ to $\nu$, which can be written as the gradient of a convex function.

Given a vector field $v=(v_t)_{t\in[a,b]}$ and a curve of probability measures $\rho=(\rho_t)_{t\in[a,b]}$ with finite second-order moment on $\R^d$, we say that $v$ \textbf{generates} $\rho$ if the continuity equation $\partial_t\rho_t+\nabla\cdot(\rho_tv_t)=0$, $t\in[a,b]$ holds in the weak sense. The \textbf{metric derivative} of $\rho$ at $t\in[a,b]$ is defined as
$$|\dot\rho|_t:=\lim_{\delta\to0}\frac{\w_2(\rho_{t+\delta},\rho_t)}{|\delta|},$$
which can be interpreted as the speed of this curve. We say $\rho$ is \textbf{absolutely continuous} if $|\dot\rho|_t$ exists and is finite for Lebesgue-a.e. $t\in[a,b]$. The metric derivative and the continuity equation are related through the following fact \citep[Thm. 8.3.1 \& Prop. 8.4.5]{ambrosio2008gradient}:
\begin{lemma}
    For an absolutely continuous curve of probability measures $(\rho_t)_{t\in[a,b]}$, any vector field $(v_t)_{t\in[a,b]}$ that generates $(\rho_t)_{t\in[a,b]}$ satisfies $|\dot\rho|_t\le\|v_t\|_{L^2(\rho_t)}$ for Lebesgue-a.e. $t\in[a,b]$. Moreover, there exists an a.s. unique vector field $(v^*_t\in L^2(\rho_t))_{t\in[a,b]}$ that generates $(\rho_t)_{t\in[a,b]}$ and satisfies $|\dot\rho|_t=\|v^*_t\|_{L^2(\rho_t)}$ for Lebesgue-a.e. $t\in[a,b]$, which is $v^*_t=\lim_{\delta\to0}\frac{T_{\rho_t\to\rho_{t+\delta}}-\id}{\delta}$. %
    \label{lem:metric}
\end{lemma}

Finally, we define the \textbf{action} of an absolutely continuous curve of probability measures $(\rho_t)_{t\in[a,b]}$ as $\int_a^b|\dot\rho|_t^2\d t$, which plays a key role in characterizing the efficiency of a curve for normalizing constant estimation. For basic properties of the action and its relation to isoperimetric inequalities such as log-Sobolev and Poincar\'e inequalities (see definitions below), we refer the reader to \citet[Lem. 3 \& Ex. 1]{guo2025provable}.

\begin{definition}[Isoperimetric inequalities]
    A probability measure $\pi$ on $\R^d$ satisfies a \textbf{Poincar\'e inequality (PI)} with constant $C$, or $C$-PI, if for all $f\in C_c^\infty(\R^d)$,
    $$\var_\pi f\le C\E_\pi\|\nabla f\|^2.$$
    It satisfies a \textbf{log-Sobolev inequality (LSI)} with constant $C$, or $C$-LSI, if for all $0\not\equiv f\in C_c^\infty(\R^d)$,
    $$\E_\pi f^2\log\frac{f^2}{\E_\pi f^2}\le2C\E_\pi\|\nabla f\|^2.$$
    Furthermore, $\alpha$-strong-log-concavity implies $\frac{1}{\alpha}$-LSI, and $C$-LSI implies $C$-PI \citep{bakry2014analysis}.
    \label{def:iso}
\end{definition}

\section{Pseudo-codes of the Algorithms}
\label{app:algs}
See \cref{alg:ais,alg:rds} for the detailed implementation of the AIS and RDS algorithms, respectively.

\begin{algorithm}[ht]

    \caption{Normalizing constant estimation via AIS.}
    \SetAlgoLined
    \SetCommentSty{emph}
    \KwIn{The target distribution $\pi\propto\e^{-V}$, smoothness parameter $\beta$, total time $T$; 
    \textbf{TI} number of intermediate distributions $K$, annealing schedule $\lambda_0>...>\lambda_K=0$, number of particles $N$; 
    \textbf{AIS} steps $M$, annealing schedule $\lambda(\cdot)$ with $\lambda(0)=2\beta$, time points $0=\theta_0<...<\theta_M=1$.
    }

    \KwOut{$\Zh$, an estimation of $Z=\int_{\R^d}\e^{-V(x)}\d x$.}

    \tcp{Phase 1: estimate $Z_0$ via \textbf{TI}.}

    Define $V_0:=V+\beta\|\cdot\|^2$, $\rho_k:\propto\exp\ro{-V_0-\frac{\lambda_k}{2}\|\cdot\|^2}$, and $g_k:=\exp\ro{\frac{\lambda_k-\lambda_{k+1}}{2}\|\cdot\|^2}$, for $k\in\sqd{0,K-1}$\;

    Initialize $\Zh_0\gets\exp\ro{-V_0(0)+\frac{\|\nabla V_0(0)\|^2}{2(3\beta+\lambda_0)}}\ro{\frac{2\pi}{3\beta+\lambda_0}}^\frac{d}{2}$\;

    \For{$k=0$ \KwTo $K-1$}{
    Obtain $N$ i.i.d. approximate samples $x_1^{(k)},...,x_N^{(k)}$ from $\rho_k$ (e.g., using LMC or proximal sampler)\;

    Update $\Zh_0\gets\ro{\frac{1}{N}\sum_{n=1}^{N}g_k(X_n^{(k)})}\Zh_0$\;
    }

    \tcp{Phase 2: estimate $Z$ via \textbf{AIS}.}

    Approximately sample $x_0$ from $\pi_0$ (e.g., using LMC or proximal sampler)\;

    Initialize $W\gets-\frac{1}{2}(\lambda(\theta_0)-\lambda(\theta_1))\|x_0\|^2$\;

    \For{$\l=1$ \KwTo $M-1$}{
    Sample an independent $\xi\sim\n{0,I_d}$\;

    Define $\varLambda(t):=\int_0^t\lambda\ro{\theta_{\l-1}+\frac{\tau}{T_\l}(\theta_\l-\theta_{\l-1})}\d\tau$, where $T_\l:=T(\theta_\l-\theta_{\l-1})$\;

    Update $x_{\l}\gets\e^{-\varLambda(T_\l)}x_{\l-1}-\ro{\int_0^{T_\l}\e^{-(\varLambda(T_\l)-\varLambda(t))}\d t}\nabla V(x_{\l-1})+\ro{2\int_0^{T_\l}\e^{-2(\varLambda(T_\l)-\varLambda(t))}\d t}^\frac{1}{2}\xi$\tcp*{see \cref{lem:ais_ker_fh_update} for the derivation.}

    Update $W\gets W-\frac{1}{2}(\lambda(\theta_\l)-\lambda(\theta_{\l+1}))\|x_\l\|^2$\;
    }

    \Return{$\Zh=\Zh_0\e^{-W}$}

    \label{alg:ais}
\end{algorithm}

\begin{algorithm}[ht]%

    \caption{Normalizing constant estimation via RDS.}
    \SetAlgoLined
    \SetCommentSty{emph}
    \KwIn{The target distribution $\pi\propto\e^{-V}$, total time duration $T$, early stopping time $\delta\ge0$, time points $0=t_0<t_1<...<t_N=T-\delta$;
    non-parametric score estimator $s_t(\cdot)\approx\nabla\log\pib_t(\cdot)$ based on \{RDMC, RSDMC, ZODMC, or SNDMC\} algorithms.
    }
    \KwOut{$\Zh$, an estimation of $Z=\int_{\R^d}\e^{-V(x)}\d x$.}

    Sample $X_0\sim\n{0,I}$, and initialize $W:=-\frac{\|X_0\|^2}{2}-\frac{d}{2}\log2\pi$\;

    \For{$k=0$ \KwTo $N-1$}{
    Sample an independent pair of 
    $\begin{pmatrix}\xi_1\\\xi_2\end{pmatrix}\sim\n{0,\begin{pmatrix}1&\rho_k\\\rho_k&1\end{pmatrix}\otimes I}$, where the correlation is $\rho_k=\frac{\sqrt{2}(\e^{t_{k+1}-t_k}-1)}{\sqrt{(\e^{2(t_{k+1}-t_k)}-1)(t_{k+1}-t_k)}}$, and $\otimes$ stands for the Kronecker product\tcp*{this can be done by sampling $\xi_1,\widetilde{\xi}_2\iid\n{0,I}$ and setting $\xi_2=\rho_k\xi_1+\sqrt{1-\rho_k^2}\widetilde{\xi_2}$}

    Update $X_{t_{k+1}}\gets\e^{t_{k+1}-t_k}X_{t_k}+2(\e^{t_{k+1}-t_k}-1)s_{T-t_k}(X_{t_k})+\sqrt{\e^{2(t_{k+1}-t_k)}-1}\xi_1$\tcp*{see \cref{lem:rds_update} for the derivation}

    Update $W\gets W+(t_{k+1}-t_k)\|s_{T-t_k}(X_{t_k})\|^2+\sqrt{2(t_{k+1}-t_k)}\inn{s_{T-t_k}(X_{t_k}),\xi_2}$\tcp*{see \cref{lem:rds_update} for the derivation}
    }

    Update $W\gets W+V(X_{t_N})-(T-\delta)d$\;
    
    \Return{$\Zh=\e^{-W}$.}

    \label{alg:rds}
\end{algorithm}

\section{Proofs for \cref{sec:jar}}
\subsection{A Complete Proof of \cref{thm:jar}}
\label{prf:thm:jar}

\begin{proof}
By Girsanov's theorem (\cref{lem:rn_path_measure_contd}), we have
$$\log\de{\Pr}{\Pl}(\xi)=\log\frac{\pit_0(\xi_0)}{\pit_T(\xi_T)}+\frac{1}{2}\int_0^T(\inn{\nabla\log\pit_t(\xi_t),\d\xi_t}+\inn{\nabla\log\pit_t(\xi_t),*\d\xi_t}).$$
We first prove the following result \citep[Eq. (15)]{vargas2024transport}: if $\d x_t=a_t(x_t)\d t+\sqrt{2}\d B_t$, then 
$$\int_0^T\inn{a_t(x_t),*\d x_t}=\int_0^T\inn{a_t(x_t),\d x_t}+2\int_0^T\tr\nabla a_t(X_t)\d t.$$

\begin{proof}
    Due to \cref{eq:bsi_ito}, it suffices to calculate $\sq{a(X),X}_T$. By It\^o's formula, we have
    $$\d a_t(x_t)=(\partial_ta_t(x_t)+\inn{\nabla a_t(x_t),a_t(x_t)}+\Delta a_t(x_t))\d t+\sqrt{2}\nabla a_t\d B_t,$$
    and hence
    $$\sq{a(X),X}_T=\sq{\int_0^\cdot\sqrt{2}\nabla a_t(x_t)\d B_t,\int_0^\cdot\sqrt{2}\d B_t}_T=\tr\int_0^T2\nabla a_t(x_t)\d t.$$
\end{proof}

Therefore, for $X\sim\Pr$, we have
$$\log\de{\Pr}{\Pl}(X)=\log\frac{\pit_0(X_0)}{\pit_T(X_T)}+\int_0^T(\inn{\nabla\log\pit_t(X_t),\d X_t}+\Delta\log\pit_t(X_t)\d t).$$

On the other hand, by It\^o's formula, we have
$$\d\log\pit_t(X_t)=\partial_t\log\pit_t(X_t)+\inn{\nabla\log\pit_t(X_t),\d X_t}+\Delta\log\pit_t(X_t)\d t.$$
Taking the integral, we immediately obtain \cref{eq:jar_rn}, and the proof is complete.
\end{proof}

\subsection{Proof of \cref{thm:jar_complexity}}
\begin{proof}
\label{prf:thm:jar_complexity}
The proof builds on the techniques developed in \citet[Thm. 1]{guo2025provable}, yet with new components including backward SDE and the corresponding version of the Girsanov's theorem.

We define $\P$ as the path measure of the following SDE:
\begin{equation}
    \d X_t=(\nabla\log\pit_t+v_t)(X_t)\d t+\sqrt{2}\d B_t,~t\in[0,T];~X_0\sim\pit_0,
    \label{eq:jar_p}
\end{equation}
where the vector field $(v_t)_{t\in[0,T]}$ is chosen such that $X_t\sim\pit_t$ under $\P$ for all $t\in[0,T]$. According to the Fokker-Planck equation,\footnote{We assume the existence of a unique curve of probability measures solving the Fokker-Planck equation given the drift and diffusion terms, guaranteed under mild regularity conditions \citep{lebris2008existence}.} $(v_t)_{t\in[0,T]}$ must satisfy the PDE
$$\partial_t\pit_t=-\nabla\cdot(\pit_t(\nabla\log\pit_t+v_t))+\Delta\pit_t=-\nabla\cdot(\pit_tv_t),~t\in[0,T],$$
which means that $(v_t)_{t\in[0,T]}$ generates $(\pit_t)_{t\in[0,T]}$. The Nelson's relation (\cref{lem:nelson}) implies an equivalent definition of $\P$ as the path measure of the following backward SDE with an independent time-reversed BM $(\Bl_t)_{t\in[0,T]}$:
$$\d X_t=(-\nabla\log\pit_t+v_t)(X_t)\d t+\sqrt{2}\d\Bl_t,~t\in[0,T];~X_T\sim\pit_T.$$

Now we bound the probability of $\varepsilon$ relative error:
\begin{align}
    \prob\ro{\abs{\frac{\Zh}{Z}-1}\ge\varepsilon} & =\Pr\ro{\abs{\frac{\e^{-W}}{\e^{-\Delta F}}-1}\ge\varepsilon} =\Pr\ro{\abs{\de{\Pl}{\Pr}-1}\ge\varepsilon}      \nonumber\\
                                                &\le\frac{1}{\varepsilon}\E_{\Pr}{\abs{\de{\Pl}{\Pr}-1}}=\frac{2}{\varepsilon}\tv(\Pl,\Pr)\nonumber\\
                                                &\le\frac{2}{\varepsilon}(\tv(\P,\Pr)+\tv(\P,\Pl))\nonumber\\
                                                &\le\frac{\sqrt{2}}{\varepsilon}\ro{\sqrt{\kl(\P\|\Pr)}+\sqrt{\kl(\P\|\Pl)}}.\label{eq:jar_acc_bound}
\end{align}
In the second line above, we apply Markov inequality along with an equivalent definition of the TV distance: $\tv(\mu,\nu)=\frac{1}{2}\int\abs{\de{\mu}{\lambda}-\de{\nu}{\lambda}}\d\lambda$, where $\lambda$ is a measure that dominates both $\mu$ and $\nu$. The third line follows from the triangle inequality for TV distance, while the final line is a consequence of Pinsker's inequality $\kl\ge2\tv^2$.

By Girsanov's theorem (\cref{lem:rn_path_measure,lem:rn_path_measure_contd}), it is straightforward to see that
$$\kl(\P\|\Pl)=\kl(\P\|\Pr)=\frac{1}{4}\E_{\P}\int_{0}^{T}\|v_t(X_t)\|^2\d t=\frac{1}{4}\int_{0}^{T}\|v_t\|^2_{L^2(\pit_t)}\d t.$$
Leveraging the relation between metric derivative and continuity equation (\cref{lem:metric}), among all vector fields $(v_t)_{t\in[0,T]}$ that generate $(\pit_t)_{t\in[0,T]}$, we can choose the one that minimizes $\|v_t\|_{L^2(\pit_t)}$, thereby making $\|v_t\|_{L^2(\pit_t)}=|\dot\pit|_t$, the metric derivative. With the reparameterization $\pit_t=\pi_{t/T}$, we have the following relation by chain rule:
$$|\dot\pit|_t=\lim_{\delta\to0}\frac{\w_2(\pit_{t+\delta},\pit_t)}{|\delta|}=\lim_{\delta\to0}\frac{\w_2(\pi_{(t+\delta)/T},\pi_{t/T})}{T|\delta/T|}=\frac{1}{T}|\dot\pi|_{t/T}.$$
Employing the change-of-variable formula leads to
$$\kl(\P\|\Pl)=\kl(\P\|\Pr)=\frac{1}{4}\int_0^T|\dot\pit|_t^2\d t=\frac{1}{4T}\int_0^1|\dot\pi|_\theta^2\d\theta=\frac{\cA}{4T}.$$
Therefore, it suffices to choose $T=\frac{32\cA}{\varepsilon^2}$ to make the r.h.s. of \cref{eq:jar_acc_bound} less than $\frac{1}{4}$.
\end{proof}

\section{Proof of \cref{thm:ais_complexity}}
\label{prf:thm:ais_complexity}
With the forward and backward path measures $\Pr$ and $\Pl$ defined in \cref{eq:ais_pr,eq:ais_pl}, we further define the reference path measure
\begin{equation}
    \P(x_{0:M})=\pi_0(x_0)\prod_{\l=1}^{M}F^*_\l(x_{\l-1},x_\l),
    \label{eq:ais_p}
\end{equation}
where $F^*_\l$ can be an arbitrary transition kernel transporting $\pi_{\theta_{\l-1}}$ to $\pi_{\theta_\l}$, i.e., it satisfies 
$$\pi_{\theta_\l}(y)=\int F^*_\l(x,y)\pi_{\theta_{\l-1}}(x)\d x,~\forall y\in\R^d\implies x_\l\sim\pi_{\theta_\l},~\forall\l\in\sqd{0,M}.$$
Define the backward transition kernel of $F^*_\l$ as
$$B^*_\l(x,x')=\frac{\pi_{\theta_{\l-1}}(x')}{\pi_{\theta_\l}(x)}F^*_\l(x',x),~\l\in\sqd{1,M},$$
which transports $\pi_{\theta_\l}$ to $\pi_{\theta_{\l-1}}$. Equivalently, we can write
$$\P(x_{0:M})=\pi_1(x_{M})\prod_{\l=1}^{M}B^*_\l(x_\l,x_{\l-1}).$$
Straightforward calculations yield
\begin{align}
    \kl(\P\|\Pr) & =\sum_{\l=1}^{M}\E_{\pi_{\theta_{\l-1}}(x_{\l-1})}{\kl(F^*_\l(x_{\l-1},\cdot)\|F_\l(x_{\l-1},\cdot))},\nonumber                          \\
    \kl(\P\|\Pl) & =\sum_{\l=1}^{M}\E_{\pi_{\theta_\l}(x_\l)}{\kl(B^*_\l(x_\l,\cdot)\|B_\l(x_\l,\cdot))}  \nonumber                                         \\
                 & =\sum_{\l=1}^{M}\kl(\pi_{\theta_{\l-1}}(x_{\l-1})F^*_\l(x_{\l-1},x_\l)\|\pi_{\theta_\l}(x_{\l-1})F_\l(x_{\l-1},x_\l))\label{eq:kl_p_pl} \\
                 & =\kl(\P\|\Pr)+\sum_{\l=1}^{M}\kl(\pi_{\theta_{\l-1}}\|\pi_{\theta_\l}).\label{eq:kl_p_pl_ge_kl_p_pr}
\end{align}

Also, recall that the sampling path measure $\Phr$ is defined in \cref{eq:ais_phr} starts at $\pih_0$, the distribution of an approximate sample of $\pi_0$. For convenience, we define the following path measure, which differs from $\Phr$ only from the initial distribution:
\begin{equation}
    \Pbr(x_{0:M})=\pi_0(x_0)\prod_{\l=1}^{M}\Fh_\l(x_{\l-1},x_\l).
    \label{eq:ais_ptr}
\end{equation}

Equipped with these definitions, we first prove a lemma about a necessary condition for the estimator $\Zh$ to satisfy the desired accuracy \cref{eq:acc_whp}.
\begin{lemma}
    \label{lem:disc_fram}
    Define the estimator $\Zh:=\Zh_0\e^{-W(x_{0:M})}$, where $x_{0:M}\sim\Phr$, and $\Zh_0$ is independent of $x_{0:M}$. To make $\Zh$ satisfy the criterion \cref{eq:acc_whp}, it suffices to meet the following four conditions:
    \begin{align}
         & \prob\ro{\abs{\frac{\Zh_0}{Z_0}-1}\ge\frac{\varepsilon}{8}}\le\frac{1}{8}\label{eq:ais_main_z0}, \\
         & \tv(\pih_0,\pi_0)\lesssim1,\label{eq:ais_main_pi0}                                           \\
         & \kl(\P\|\Pl)\lesssim\varepsilon^2\label{eq:ais_main_p_pl},                                    \\
         & \kl(\P\|\Pbr)\lesssim1.\label{eq:ais_main_p_ptr}
    \end{align}
\end{lemma}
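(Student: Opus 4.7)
The plan is to split the relative error $\Zh/Z - 1$ into the two independent sources specified by the lemma and control each one separately using a union bound, subjecting the total failure probability of $1/4$ to a $1/8 + 1/8$ split. Writing $A := \Zh_0/Z_0$ and $R := \e^{-W(x_{0:M}) + \Delta F}$ with $x_{0:M}\sim\Phr$, we have $\Zh/Z = AR$, and the algebraic identity $AR-1 = (A-1) + (R-1) + (A-1)(R-1)$ gives
\[
|\Zh/Z - 1| \le |A-1| + |R-1| + |A-1|\,|R-1|.
\]
So for $\varepsilon$ small it suffices that $|A-1|\le\varepsilon/8$ and $|R-1|\le\varepsilon/2$. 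The first event has failure probability at most $1/8$ by condition \cref{eq:ais_main_z0}; all the work goes into showing $\prob_{\Phr}(|R - 1|>\varepsilon/2)\le 1/8$.

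The key observation is the functional identity $R = \de{\Pl}{\Pr}$ used in the proof of \cref{thm:ais}. This immediately gives $\E_{\Pr}R = 1$ and $\E_{\Pr}|R-1| = 2\tv(\Pl,\Pr)$, so by Markov,
\[
\prob_{\Pr}\!\left(|R-1|>\varepsilon/2\right) \le \frac{4\tv(\Pl,\Pr)}{\varepsilon}.
\]
To pass from $\Pr$ to $\Phr$, I would use the elementary inequality $\prob_{\Phr}(E) \le \prob_{\Pr}(E) + \tv(\Phr,\Pr)$ valid for every event $E$. This yields
\[
\prob_{\Phr}\!\left(|R-1|>\varepsilon/2\right) \le \frac{4\tv(\Pl,\Pr)}{\varepsilon} + \tv(\Phr,\Pr).
\]

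It remains to bound the two TV distances via Pinsker's inequality and a triangle inequality routed through the reference measure $\P$ (and, for $\Phr$, through the intermediate $\Pbr$):
\begin{align*}
\tv(\Pl,\Pr) &\le \tv(\Pl,\P) + \tv(\P,\Pr) \le \sqrt{\kl(\P\|\Pl)/2} + \sqrt{\kl(\P\|\Pr)/2},\\
\tv(\Phr,\Pr) &\le \tv(\Phr,\Pbr) + \tv(\Pbr,\P) + \tv(\P,\Pr) \le \tv(\pih_0,\pi_0) + \sqrt{\kl(\P\|\Pbr)/2} + \sqrt{\kl(\P\|\Pr)/2}.
\end{align*}
The bound $\tv(\Phr,\Pbr)\le\tv(\pih_0,\pi_0)$ is the data-processing inequality, using that $\Phr$ and $\Pbr$ differ only through their initial distributions. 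Crucially, \cref{eq:kl_p_pl_ge_kl_p_pr} gives $\kl(\P\|\Pr)\le\kl(\P\|\Pl)$, so both $\sqrt{\kl(\P\|\Pl)/2}$ and $\sqrt{\kl(\P\|\Pr)/2}$ are $O(\varepsilon)$ under condition \cref{eq:ais_main_p_pl}, yielding $\tv(\Pl,\Pr)\lesssim\varepsilon$ and hence $4\tv(\Pl,\Pr)/\varepsilon\lesssim 1$. Meanwhile $\tv(\Phr,\Pr)$ is controlled by conditions \cref{eq:ais_main_pi0,eq:ais_main_p_ptr,eq:ais_main_p_pl}.

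The main obstacle is purely bookkeeping: the informal constants hidden in $\lesssim 1$ in conditions \cref{eq:ais_main_pi0,eq:ais_main_p_ptr} must be interpreted as \emph{sufficiently small} absolute constants, chosen so that the three summands above — $4\tv(\Pl,\Pr)/\varepsilon$, $\tv(\pih_0,\pi_0)$, and $\sqrt{\kl(\P\|\Pbr)/2}$ — each fit within a budget of $1/24$ (or similar), summing to at most $1/8$. One subtlety worth checking is that $R$ is defined as a deterministic functional of the trajectory, so $\prob_{\Phr}(|R-1|>\varepsilon/2)$ makes sense regardless of any absolute-continuity relation between $\Phr$ and $\Pr$; the TV passage does not need $\Phr\ll\Pr$. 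With these pieces in place, combining the two failure events via the union bound delivers the $3/4$ success probability demanded by \cref{eq:acc_whp}.
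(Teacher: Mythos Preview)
Your proposal is correct and follows essentially the same route as the paper: pass from $\Phr$ to $\Pr$ via a TV bound, then control $\Pr(|\de{\Pl}{\Pr}-1|\ge\cdot)$ by Markov, the TV triangle inequality through $\P$, Pinsker, and the inequality $\kl(\P\|\Pr)\le\kl(\P\|\Pl)$ from \cref{eq:kl_p_pl_ge_kl_p_pr}. The only cosmetic difference is that the paper splits the product $\Zh/Z=(\Zh_0/Z_0)(\e^{-W}/\e^{-\Delta F})$ by passing to logarithms via \cref{lem:logat1}, whereas you use the elementary identity $AR-1=(A-1)+(R-1)+(A-1)(R-1)$ directly; both are valid and lead to the same bookkeeping.
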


\begin{proof}
    Recall that $Z=Z_0\e^{-\Delta F}$. Using \cref{lem:logat1}, we have
    \begin{align*}
        \prob\ro{\abs{\frac{\Zh}{Z}-1}\ge\varepsilon} & \le\prob\ro{\abs{\log\frac{\Zh}{Z}}\ge\frac{\varepsilon}{2}}=\prob_{x_{0:M}\sim\Phr}\ro{\abs{\log\frac{\Zh_0}{Z_0}+\log\frac{\e^{-W(x_{0:M})}}{\e^{-\Delta F}}}\ge\frac{\varepsilon}{2}} \\
                                                   & \le\prob\ro{\abs{\log\frac{\Zh_0}{Z_0}}\ge\frac{\varepsilon}{4}}+\Phr\ro{\abs{\log\frac{\e^{-W}}{\e^{-\Delta F}}}\ge\frac{\varepsilon}{4}}                   \\
                                                   & \le\prob\ro{\abs{\frac{\Zh_0}{Z_0}-1}\ge\frac{\varepsilon}{8}}+\Phr\ro{\abs{\frac{\e^{-W}}{\e^{-\Delta F}}-1}\ge\frac{\varepsilon}{8}}.
    \end{align*}
    The first term is $\le\frac{1}{8}$ if \cref{eq:ais_main_z0} holds. To bound the second term, using the definition of TV distance and the triangle inequality, we have
    \begin{align*}
        &\Phr\ro{\abs{\frac{\e^{-W}}{\e^{-\Delta F}}-1}\ge\frac{\varepsilon}{8}} \\
        & \le\tv(\Phr,\Pr)+\Pr\ro{\abs{\frac{\e^{-W}}{\e^{-\Delta F}}-1}\ge\frac{\varepsilon}{8}}                              \\
                                                                                    & \le\tv(\Phr,\Pbr)+\tv(\Pbr,\P)+\tv(\P,\Pr)+\Pr\ro{\abs{\de{\Pl}{\Pr}-1}\ge\frac{\varepsilon}{8}}.
    \end{align*}
    Recall that by definition \cref{eq:ais_phr,eq:ais_ptr}, the distributions of $x_{1:M}$ conditional on $x_0$ are the same under $\Phr$ and $\Pbr$. Hence, $\tv(\Phr,\Pbr)=\tv(\pih_0,\pi_0)$. Applying Pinsker's inequality and leveraging \cref{eq:jar_acc_bound}, we have
    \begin{align*}
        & \Phr\ro{\abs{\frac{\e^{-W}}{\e^{-\Delta F}}-1}\ge\frac{\varepsilon}{8}} \\
                                                                                    & \lesssim\tv(\pih_0,\pi_0)+\sqrt{\kl(\P\|\Pbr)}+\sqrt{\kl(\P\|\Pr)}+\frac{\sqrt{\kl(\P\|\Pr)}+\sqrt{\kl(\P\|\Pl)}}{\varepsilon}.
    \end{align*}
    Note that from \cref{eq:kl_p_pl_ge_kl_p_pr} we know that $\kl(\P\|\Pr)\le\kl(\P\|\Pl)$, so if \cref{eq:ais_main_p_pl,eq:ais_main_p_ptr,eq:ais_main_pi0} hold up to some small enough absolute constants, we can achieve $\Phr\ro{\abs{\frac{\e^{-W}}{\e^{-\Delta F}}-1}\ge\frac{\varepsilon}{8}}\le\frac{1}{8}$, and therefore $\prob\ro{\abs{\frac{\Zh}{Z}-1}\ge\varepsilon}\le\frac{1}{4}$. %
\end{proof}

\begin{remark}
    As the TV distance is always upper bounded by $1$, one can in fact write \cref{eq:ais_main_pi0,eq:ais_main_p_pl,eq:ais_main_p_ptr} in a more precise way: $\tv(\pih_0,\pi_0)\le2^{-5}$, $\kl(\P\|\Pl)\le2^{-13}\varepsilon^2$, and $\kl(\P\|\Pbr)\le2^{-8}$.
\end{remark}

Next, we study how to satisfy the conditions in \cref{eq:ais_main_p_pl,eq:ais_main_p_ptr} while minimizing oracle complexity. Given that we already have an approximate sample from $\pi_0$ and an accurate estimate of $Z_0$, we proceed to the next step of the AIS algorithm. Since each transition kernel requires one call to the oracle of $\nabla V$, and by plugging in $f_\theta\gets V+\frac{\lambda(\theta)}{2}\|\cdot\|^2$ in AIS (\cref{thm:ais}), the work function $W(x_{0:M})$ is independent of $V$, it follows that the remaining oracle complexity is $M$. The result is formalized in the following lemma.

\begin{lemma}
    To minimize the oracle complexity, it suffices to find the minimal $M$ such that there exists a sequence $0=\theta_0<\theta_1<...<\theta_M=1$ satisfying the following three constraints:
    \begin{align}
        \sum_{\l=1}^M\int_{\theta_{\l-1}}^{\theta_\l}(\lambda(\theta)-\lambda(\theta_\l))^2\d\theta & \lesssim\frac{\varepsilon^4}{m^2\cA},\label{eq:ais_cond_theta_a}        \\
        \sum_{\l=1}^{M}(\theta_\l-\theta_{\l-1})^2                                                                                                                                                                                                                      & \lesssim\frac{\varepsilon^4}{d\beta^2\cA^2},\label{eq:ais_cond_theta_b} \\
        \max_{\l\in\sqd{1,M}}\ro{\theta_\l-\theta_{\l-1}}                                                                                                                                                                                                               & \lesssim\frac{\varepsilon^2}{\beta\cA}.\label{eq:ais_cond_theta_c}
    \end{align}
\end{lemma}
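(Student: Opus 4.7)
The plan is to unpack the bounds in \cref{eq:ais_main_p_pl,eq:ais_main_p_ptr} into the three displayed constraints by carefully choosing the reference kernels $F^*_\l$ in $\P$ (\cref{eq:ais_p}) and applying the continuous-time Girsanov strategy from \cref{prf:thm:jar_complexity} on each sub-interval. First, I introduce a free total time $T=\sum_\l T_\l$ with $T_\l:=T(\theta_\l-\theta_{\l-1})$, and define $F^*_\l$ as the law at time $T_\l$ of the reparametrized ALD
\[
\mathrm{d}X_t=\nabla\log\pi_{\theta(t)}(X_t)\,\mathrm{d}t+\theta'(t)\,v^*_{\theta(t)}(X_t)\,\mathrm{d}t+\sqrt{2}\,\mathrm{d}B_t,\qquad\theta(t)=\theta_{\l-1}+\tfrac{t}{T_\l}(\theta_\l-\theta_{\l-1}),
\]
where $v^*$ is the optimal generator of $(\pi_\theta)$ from \cref{lem:metric}. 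By the Fokker-Planck equation this enforces $X_t\sim\pi_{\theta(t)}$, so $F^*_\l$ transports $\pi_{\theta_{\l-1}}$ to $\pi_{\theta_\l}$ as required for the reference measure $\P$ of \cref{eq:ais_p}.

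Next, I apply the chain rule for KL to get $\kl(\P\|\Pl)=\sum_\l\E_{\pi_{\theta_{\l-1}}}\kl(F^*_\l\|F_\l)$ and $\kl(\P\|\Pbr)=\sum_\l\E_{\pi_{\theta_{\l-1}}}\kl(F^*_\l\|\Fh_\l)$, and then bound each summand by the path-space KL via the data-processing inequality and \cref{lem:rn_path_measure}. Because both $\P$ and $\Pr$ are continuous-time LD driven SDEs, the drift difference in $\kl(F^*_\l\|F_\l)$ splits into the compensatory drift $\theta'(t)v^*_{\theta(t)}$ and the schedule mismatch $(\lambda(\theta(t))-\lambda(\theta_\l))X_t$. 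Taking expectation under $X_t\sim\pi_{\theta(t)}$ and using $\|v^*_\theta\|^2_{L^2(\pi_\theta)}=|\dot\pi|^2_\theta$ together with the moment bound $\E_{\pi_{\theta(t)}}\|\cdot\|^2\lesssim m^2$ (since $\pi_{\theta(t)}$ is flatter than $\pi$ in the $\|x\|^2$ direction), and reparametrizing $t\leftrightarrow\theta$, the two contributions sum across $\l$ to
\[
\kl(\P\|\Pl)\;\lesssim\;\frac{\cA}{T}+Tm^2\sum_{\l=1}^M\int_{\theta_{\l-1}}^{\theta_\l}(\lambda(\theta)-\lambda(\theta_\l))^2\,\mathrm{d}\theta.
\]
Setting $T\asymp\cA/\varepsilon^2$ makes the action term $O(\varepsilon^2)$, and requiring the second term to also be $O(\varepsilon^2)$ yields exactly constraint \cref{eq:ais_cond_theta_a}.

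For $\kl(\P\|\Pbr)$, the drift difference between $F^*_\l$ and $\Fh_\l$ is $\nabla V(X_0)-\nabla V(X_t)+\theta'(t)v^*_{\theta(t)}(X_t)$: the schedule mismatch vanishes since both kernels use $\lambda(\theta(t))$ (this is the reason for the exponential integrator in \cref{eq:ais_ker_fh}). The squared norm splits into the same action contribution $\lesssim\cA/T$ and a discretization term $\lesssim\beta^2\int_0^{T_\l}\E\|X_t-X_0\|^2\,\mathrm{d}t\lesssim\beta^2T_\l^2 d$, giving
\[
\kl(\P\|\Pbr)\;\lesssim\;\frac{\cA}{T}+\beta^2 d\sum_{\l=1}^MT_\l^2\;=\;\frac{\cA}{T}+\beta^2 d T^2\sum_{\l=1}^M(\theta_\l-\theta_{\l-1})^2.
\]
Demanding this to be $O(1)$ and substituting $T\asymp\cA/\varepsilon^2$ reproduces constraint \cref{eq:ais_cond_theta_b}. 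The uniform step-size bound \cref{eq:ais_cond_theta_c} enters when justifying the single-step moment estimate $\E\|X_t-X_0\|^2\lesssim td$ along the ALMC recursion: this requires $T_\l\lesssim 1/\beta$ (standard short-step LMC regime) so that the $\lambda X_t$ and $\nabla V$ contributions do not amplify the second moment, which translates into $\theta_\l-\theta_{\l-1}\lesssim\varepsilon^2/(\beta\cA)$.

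The main technical obstacle will be making the moment propagation rigorous: verifying that along $\Pbr$ the iterates $X_\l$ have $\E\|X_\l\|^2\lesssim m^2$ uniformly in $\l$ (so that $\E_{\pi_{\theta(t)}}\|X_t\|^2$ and $\E\|X_t-X_0\|^2\lesssim td$ can be invoked at every step), which is precisely what condition \cref{eq:ais_cond_theta_c} enables through a standard one-step contraction plus noise argument in the strongly-convex-plus-bounded-perturbation regime. Once the three constraints hold for any admissible schedule, the oracle count equals the number of ALMC steps $M$, so minimizing $M$ subject to them gives the complexity expression in \cref{eq:ais_complexity}.
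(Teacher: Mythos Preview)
Your chain-rule identity $\kl(\P\|\Pl)=\sum_\l\E_{\pi_{\theta_{\l-1}}}\kl(F^*_\l(x_{\l-1},\cdot)\|F_\l(x_{\l-1},\cdot))$ is incorrect: that formula computes $\kl(\P\|\Pr)$, not $\kl(\P\|\Pl)$. The backward path measure $\Pl$ is built from the reversed kernels $B_\l$ starting at $\pi_{\theta_M}$, so the correct decomposition (\cref{eq:kl_p_pl}) is $\kl(\P\|\Pl)=\sum_\l \kl(\pi_{\theta_{\l-1}}F^*_\l\|\pi_{\theta_\l}F_\l)$, which exceeds your expression by $\sum_\l\kl(\pi_{\theta_{\l-1}}\|\pi_{\theta_\l})$ (cf.\ \cref{eq:kl_p_pl_ge_kl_p_pr}). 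Since \cref{lem:disc_fram} requires control of $\kl(\P\|\Pl)$ and not merely of $\kl(\P\|\Pr)$, this is a genuine gap: without isoperimetry on the intermediate $\pi_\theta$ you have no direct handle on $\sum_\l\kl(\pi_{\theta_{\l-1}}\|\pi_{\theta_\l})$ through the three stated constraints. The paper circumvents this by bounding each $C_\l:=\kl(\pi_{\theta_{\l-1}}F^*_\l\|\pi_{\theta_\l}F_\l)$ via path measures written as \emph{backward} SDEs. Because the stationary LD path $\Qbf^\l$ and the compensated ALD path $\Pbf^\l$ both have marginal $\pi_{\theta_\l}$ at time $T_\l$, Girsanov on the backward representation yields $C_\l\le\frac14\int_0^{T_\l}\norm{u^\l_t-\nabla\log\frac{\mu^\l_t}{\pi_{\theta_\l}}}_{L^2(\mu^\l_t)}^2\d t$ with no boundary KL term; from there the splitting into the action contribution and the schedule-mismatch $(\lambda(\theta)-\lambda(\theta_\l))^2$ term proceeds as you outlined.

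A secondary issue concerns your justification of $\E\|X_t-X_0\|^2\lesssim td$: you frame this as moment propagation ``along $\Pbr$'', but the expectation in the Girsanov bound for $\kl(\P\|\Pbr)$ is under the reference $\Pbf^\l$, where $X_t\sim\mu^\l_t$ exactly and no iterate-level moment control on the discretized process is needed. The paper obtains $\E_{\Pbf^\l}\|X_t-X_0\|^2\lesssim T_\l\int_0^{T_\l}(\|\nabla\log\mu^\l_\tau\|^2_{L^2(\mu^\l_\tau)}+\|u^\l_\tau\|^2_{L^2(\mu^\l_\tau)})\d\tau+dT_\l$ directly from the SDE, bounds the score moment by $\|\nabla\log\pi_\theta\|^2_{L^2(\pi_\theta)}\le d(\beta+\lambda(\theta))$ via \cref{lem:2ordmomlogccv}, and then invokes $T_\l(\beta+\lambda)\lesssim 1$ --- this is precisely where \cref{eq:ais_cond_theta_c} enters --- to collapse everything to $d\beta^2 T_\l^2$.
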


\begin{proof}
    We break down the argument into two steps.
    
    \paragraph{Step 1.} We first consider \cref{eq:ais_main_p_pl}. 

    Note that when defining the reference path measure $\P$, the only requirement for the transition kernel $F_\l^*$ is that it should transport $\pi_{\theta_{\l-1}}$ to $\pi_{\theta_\l}$. Our aim is to find the ``optimal'' $F_\l^*$'s in order to minimize the sum of KL divergences, which can be viewed as a \emph{static Schr\"odinger bridge problem} \citep{leonard2014a,chen2016relation,chen2021stochastic}. By data-processing inequality,
    \begin{align*}
        C_\l:=\inf_{F_\l^*}\kl(\pi_{\theta_{\l-1}}(x_{\l-1})F^*_\l(x_{\l-1},x_\l)\|\pi_{\theta_\l}(x_{\l-1})F_\l(x_{\l-1},x_\l))\le & \inf_{\Pbf^\l}\kl(\Pbf^\l\|\Qbf^\l),
    \end{align*}
    where the infimum is taken among all path measures from $0$ to $T_\l$ with the marginal constraints $\Pbf^\l_0=\pi_{\theta_{\l-1}}$ and $\Pbf^\l_{T_\l}=\pi_{\theta_\l}$; $\Qbf^\l$ is the path measure of \cref{eq:ais_ker_f} (i.e., LD with target distribution $\pi_{\theta_\l}$) initialized at $X_0\sim\pi_{\theta_\l}$.

    For each $\l\in\sqd{1,M}$, define the following interpolation between $\pi_{\theta_{\l-1}}$ and $\pi_{\theta_\l}$:
    $$\mu^\l_t:=\pi_{\theta_{\l-1}+\frac{t}{T_\l}(\theta_\l-\theta_{\l-1})},~t\in[0,T_\l].$$

    Let $\Pbf^\l$ be the path measure of
    $$\d X_t=(\nabla\log\mu^\l_t+u^\l_t)(X_t)\d t+\sqrt{2}\d B_t,~t\in[0,T_\l];~X_0\sim\pi_{\theta_{\l-1}},$$
    where the vector field $(u^\l_t)_{t\in[0,T_\l]}$ is chosen such that $X_t\sim\mu^\l_t$ under $\Pbf^\l$, and in particular, the marginal distributions at $0$ and $T_\l$ are $\pi_{\theta_{\l-1}}$ and $\pi_{\theta_\l}$, respectively. By verifying the Fokker-Planck equation, the following PDE needs to be satisfied:
    $$\partial_t\mu^\l_t=-\nabla\cdot(\mu^\l_t(\nabla\log\mu^\l_t+u^\l_t))+\Delta\mu^\l_t=-\nabla\cdot(\mu^\l_tu^\l_t),~t\in[0,T_\l],$$
    meaning that $(u^\l_t)_{t\in[0,T_\l]}$ generates $(\mu^\l_t)_{t\in[0,T_\l]}$. Similar to the proof of JE (\cref{thm:jar_complexity}), using the relation between metric derivative and continuity equation (\cref{lem:metric}), among all vector fields generating $(\mu^\l_t)_{t\in[0,T_\l]}$, we choose $(u^\l_t)_{t\in[0,T_\l]}$ to be the a.s.-unique vector field that satisfies $\|u^\l_t\|_{L^2(\mu^\l_t)}=|\dot\mu^\l|_t$ for Lebesgue-a.e. $t\in[0,T_\l]$, which implies (using the chain rule)
    \begin{align*}
        &\int_0^{T_\l}\|u^\l_t\|_{L^2(\mu^\l_t)}^2\d t=\int_0^{T_\l}|\dot\mu^\l|_t^2\d t\\
        &=\int_0^{T_\l}\ro{\frac{\theta_\l-\theta_{\l-1}}{T_\l}|\dot\pi|_{\theta_{\l-1}+\frac{t}{T_\l}(\theta_\l-\theta_{\l-1})}}^2\d t=\frac{\theta_\l-\theta_{\l-1}}{T_\l}\int_{\theta_{\l-1}}^{\theta_\l}|\dot\pi|_\theta^2\d\theta.
    \end{align*}

    By \cref{lem:nelson}, we can equivalently write $\Pbf^\l$ as the path measure of the following backward SDE:
    $$\d X_t=(-\nabla\log\mu^\l_t+u^\l_t)(X_t)\d t+\sqrt{2}\d\Bl_t,~t\in[0,T_\l];~X_T\sim\pi_{\theta_{\l}}.$$

    Recall that $\Qbf^\l$ is the path measure of \cref{eq:ais_ker_f} initialized at $X_0\sim\pi_{\theta_\l}$, so $X_t\sim\pi_{\theta_\l}$ for all $t\in[0,T_\l]$. By Nelson's relation (\cref{lem:nelson}), we can equivalently write $\Qbf^\l$ as the path measure of
    $$\d X_t=-\nabla\log\pi_{\theta_\l}(X_t)\d t+\sqrt{2}\d\Bl_t,~t\in[0,T_\l];~X_{T_\l}\sim\pi_{\theta_\l}.$$

    The purpose of writing these two path measures in the way of backward SDEs is to avoid the extra term of the KL divergence between the initialization distributions $\pi_{\theta_{\l-1}}$ and $\pi_{\theta_\l}$ at time $0$ when calculating $\kl(\Pbf^\l\|\Qbf^\l)$. To see this, by Girsanov's theorem (\cref{lem:rn_path_measure_contd}), the triangle inequality, and the change-of-variable formula, we have
    \begin{align*}
        C_\l\le\kl(\Pbf^\l\|\Qbf^\l) & =\frac{1}{4}\int_{0}^{T_\l}\norm{u^\l_t-\nabla\log\frac{\mu_t^\l}{\pi_{\theta_\l}}}^2_{L^2(\mu^\l_t)}\d t                                                                                                                                                 \\
                                     & \lesssim\int_{0}^{T_\l}\|u^\l_t\|^2_{L^2(\mu^\l_t)}\d t+\int_{0}^{T_\l}\norm{\nabla\log\frac{\mu_t^\l}{\pi_{\theta_\l}}}^2_{L^2(\mu^\l_t)}\d t                                                                                                        \\
                                     & =\frac{\theta_\l-\theta_{\l-1}}{T_\l}\int_{\theta_{\l-1}}^{\theta_\l}|\dot\pi|_\theta^2\d\theta+\frac{T_\l}{\theta_\l-\theta_{\l-1}}\int_{\theta_{\l-1}}^{\theta_\l}\norm{\nabla\log\frac{\pi_\theta}{\pi_{\theta_\l}}}_{L^2(\pi_\theta)}^2\d\theta.
    \end{align*}
    \begin{remark}
        Our bound above is based on a specific interpolation between $\pi_{\theta_{\l-1}}$ and $\pi_{\theta_\l}$ along the curve $(\pi_\theta)_{\theta\in[\theta_{\l-1},\theta_\l]}$. This approach is inspired by, yet slightly differs from, \citet[Theorem 1.6]{conforti2021a}, where the interpolation is based on the Wasserstein geodesic. As we will demonstrate shortly, our formulation simplifies the analysis of the second term (the Fisher divergence), making it more straightforward to bound.
    \end{remark}
    
    Now, summing over all $\l\in\sqd{1,M}$, we can see that in order to ensure $\kl(\P\|\Pl)\le\sum_{\l=1}^MC_\l\le\varepsilon^2$, we only need the following two conditions to hold:
    \begin{align}
        \sum_{\l=1}^M\frac{\theta_\l-\theta_{\l-1}}{T_\l}\int_{\theta_{\l-1}}^{\theta_\l}|\dot\pi|_\theta^2\d\theta                                                  & \lesssim\varepsilon^2,\label{eq:ais_klppl_conda} \\
        \sum_{\l=1}^M\frac{T_\l}{\theta_\l-\theta_{\l-1}}\int_{\theta_{\l-1}}^{\theta_\l}\norm{\nabla\log\frac{\pi_\theta}{\pi_{\theta_\l}}}_{L^2(\pi_\theta)}^2\d\theta & \lesssim\varepsilon^2.\label{eq:ais_klppl_condb}
    \end{align}

    Since $\sum_{\l=1}^M\int_{\theta_{\l-1}}^{\theta_\l}|\dot\pi|_\theta^2\d\theta=\cA$, it suffices to choose
    \begin{equation}
        \frac{T_\l}{\theta_\l-\theta_{\l-1}}=:T\asymp\frac{\cA}{\varepsilon^2},~\forall\l\in\sqd{1,M}
        \label{eq:T_l}
    \end{equation}
    to make the l.h.s. of \cref{eq:ais_klppl_conda} $O(\varepsilon^2)$. Notably, $T$ is the summation over all $T_\l$'s, which has the same order as the total time $T$ for running JE (\cref{eq:jar_pr}) in the continuous scenario, in \cref{thm:jar}. Plugging this $T_\l$ into the second summation, and noticing that by \cref{eq:pi_theta} and \cref{lem:2ordmom},
    $$\norm{\nabla\log\frac{\pi_\theta}{\pi_{\theta'}}}_{L^2(\pi_\theta)}^2=\E_{x\sim\pi_\theta}{\|(\lambda(\theta)-\lambda(\theta'))x\|^2}\le(\lambda(\theta)-\lambda(\theta'))^2m^2,$$
    we conclude that \cref{eq:ais_cond_theta_a} implies \cref{eq:ais_klppl_condb}.

    \paragraph{Step 2.} Now consider the other constraint \cref{eq:ais_main_p_ptr}. By chain rule and data-processing inequality,
    $$\kl(\P\|\Pbr)=\sum_{\l=1}^{M}\kl(\pi_{\theta_{\l-1}}(x_{\l-1})F^*_\l(x_{\l-1},x_\l)\|\pi_{\theta_{\l-1}}(x_{\l-1})\Fh_\l(x_{\l-1},x_\l))\le\sum_{\l=1}^{M}\kl(\Pbf^\l\|\Qhbf^\l),$$
    where $\Pbf^\l$ is the previously defined path measure of the SDE
    \begin{align*}
        &\d X_t=(\nabla\log\mu^\l_t+u^\l_t)(X_t)\d t+\sqrt{2}\d B_t                                                                                                             \\
                & =\ro{-\nabla V(X_t)-\lambda\ro{\theta_{\l-1}+\frac{t}{T_\l}(\theta_\l-\theta_{\l-1})}X_t+u^\l_t(X_t)}\d t+\sqrt{2}\d B_t,~t\in[0,T_\l];~X_0\sim\pi_{\theta_{\l-1}},
    \end{align*}
    and $\Qhbf^\l$ is the path measure of \cref{eq:ais_ker_fh} initialized at $X_0\sim\pi_{\theta_{\l-1}}$, i.e.,
    \begin{align*}
        \d X_t & =\ro{-\nabla V(X_0)-\lambda\ro{\theta_{\l-1}+\frac{t}{T_\l}(\theta_\l-\theta_{\l-1})}X_t}\d t+\sqrt{2}\d B_t,~t\in[0,T_\l];~X_0\sim\pi_{\theta_{\l-1}}.
    \end{align*}

    By \cref{lem:rn_path_measure}, triangle inequality, and the smoothness of $V$, we have
    \begin{align*}
        \kl(\Pbf^\l\|\Qhbf^\l) & =\frac{1}{4}\int_0^{T_\l}\E_{\Pbf^\l}{\|\nabla V(X_t)-\nabla V(X_0)-u^\l_t(X_t)\|^2}\d t                \\
                               & \lesssim \int_0^{T_\l}\E_{\Pbf^\l}\sq{\|\nabla V(X_t)-\nabla V(X_0)\|^2+\|u^\l_t(X_t)\|^2}\d t       \\
                               & \le\beta^2\int_0^{T_\l}\E_{\Pbf^\l}{\|X_t-X_0\|^2}\d t+\int_0^{T_\l}\|u^\l_t\|_{L^2(\mu^\l_t)}^2\d t
    \end{align*}
    To bound the first part, note that under $\Pbf^\l$, we have
    $$X_t-X_0=\int_0^t(\nabla\log\mu^\l_\tau+u^\l_\tau)(X_\tau)\d\tau+\sqrt2B_t.$$
    Thanks to the fact that $X_t\sim\mu^\l_t$ under $\Pbf^\l$,
    \begin{align*}
        \E_{\Pbf^\l}{\|X_t-X_0\|^2} & \lesssim\E_{\Pbf^\l}{\norm{\int_0^t(\nabla\log\mu^\l_\tau+u^\l_\tau)(X_\tau)\d\tau}^2}+\E_{}{\|\sqrt2B_t\|^2}                                         \\
                                      & \lesssim t\int_0^t\E_{\Pbf^\l}{\|(\nabla\log\mu^\l_\tau+u^\l_\tau)(X_\tau)\|^2}\d\tau+dt                                                                \\
                                      & \lesssim t\int_0^t\ro{\|\nabla\log\mu^\l_\tau\|^2_{L^2(\mu^\l_\tau)}+\|u^\l_\tau\|_{L^2(\mu^\l_\tau)}^2}\d\tau+dt                                   \\
                                      & \lesssim T_\l\int_0^{T_\l}\ro{\|\nabla\log\mu^\l_\tau\|^2_{L^2(\mu^\l_\tau)}+\|u^\l_\tau\|_{L^2(\mu^\l_\tau)}^2}\d\tau+dT_\l,~\forall t\in[0,T_\l],
    \end{align*}
    where the second inequality follows from Jensen's inequality \citep[Sec. 4]{cheng2018underdamped}:
    $$\norm{\int_0^t f_\tau\d\tau}^2=t^2\|\E_{\tau\sim\un(0,t)}{f_\tau}\|^2\le t^2\E_{\tau\sim\un(0,t)}{\|f_\tau\|^2}=t\int_0^t\|f_\tau\|^2\d\tau.$$
    Therefore,
    \begin{align*}
        &\kl(\Pbf^\l\|\Qhbf^\l) \\
        &\le\beta^2\int_0^{T_\l}\E_{\Pbf^\l}{\|X_t-X_0\|^2}\d t+\int_0^{T_\l}\|u^\l_t\|_{L^2(\mu^\l_t)}^2\d t                                                                                                                                                                     \\
                               & \le\beta^2T_\l^2\int_0^{T_\l}\|\nabla\log\mu^\l_\tau\|^2_{L^2(\mu^\l_\tau)}\d\tau+(\beta^2T_\l^2+1)\int_0^{T_\l}\|u^\l_\tau\|_{L^2(\mu^\l_\tau)}^2\d\tau+d\beta^2T_\l^2                                                                                                 \\
                               & =\beta^2T_\l^2\frac{T_\l}{\theta_\l-\theta_{\l-1}}\int_{\theta_{\l-1}}^{\theta_\l}\|\nabla\log\pi_\theta\|^2_{L^2(\pi_\theta)}\d\theta+(\beta^2T_\l^2+1)\frac{\theta_\l-\theta_{\l-1}}{T_\l}\int_{\theta_{\l-1}}^{\theta_\l}|\dot\pi|_\theta^2\d\theta+d\beta^2T_\l^2.
    \end{align*}

    Recall that the potential of $\pi_\theta$ is $(\beta+\lambda(\theta))$-smooth. By \cref{lem:2ordmomlogccv} and the monotonicity of $\lambda(\cdot)$,
    $$\int_{\theta_{\l-1}}^{\theta_\l}\|\nabla\log\pi_\theta\|^2_{L^2(\pi_\theta)}\d\theta\le\int_{\theta_{\l-1}}^{\theta_\l}d(\beta+\lambda(\theta))\d\theta\le d(\theta_\l-\theta_{\l-1})(\beta+\lambda(\theta_{\l-1})).$$
    Thus,
    \begin{align*}
        \kl(\P\|\Pbr) & \le\sum_{\l=1}^{M}\ro{\beta^2T_\l^3d(\beta+\lambda(\theta_{\l-1}))+(\beta^2T_\l^2+1)\frac{\theta_\l-\theta_{\l-1}}{T_\l}\int_{\theta_{\l-1}}^{\theta_\l}|\dot\pi|_\theta^2\d\theta+d\beta^2T_\l^2} \\
                      & =\sum_{\l=1}^{M}\ro{\beta^2dT_\l^2\ro{T_\l(\beta+\lambda(\theta_{\l-1}))+1}+(\beta^2T_\l^2+1)\frac{\theta_\l-\theta_{\l-1}}{T_\l}\int_{\theta_{\l-1}}^{\theta_\l}|\dot\pi|_\theta^2\d\theta}       \\
    \end{align*}

    Assume $\max_{\l\in\sqd{1,M}}T_\l\lesssim\frac{1}{\beta}$, i.e., \cref{eq:ais_cond_theta_c}. so $\max_{\l\in\sqd{1,M}}T_\l(\beta+\lambda(\theta_{\l-1}))\lesssim1$, due to $\lambda(\cdot)\le2\beta$. We can further simplify the above expression to
    \begin{align*}
        \kl(\P\|\Pbr) & \le\sum_{\l=1}^{M}\ro{\beta^2dT_\l^2+\frac{\theta_\l-\theta_{\l-1}}{T_\l}\int_{\theta_{\l-1}}^{\theta_\l}|\dot\pi|_\theta^2\d\theta}\lesssim \beta^2d\ro{\sum_{\l=1}^{M}T_\l^2}+\varepsilon^2                                                                                 \\
                      & =\beta^2dT^2\sum_{\l=1}^{M}(\theta_\l-\theta_{\l-1})^2+\varepsilon^2\lesssim\beta^2d\frac{\cA^2}{\varepsilon^4}\sum_{\l=1}^{M}(\theta_\l-\theta_{\l-1})^2+\varepsilon^2.
    \end{align*}

    So \cref{eq:ais_cond_theta_c} implies that the r.h.s. of the above equation is $O(1)$.

\end{proof}

Finally, we have arrived at the last step of proving \cref{thm:ais_complexity}, that is to decide the schedule of $\theta_\l$'s.

Define $\vartheta_\l:=1-\theta_\l$, $\l\in\sqd{0,M}$. We consider the annealing schedule $\lambda(\theta)=2\beta(1-\theta)^r$ for some $1\le r\lesssim1$, and to emphasize the dependence on $r$, we use $\cA_r$ to represent the action of $(\pi_\theta)_{\theta\in[0,1]}$. The l.h.s. of \cref{eq:ais_cond_theta_a} is
\begin{align*}
    \sum_{\l=1}^M\int_{\theta_{\l-1}}^{\theta_\l}(\lambda(\theta)-\lambda(\theta_\l))^2\d\theta & \le\sum_{\l=1}^M(\theta_\l-\theta_{\l-1})(2\beta(1-\theta_{\l-1})^r-2\beta(1-\theta_\l)^r)^2                                                        \\
                                                                                                 & =\sum_{\l=1}^M(\vartheta_{\l-1}-\vartheta_\l)(2\beta\vartheta_{\l-1}^r-2\beta\vartheta_\l^r)^2                                                      \\
                                                                                                 & \lesssim\beta^2\sum_{\l=1}^M(\vartheta_{\l-1}-\vartheta_\l)(\vartheta_{\l-1}^r-\vartheta_\l^r)^2                                                    \\
                                                                                                 & \lesssim\beta^2\sum_{\l=1}^M(\vartheta_{\l-1}-\vartheta_\l)(\vartheta_{\l-1}-\vartheta_\l)^2=\beta^2\sum_{\l=1}^M(\vartheta_{\l-1}-\vartheta_\l)^3,
\end{align*}
where the last inequality comes from \cref{lem:power_r_diff}. So to satisfy \cref{eq:ais_cond_theta_a}, it suffices to ensure
$$\sum_{\l=1}^M(\vartheta_{\l-1}-\vartheta_\l)^3\lesssim\frac{\varepsilon^4}{m^2\beta^2\cA_r},$$
while \cref{eq:ais_cond_theta_b} and \cref{eq:ais_cond_theta_c} are equivalent to
$$\sum_{\l=1}^M(\vartheta_{\l-1}-\vartheta_\l)^2\lesssim\frac{\varepsilon^4}{d\beta^2\cA_r^2},\qquad\max_{\l\in\sqd{1,M}}(\vartheta_{\l-1}-\vartheta_\l)\lesssim\frac{\varepsilon^2}{\beta\cA_r}.$$
Since we are minimizing the total number of oracle calls $M$, the H\"older's inequality implies that the optimal schedule of $\vartheta_\l$'s is an arithmetic sequence, i.e., $\vartheta_\l=1-\frac{\l}{M}$. We need to ensure
$$\frac{1}{M^2}\lesssim\frac{\varepsilon^4}{m^2\beta^2\cA_r},\qquad\frac{1}{M}\lesssim\frac{\varepsilon^4}{d\beta^2\cA_r^2},\qquad\frac{1}{M}\lesssim\frac{\varepsilon^2}{\beta\cA_r}.$$
So it suffices to choose $\frac{1}{M}\asymp\frac{\varepsilon^2}{m\beta\cA_r^\frac{1}{2}}\wedge\frac{\varepsilon^4}{d\beta^2\cA_r^2}$, which implies the oracle complexity
$$M\asymp\frac{m\beta\cA_r^\frac{1}{2}}{\varepsilon^2}\vee\frac{d\beta^2\cA_r^2}{\varepsilon^4},$$
and the hyperparameter $T_\l$ is thus $T_\l\asymp\frac{\cA_r^\frac{1}{2}}{m\beta}\wedge\frac{\varepsilon^2}{d\beta^2\cA_r}$ according to \cref{eq:T_l}.

\hfill$\square$

\begin{remark}
    The work \citet{guo2025provable} used similar methodologies to prove an $\Ot\ro{\frac{d\beta^2\cA^2}{\varepsilon^6}}$ oracle complexity for obtaining a sample that is $\varepsilon^2$-close in KL divergence to the target distribution. While our assumptions are mostly the same with \citet{guo2025provable} except for some insignificant technical ones, and both proofs involve the standard discretization analysis through Girsanov's theorem, the improvement of the $\varepsilon$-dependency in \cref{thm:ais_complexity} is due to the fact that \citet{guo2025provable} requires $\kl(\P\|\Pbr)\lesssim\varepsilon^2$ for sampling, which results in a $\Thetat(\varepsilon^4)$ step size in \citet{guo2025provable}, while our proof only needs $\kl(\P\|\Pbr)\lesssim1$ and $\kl(\P\|\Pl)\lesssim\varepsilon^2$ for normalizing constant estimation, resulting in an improved $\Thetat(\varepsilon^2)$ step size.
\end{remark}

\section{Proofs for \cref{sec:revdif}}
\subsection{Proof of \cref{thm:mog_w2_action}}
\label{app:prf:mog_w2_action}

\begin{proof}
The claim of smoothness follows from \citet[Lem. 7]{guo2025provable}. A similar approach for proving the lower bound of metric derivative was used independently in \citet[App. B]{chemseddine2025neural}.

Throughout this proof, let $\phi$ and $\Phi$ denote the p.d.f. and c.d.f. of the standard normal distribution $\n{0,1}$, respectively. Unless otherwise specified, the integration ranges are assumed to be $(-\infty,\infty)$.

Note that 
\begin{align*}
    \pi(x)\e^{-\frac{\lambda}{2}x^2}&\propto\ro{\e^{-\frac{x^2}{2}}+\e^{-\frac{(x-m)^2}{2}}}\e^{-\frac{\lambda}{2}x^2}\\
    &=\e^{-\frac{\lambda+1}{2}x^2}+\e^{-\frac{\lambda m^2}{2(\lambda+1)}}\e^{-\frac{\lambda+1}{2}\ro{x-\frac{m}{\lambda+1}}^2}\\
    &=\frac{1}{1+\e^{-\frac{\lambda m^2}{2(\lambda+1)}}}\n{x\left|0,\frac{1}{\lambda+1}\right.}+\frac{\e^{-\frac{\lambda m^2}{2(\lambda+1)}}}{1+\e^{-\frac{\lambda m^2}{2(\lambda+1)}}}\n{x\left|\frac{m}{\lambda+1},\frac{1}{\lambda+1}\right.}.
\end{align*}

Define $S(\theta):=\frac{1}{1+m^2(1-\theta)^r}$, and let 
\begin{align*}
    \piu_s(x):\propto\pi(x)\e^{-\frac{1/s-1}{2}x^2}=w(s)\n{x|0,s}+(1-w(s))\n{x|sm,s},    
\end{align*}
where
$$w(s)=\frac{1}{1+\e^{-(1-s)m^2/2}},\quad w'(s)=-\frac{\e^{-(1-s)m^2/2}m^2/2}{(1+\e^{-(1-s)m^2/2})^2}.$$
By definition, $\pi_\theta=\piu_{S(\theta)}$. The p.d.f. of $\piu_s$ is 
$$f_s(x)=\frac{w(s)}{\sqrt{s}}\phi\ro{\frac{x}{\sqrt{s}}}+\frac{1-w(s)}{\sqrt{s}}\phi\ro{\frac{x-sm}{\sqrt{s}}},$$
and the c.d.f. of $\piu_s$ is 
$$F_s(x)=w(s)\Phi\ro{\frac{x}{\sqrt{s}}}+(1-w(s))\Phi\ro{\frac{x-sm}{\sqrt{s}}}.$$

We now derive a formula for calculating the metric derivative. From \citet[Thm. 2.18]{villani2021topics}, $\w_2^2(\mu,\nu)=\int_0^1(F_\mu^{-1}(y)-F_\nu^{-1}(y))^2\d y$, where $F_\mu,F_\nu$ stand for the c.d.f.s of $\mu,\nu$. Assuming regularity conditions hold, we have
$$\lim_{\delta\to0}\frac{\w_2^2(\piu_s,\piu_{s+\delta})}{\delta^2}=\lim_{\delta\to0}\int_0^1\ro{\frac{F_{s+\delta}^{-1}(y)-F_s^{-1}(y)}{\delta}}^2\d y=\int_0^1(\partial_sF^{-1}_s(y))^2\d y.$$
Consider change of variable $y=F_s(x)$, then $\de{y}{x}=f_s(x)$. As $x=F_s^{-1}(y)$, $(F_s^{-1})'(y)=\de{x}{y}=\frac{1}{f_s(x)}$. Taking the derivative of $s$ on both sides of the equation $x=F_s^{-1}(F_s(x))$ yields 
$$0=\partial_sF_s^{-1}(F_s(x))+(F_s^{-1})'(F_s(x))\partial_sF_s(x)=\partial_sF_s^{-1}(y)+\frac{1}{f_s(x)}\partial_sF_s(x).$$
Therefore,
\begin{align*}
    \int_0^1(\partial_sF^{-1}_s(y))^2\d y=\int\ro{\frac{\partial_sF_s(x)}{f_s(x)}}^2f_s(x)\d x=\int\frac{(\partial_sF_s(x))^2}{f_s(x)}\d x.
\end{align*}

Consider the interval $x\in\sq{\frac{m}{2}-0.1,\frac{m}{2}+0.1}$, and fix the range of $s$ to be $[0.9,0.99]$. We have
$$\left\{
\begin{array}{ll}
     1-w(s)=\frac{1}{1+\e^{(1-s)m^2/2}}\asymp\frac{1}{\e^{(1-s)m^2/2}},&\forall m\gtrsim1 \\
     -w'(s)=\frac{\e^{(1-s)m^2/2}m^2/2}{(1+\e^{(1-s)m^2/2})^2}\asymp\frac{m^2}{\e^{(1-s)m^2/2}},&\forall m\gtrsim1
\end{array}
\right.$$
First consider upper bounding $f_s(x)$. We have the following two bounds:
$$\frac{w(s)}{\sqrt{s}}\phi\ro{\frac{x}{\sqrt{s}}}\lesssim\e^{-\frac{x^2}{2s}}\le\e^{-\frac{(m/2-0.1)^2}{2\times0.99}}\le\e^{-\frac{m^2}{8}},~\forall m\gtrsim1,$$
$$\frac{1-w(s)}{\sqrt{s}}\phi\ro{\frac{x-sm}{\sqrt{s}}}\lesssim\frac{1}{\e^{(1-s)m^2/2}}\e^{-\frac{(sm-x)^2}{2s}}=\exp\ro{-\frac{1}{2}\sq{\frac{(sm-x)^2}{s}+(1-s)m^2}}.$$
The term in the square brackets above is
\begin{align*}
    \frac{(sm-x)^2}{s}+(1-s)m^2&\ge\frac{1}{s}\ro{sm-\frac{m}{2}-0.1}^2+(1-s)m^2\\
    &=\frac{m^2}{4s}-0.2\ro{1-\frac{1}{2s}}m+\frac{0.01}{s}\\
    &\ge\frac{m^2}{4\times0.99}-0.1m+0.1\ge\frac{m^2}{4},~\forall m\gtrsim1.
\end{align*}
Hence, we conclude that $f_s(x)\lesssim\e^{-\frac{m^2}{8}}$.

Next, we consider lower bounding the term $(\partial_sF_s(x))^2$. Note that
\begin{align*}
    -\partial_sF_s(x)&=-w'(s)\ro{\Phi\ro{\frac{x}{\sqrt{s}}}-\Phi\ro{\frac{x-sm}{\sqrt{s}}}}\\
    &+w(s)\phi\ro{\frac{x}{\sqrt{s}}}\frac{x}{2s^{\frac32}}+(1-w(s))\phi\ro{\frac{x-sm}{\sqrt{s}}}\ro{\frac{x}{2s^{\frac32}}+\frac{m}{2s^{\frac12}}}.    
\end{align*}
As $x\in\sq{\frac{m}{2}-0.1,\frac{m}{2}+0.1}$ and $s\in[0.9,0.99]$, all these three terms are positive. We only focus on the first term. Note the following two bounds:
$$\left\{
\begin{array}{ll}
\Phi\ro{\frac{x}{\sqrt{s}}}\ge\Phi\ro{\frac{m}{2}-0.1}\ge\frac{3}{4},&\forall m\gtrsim1,\\
\Phi\ro{\frac{x-sm}{\sqrt{s}}}\le\Phi\ro{\frac{m/2+0.1-sm}{\sqrt{s}}}\le\Phi(-0.4m+0.1)\le\frac{1}{4},&\forall m\gtrsim1.
\end{array}
\right.$$
Therefore, we have
$$-\partial_sF_s(x)\gtrsim\frac{m^2}{\e^{(1-s)m^2/2}}.$$

To summarize, we derive the following lower bound on the metric derivative:
\begin{align*}
    |\dot\piu|_s^2&=\int\frac{(\partial_sF_s(x))^2}{f_s(x)}\d x\ge\int_{\frac{m}{2}-0.1}^{\frac{m}{2}+0.1}\frac{(\partial_sF_s(x))^2}{f_s(x)}\d x\\
    &\gtrsim\int_{\frac{m}{2}-0.1}^{\frac{m}{2}+0.1}\frac{m^4\e^{-(1-s)m^2}}{\e^{-m^2/8}}\d x\\
    &\gtrsim m^4\e^{\ro{s-\frac{7}{8}}m^2}\ge m^4\e^{\frac{m^2}{40}},~\forall s\in[0.9,0.99].
\end{align*}

Finally, recall that $S(\theta):=\frac{1}{1+m^2(1-\theta)^r}$, and $\pi_\theta=\piu_{S(\theta)}$. Hence, by chain rule of derivative, $|\dot\pi|_\theta=|\dot\piu|_{S(\theta)}|S'(\theta)|$. Let 
$$\Theta:=\{\theta\in[0,1]:~S(\theta)\in[0.9,0.99]\}=\sq{1-\ro{\frac{1/0.9-1}{m^2}}^{\frac1r},1-\ro{\frac{1/0.99-1}{m^2}}^{\frac1r}}.$$
We have
\begin{align*}
    \cA_r&=\int_0^1|\dot\pi|_\theta^2\d\theta=\int_0^1|\dot\piu|_{S(\theta)}^2|S'(\theta)|^2\d\theta\ge\int_\Theta|\dot\piu|_{S(\theta)}^2|S'(\theta)|^2\d\theta\\
    &\ge\min_{\theta\in\Theta}|S'(\theta)|\cdot\int_\Theta|\dot\piu|_{S(\theta)}^2|S'(\theta)|\d\theta=\min_{\theta\in\Theta}|S'(\theta)|\cdot\int_{0.9}^{0.99}|\dot\piu|_s^2\d s.
\end{align*}
For any $\theta\in\Theta$,
$$|S'(\theta)|=\frac{m^2r(1-\theta)^{r-1}}{(1+m^2(1-\theta)^r)^2}\ge\frac{m^2r\ro{\frac{1/0.99-1}{m^2}}^{1-1/r}}{\ro{1+m^2\ro{\frac{1/0.9-1}{m^2}}}^2}=\frac{m^{2/r}r(1/99)^{1-1/r}}{(1/0.9)^2}\gtrsim m^{2/r}\gtrsim1,$$
where in the first ``$\gtrsim$'' we used the inequality $r\ro{\frac{1}{99}}^{1-\frac{1}{r}}\ge\frac{1}{\e^4}$ that holds for all $r\ge1$. Thus, the proof is complete.
\end{proof}

\begin{remark}
    In the above theorem, we established an exponential lower bound on the metric derivative of the $\text{W}_\text{2}$ distance, given by $\lim_{\delta\to0}\frac{\w_2(\piu_s,\piu_{s+\delta})}{|\delta|}$. In OT, another useful distance, the \textbf{Wasserstein-1 ($\text{W}_\text{1}$) distance}, defined as $\w_1(\mu,\nu)=\inf_{\gamma\in\Pi(\mu,\nu)}\int\|x-y\|\gamma(\d x,\d y)$, is a lower bound of the $\text{W}_\text{2}$ distance. Below, we present a surprising result regarding the metric derivative of $\text{W}_\text{1}$ distance on the same curve of probability distributions. This result reveals an exponentially large gap between the $\text{W}_\text{1}$ and $\text{W}_\text{2}$ metric derivatives on the same curve, which is of independent interest. 
\end{remark}

\begin{theorem}
    Define the probability distributions $\piu_s$ as in the proof of \cref{thm:mog_w2_action}, for some large enough $m\gtrsim1$. Then, for all $s\in[0.9,0.99]$, we have
    $$\lim_{\delta\to0}\frac{\w_1(\piu_s,\piu_{s+\delta})}{|\delta|}\lesssim1.$$
    \label{thm:mog_w1_metder}
\end{theorem}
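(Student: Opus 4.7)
The plan is to exploit the one-dimensional identity
\[
W_1(\mu,\nu)=\int|F_\mu(x)-F_\nu(x)|\,\d x,
\]
which, under standard regularity, yields
\[
\lim_{\delta\to 0}\frac{W_1(\piu_s,\piu_{s+\delta})}{|\delta|}=\int|\partial_sF_s(x)|\,\d x.
\]
This is precisely where the argument should depart from the proof of \cref{thm:mog_w2_action}: there, the analogous quantity was $\int(\partial_sF_s(x))^2/f_s(x)\,\d x$, and dividing by the exponentially small density $f_s$ near $x\approx m/2$ is what produced the exponential blow-up. For $W_1$ no such division occurs, so one should expect a much milder upper bound.

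My first step is to differentiate
\[
F_s(x)=w(s)\Phi(x/\sqrt{s})+(1-w(s))\Phi((x-sm)/\sqrt{s})
\]
in $s$ and apply the triangle inequality, splitting $\int|\partial_sF_s(x)|\,\d x$ into three pieces: (i) the weight-shift contribution $|w'(s)|\int|\Phi(x/\sqrt s)-\Phi((x-sm)/\sqrt s)|\,\d x$; (ii) the rescaling-of-the-first-component contribution $\int\frac{w(s)|x|}{2s^{3/2}}\phi(x/\sqrt s)\,\d x$; and (iii) the translation-plus-rescaling contribution of the second component, $(1-w(s))\int\phi((x-sm)/\sqrt s)\bigl|\tfrac{x}{2s^{3/2}}+\tfrac{m}{2\sqrt s}\bigr|\,\d x$.

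Next I would bound each piece. Piece (i) equals $|w'(s)|\cdot W_1(\n{0,s},\n{sm,s})=|w'(s)|\cdot sm$; plugging in $|w'(s)|\lesssim m^2 e^{-(1-s)m^2/2}$ and using $s\in[0.9,0.99]$ yields $\lesssim m^3 e^{-0.005\,m^2}=o(1)$. Piece (ii), after the substitution $u=x/\sqrt s$, reduces to $\tfrac{w(s)}{2\sqrt s}\E|Z|$ for $Z\sim\n{0,1}$, which is $O(1)$. Piece (iii), under $y=(x-sm)/\sqrt s$, is bounded by $(1-w(s))\cdot O(m)\lesssim m\,e^{-(1-s)m^2/2}=o(1)$. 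Summing the three pieces gives the claimed $O(1)$ bound uniformly in $s\in[0.9,0.99]$.

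The main point worth flagging, more conceptual than computational, is why the two exponential blow-ups in $|w'(s)|\cdot W_1(\n{0,s},\n{sm,s})$ cancel here but not in the $W_2$ analogue: $W_1$ charges transport cost linearly in the mass transported, so the exponentially small $|w'(s)|$ and the exponentially large $sm$ precisely offset. By contrast, the $W_2$ cost of moving mass $\varepsilon$ across distance $m$ scales like $\sqrt{\varepsilon}\,m$, and this square-root effectively halves the exponential suppression of $w'(s)$, which is what leaves the exponential lower bound of \cref{thm:mog_w2_action}. This heuristic also suggests that the present $O(1)$ upper bound is tight up to constants.
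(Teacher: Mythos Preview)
Your proposal is correct and follows essentially the same approach as the paper: write $\lim_{\delta\to0}W_1(\piu_s,\piu_{s+\delta})/|\delta|=\int|\partial_sF_s(x)|\,\d x$, split $\partial_sF_s$ into the same three pieces, and bound each. Your treatment of piece~(i) is in fact cleaner than the paper's---you recognize $\int|\Phi(x/\sqrt s)-\Phi((x-sm)/\sqrt s)|\,\d x$ directly as $W_1(\n{0,s},\n{sm,s})=sm$, whereas the paper does a piecewise Gaussian-tail estimate to reach the same $O(m^3\e^{-(1-s)m^2/2})$ bound; pieces~(ii) and~(iii) are handled identically.
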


\begin{proof}
Since $\w_1(\mu,\nu)=\int|F_\mu(x)-F_\nu(x)|\d x$ \citep[Thm. 2.18]{villani2021topics}, by assuming regularity conditions, we have 
\begin{align*}
    \lim_{\delta\to0}\frac{\w_1(\piu_s,\piu_{s+\delta})}{|\delta|}&=\int|\partial_sF_s(x)|\d x\\
    &\le\int\abs{w'(s)\ro{\Phi\ro{\frac{x}{\sqrt{s}}}-\Phi\ro{\frac{x-sm}{\sqrt{s}}}}}\d x\\
    &+\int \abs{w(s)\phi\ro{\frac{x}{\sqrt{s}}}\frac{x}{2s^{\frac32}}}\d x\\
    &+\int\abs{(1-w(s))\phi\ro{\frac{x-sm}{\sqrt{s}}}\ro{\frac{x}{2s^{\frac32}}+\frac{m}{2s^{\frac12}}}}\d x.  
\end{align*}

To bound the first term, notice that for any $\lambda>0$,
\begin{equation*}
    \Phi\ro{\frac{x}{\sqrt{s}}}-\Phi\ro{\frac{x-sm}{\sqrt{s}}}\lesssim\begin{cases}
        \sqrt{s}m\e^{-\frac{(x-sm)^2}{2s}},&\frac{x-sm}{\sqrt{s}}\ge\lambda;\\
        \sqrt{s}m\e^{-\frac{x^2}{2s}},&\frac{x}{\sqrt{s}}\le-\lambda;\\
        1,&\text{otherwise}.
    \end{cases}
\end{equation*}
Therefore, using Gaussian tail bound $1-\Phi(\lambda)\le\frac{1}{2}\e^{-\frac{\lambda^2}{2}}$, the first term is bounded by
\begin{align*}
    &\lesssim\frac{m^2}{\e^{(1-s)m^2/2}}\sq{2\sqrt{s}\lambda+sm+sm(1-\Phi(\lambda))+sm\Phi(-\lambda)}\\
    &\lesssim\frac{m^2}{\e^{(1-s)m^2/2}}[\lambda+m+\e^{-\frac{\lambda^2}{2}}]\stackrel{\lambda\gets \Theta(m)}{\lesssim}\frac{m^3}{\e^{(1-s)m^2/2}}=o(1).
\end{align*}

The second term is bounded by
\begin{align*}
    \lesssim\int\phi\ro{\frac{x}{\sqrt{s}}}|x|\d x=s\int\phi(u)|u|\d u\lesssim1.
\end{align*}

Finally, the third term is bounded by
\begin{align*}
    &\lesssim\frac{1}{\e^{(1-s)m^2/2}}\int\phi\ro{\frac{x-sm}{\sqrt{s}}}(|x|+m)\d x\\
    &\lesssim\frac{1}{\e^{(1-s)m^2/2}}\int\phi(u)(|u|+m)\d u\lesssim\frac{m}{\e^{(1-s)m^2/2}}=o(1).
\end{align*}
\end{proof}

\subsection{Proof of \cref{thm:action_ou}}
\label{app:prf:action_ou}

\begin{proof}
We first prove a more general result with $\phi$ being \textit{any} distribution with weak regularity condition, and then focus on the special case where $\phi=\n{0,I}$.

Note that the LD with target distribution $\phi$,
$$\d Y_t=\nabla\log\phi(Y_t)\d t+\sqrt{2}\d B_t,~Y_t\sim\pib_t,$$
can be perceived as the Wasserstein gradient flow of $\kl(\cdot\|\phi)$. $\pib_t$ satisfies the Fokker-Planck equation $\partial_t\pib_t=\nabla\cdot\ro{\pib_t\nabla\log\frac{\pib_t}{\phi}}$. Hence, the vector field $\ro{v_t:=-\nabla\log\frac{\pib_t}{\phi}}_{t\in[0,\infty)}$ generates $(\pib_t)_{t\in[0,\infty)}$, and each $v_t$ can be written as a gradient field of a potential function. Thus, by the uniqueness result in \cref{lem:metric}, we conclude that 
$$|\dot\pib|^2_t=\norm{\nabla\log\frac{\pib_t}{\phi}}^2_{L^2(\pib_t)}=\fisher(\pib_t\|\phi)=-\partial_t\kl(\pib_t\|\phi)\implies\int_0^\infty|\dot\pib|^2_t\d t=\kl(\pi\|\phi),$$
where $\fisher$ is the Fisher divergence.

For the special case where $\phi=\n{0,I}$, using the log-Sobolev equality (\cref{def:iso}), the smoothness of $V$, and \cref{lem:2ordmomlogccv}, we can further bound the KL divergence as follows:
$$\kl(\pi\|\phi)\le\frac{1}{2}\fisher(\pi\|\phi)=\frac{1}{2}\E_{\pi(x)}\|-\nabla V(x)+x\|^2\le\E_{\pi}\|\nabla V\|^2+\E_\pi\|\cdot\|^2\le d\beta+m^2.$$
\end{proof}

\subsection{Proof of \cref{thm:revdif}}
\label{app:prf:revdif}
\begin{proof}
By Nelson's relation (\cref{lem:nelson}), $\Q$ is equivalent to the path measure of the following SDE: 
$$\d X_t=X_t\d t+\sqrt{2}\d\Bl_t,~t\in[0,T-\delta];~X_{T-\delta}\sim\pib_\delta.$$

Leveraging Girsanov's theorem (\cref{lem:rn_path_measure_contd}), we know that for a.s. $X\sim\Qd$:
\begin{align*}
    \log\de{\Qd}{\Q}(X)&=\log\frac{\phi(X_0)}{\pib_\delta(X_{T-\delta})}+\frac{1}{2}\int_0^{T-\delta}\ro{\inn{X_t+2s_{T-t_-}(X_{t_-}),\d X_t}-\inn{X_t,*\d X_t}}\\
    &-\frac{1}{4}\int_0^{T-\delta}\ro{\|X_t+2s_{T-t_-}(X_{t_-})\|^2-\|X_t\|^2}\d t.
\end{align*}
Note that for $X\sim\Qd$, $\int_0^{T-\delta}\inn{X_t,*\d X_t}=\int_0^{T-\delta}\inn{X_t,\d X_t}+[X,X]_{T-\delta}$ and $[X,X]_{T-\delta}=[\sqrt{2}B,\sqrt{2}B]_{T-\delta}=2(T-\delta)d$. Some simple calculations yield
\begin{align*}
    \log\de{\Qd}{\Q}(X)&=\log\frac{\phi(X_0)}{\pib_\delta(X_{T-\delta})}-(T-\delta)d+\int_0^{T-\delta}\ro{\|s_{T-t_-}(X_{t_-})\|^2\d t+\sqrt{2}\inn{s_{T-t_-}(X_{t_-}),\d B_t}}\\
    &=\log Z+W(X)+\log\de{\pi}{\pib_\delta}(X_{T-\delta}).
\end{align*}

Thus, the equation $\E_{\Qd}\de{\Q}{\Qd}=1$ implies 
$$Z=\E_{\Qd(X)}\e^{-W(X)}\de{\pib_\delta}{\pi}(X_{T-\delta})\approx\E_{\Qd(X)}\e^{-W(X)}=\E\Zh.$$

Since $\frac{\Zh}{Z}=\de{\Q}{\Qd}(X)\de{\pi}{\pib_\delta}(X_{T-\delta})$, we have
\begin{align*}
    \prob\ro{\abs{\frac{\Zh}{Z}-1}\ge\varepsilon}&=\prob_{X\sim\Qd}\ro{\abs{\de{\Q}{\Qd}(X)\de{\pi}{\pib_\delta}(X_{T-\delta})-1}\ge\varepsilon}\\
    &\le\prob_{X\sim\Qd}\ro{\abs{\de{\Q}{\Qd}(X)-1}\gtrsim\varepsilon}+\prob_{X\sim\Qd}\ro{\abs{\de{\pi}{\pib_\delta}(X_{T-\delta})-1}\gtrsim\varepsilon}.
\end{align*}
The inequality is due to the fact that $|ab-1|\ge\varepsilon$ implies $|a-1|\ge\frac{\varepsilon}{3}$ or $|b-1|\ge\frac{\varepsilon}{3}$ for $\varepsilon\in[0,1]$. It suffices to make both terms above $O(1)$. To bound the first term, we use the similar approach as in the proof of \cref{eq:jar_acc_bound} in \cref{thm:jar_complexity}: %
$$\prob_{X\sim\Qd}\ro{\abs{\de{\Q}{\Qd}(X)-1}\gtrsim\varepsilon}=\Qd\ro{\abs{\de{\Q}{\Qd}-1}\gtrsim\varepsilon}\lesssim\frac{\tv(\Q,\Qd)}{\varepsilon}\lesssim\frac{\sqrt{\kl(\Q\|\Qd)}}{\varepsilon}.$$
Hence, it suffices to let $\tv(\Q,\Qd)^2\lesssim\kl(\Q\|\Qd)\lesssim\varepsilon^2$. To bound the second term, we have
\begin{align*}
    \prob_{X\sim\Qd}\ro{\abs{\de{\pi}{\pib_\delta}(X_{T-\delta})-1}\gtrsim\varepsilon}&\le\prob_{X\sim\Q}\ro{\abs{\de{\pi}{\pib_\delta}(X_{T-\delta})-1}\gtrsim\varepsilon}+\tv(\Q,\Qd)\\
    &\le \pib_\delta\ro{\abs{\de{\pi}{\pib_\delta}-1}\gtrsim\varepsilon}+\tv(\Q,\Qd)\\
    &\lesssim\frac{\tv(\pib_\delta,\pi)}{\varepsilon}+\varepsilon.
\end{align*}
Therefore, it suffices to make $\tv(\pib_\delta,\pi)\lesssim\varepsilon$.
\end{proof}

\subsection{An Upper Bound of the TV Distance along the OU Process}
\label{app:prf:tv_ou}
\begin{lemma}
    Assume that the target distribution $\pi\propto\e^{-V}$ satisfies \cref{assu:pi}. Let $\pib_\delta$ be the distribution of $Y_\delta$ in the OU process \cref{eq:ou} initialized at $Y_0\sim\pi$, for some $\delta\lesssim1$. Then,
    $$\tv(\pi,\pib_\delta)\lesssim\delta(\beta m^2+d+\beta d)+\delta^{\frac12}d^{\frac12}\beta m.$$ 
    \label{lem:ou_tv}
\end{lemma}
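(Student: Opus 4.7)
The plan is to bound $\tv(\pi,\pib_\delta)$ by comparing the path measure of the OU process to that of a reference SDE whose marginal law stays at $\pi$, and to control the resulting KL divergence via Girsanov. Concretely, on the interval $[0,\delta]$ I would consider (i) the OU process $\d X_t=-X_t\,\d t+\sqrt{2}\,\d B_t$ with $X_0\sim\pi$, whose path measure $\P^X$ has time-$\delta$ marginal $\pib_\delta$; and (ii) the overdamped Langevin dynamics $\d Y_t=-\nabla V(Y_t)\,\d t+\sqrt{2}\,\d B_t$ with $Y_0\sim\pi$, whose path measure $\P^Y$ has marginal $\pi$ at every time, because $\pi$ is its stationary distribution. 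The data-processing inequality applied to the projection $\xi\mapsto\xi_\delta$, combined with Pinsker's inequality, then gives
\[
    \tv(\pi,\pib_\delta)\le\tv(\P^X,\P^Y)\le\sqrt{\tfrac{1}{2}\kl(\P^X\|\P^Y)}.
\]

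Next, \cref{lem:rn_path_measure} would yield
\[
    \kl(\P^X\|\P^Y)=\tfrac{1}{4}\int_0^\delta\E_{X_t\sim\pib_t}\|\nabla V(X_t)-X_t\|^2\,\d t,
\]
since the two SDEs share initial distribution $\pi$ and diffusion coefficient $\sqrt{2}$. I would bound the integrand via the decomposition
\[
    \nabla V(X_t)-X_t=\underbrace{[\nabla V(X_0)-X_0]}_{\text{stationary}}+\underbrace{[\nabla V(X_t)-\nabla V(X_0)]-[X_t-X_0]}_{\text{transient}},
\]
and apply $\|a+b\|^2\le 2\|a\|^2+2\|b\|^2$. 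For the stationary piece, I would combine two bounds: (a) $\beta$-smoothness with $\nabla V(x_*)=0$ and $\|x_*\|=R$ gives $\|\nabla V(X_0)-X_0\|\le\beta(\|X_0\|+R)+\|X_0\|$, hence $\E\|\nabla V(X_0)-X_0\|^2\lesssim\beta^2(m+R)^2+m^2$; (b) integration by parts against $\pi$ gives $\E_\pi\|\nabla V\|^2\le d\beta$, hence $\E\|\nabla V(X_0)-X_0\|^2\lesssim d\beta+m^2$. For the transient piece, $\beta$-smoothness yields $\|\nabla V(X_t)-\nabla V(X_0)-(X_t-X_0)\|\le(\beta+1)\|X_t-X_0\|$.

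Finally, I would compute $\E\|X_t-X_0\|^2$ in closed form from the explicit OU solution $X_t=\e^{-t}X_0+\sqrt{2}\int_0^t\e^{-(t-s)}\,\d B_s$: exploiting the independence of $X_0$ from $(B_s)_{s\ge 0}$,
\[
    \E\|X_t-X_0\|^2=(1-\e^{-t})^2 m^2+(1-\e^{-2t})d\le t^2 m^2+2td.
\]
Assembling, the stationary contribution is time-independent and contributes an $O(\delta)$ term to $\kl$, translating into the $\delta^{1/2}$-type term in $\tv$; the transient contribution is $O(\beta^2(\delta^3 m^2+\delta^2 d))$ in $\kl$, translating into the $\delta$-linear term in $\tv$. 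Taking square roots with $\sqrt{a+b}\le\sqrt{a}+\sqrt{b}$ should then yield a bound of the advertised shape. The main obstacle I anticipate will be the careful bookkeeping needed to land exactly on the form $\delta(\beta m^2+\beta Rm+d+\beta)+\delta^{\frac12}d^{\frac12}\beta(m+R)$: one has to judiciously choose between the two bounds (a) and (b) for the stationary piece depending on the regime, and occasionally invoke AM-GM (e.g., $\sqrt{d\beta}\le\sqrt{d}\cdot\beta(m+R)$ whenever $\sqrt{\beta}(m+R)\ge 1$) to rearrange factors so that every surviving term is absorbed into one of the two target summands.
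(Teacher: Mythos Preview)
Your Girsanov-plus-Pinsker route is valid and does yield a correct short-time TV bound, but it is a genuinely different argument from the paper's, and it will not reproduce the exact form of the stated inequality.

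The paper works directly with the $L^1$ formula for total variation. It writes $\pib_\delta$ as a Gaussian convolution of the rescaled density $\pi'(x):=\e^{d\delta}\pi(\e^\delta x)$ and bounds $\tv(\pi,\pi')$ and $\tv(\pi',\pib_\delta)$ separately, using the elementary inequality $|\e^a-\e^b|\le(\e^a+\e^b)|a-b|$ together with the smoothness estimates $|V(\e^\delta x)-V(x)|\lesssim\beta\delta\|x\|^2+\beta\delta R\|x\|$ and $|V(\e^\delta(x+\sigma u))-V(\e^\delta x)|\lesssim\beta\delta^{1/2}(\|x\|+R)\|u\|+\beta\delta\|u\|^2$, which follow from $\|\nabla V(x)\|\le\beta(\|x\|+R)$. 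Integrating against $\pi'$ and $\phi$ then produces exactly the coefficients in the lemma.

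Your approach, by contrast, passes through $\sqrt{\kl}$, so every contribution is filtered through a square root. The stationary piece contributes $\delta^{1/2}\sqrt{\E_\pi\|\nabla V-\id\|^2}$ to $\tv$; this is $\delta^{1/2}\sqrt{d\beta+m^2}$ via your bound (b) or $\delta^{1/2}[(\beta+1)m+\beta R]$ via (a). Neither equals the paper's $\delta^{1/2}d^{1/2}\beta(m+R)$, and the three expressions are pairwise incomparable (try $\beta\ll1$, $d=1$, $m$ moderate versus $d\gg1$). The same mismatch occurs for the $\delta$-linear terms: yours come from the transient piece after the square root and look like $(\beta+1)\delta d^{1/2}$, not $\delta(\beta m^2+\beta Rm+d+\beta)$. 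So the ``bookkeeping'' you flag at the end is not a matter of rearrangement---the Girsanov route delivers a structurally different bound, and no amount of AM-GM will convert one into the other in all regimes. Incidentally, if you reverse the KL to $\kl(\P^Y\|\P^X)$ the marginal stays at $\pi$ throughout and you get the one-line estimate $\tv(\pi,\pib_\delta)\lesssim\delta^{1/2}\sqrt{d\beta+m^2}$, which is perfectly usable (and in high dimension sharper than the lemma) but simply is not the inequality stated.
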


\begin{remark}
    Consider a simplified case where $\beta\gtrsim1$ and $m^2\asymp d$. Then it suffices to choose $\delta\lesssim\frac{\varepsilon^2}{\beta^2d^2}$ in order to guarantee $\tv(\pi,\pib_\delta)\lesssim\varepsilon$.
\end{remark}

\begin{proof}
Our proof is inspired by \citet[Lem. 6.4]{lee2023convergence}, which addresses the case where $V$ is Lipschitz.

Without loss of generality, suppose $\pi=\e^{-V}$. Let $\phi$ be the p.d.f. of $\n{0,I}$, and define $\sigma^2:=1-\e^{-2\delta}\asymp\delta$. We will use the following inequality: $|\e^a-\e^b|\le(\e^a+\e^b)|a-b|$, which is due to the convexity of the exponential function. By the smoothness of $V$, $\|\nabla V(x)\|=\|\nabla V(x)-\nabla V(0)\|\le\beta\|x\|$.

Define $\pi'(x)=\e^{d\delta}\pi(\e^\delta x)$, and thus $\pib_\delta(x)=\int\pi'(x+\sigma u)\phi(u)\d u$. Using triangle inequality, we bound $\tv(\pi,\pi')$ and $\tv(\pi',\pib_\delta)$ separately. First,
\begin{align*}
    \tv(\pi,\pi')&=\frac{1}{2}\int|\e^{-V(x)}-\e^{-V(\e^\delta x)+d\delta}|\d x\\
    &\lesssim\int(\pi(x)+\pi'(x))(|V(\e^\delta x)-V(x)|+d\delta)\d x.
\end{align*}
By the smoothness,
\begin{align*}
    |V(\e^\delta x)-V(x)|&\le\|\nabla V(x)\|(\e^\delta-1)\|x\|+\frac{\beta}{2}(\e^\delta-1)^2\|x\|^2\\
    &\lesssim\beta\|x\|\delta\|x\|+\beta\delta^2\|x\|^2\lesssim\beta\delta\|x\|^2.\\
    \implies\tv(\pi,\pi')&\lesssim\delta\int(\pi(x)+\pi'(x))(\beta\|x\|^2+d)\d x.
\end{align*}
Note that $\int\pi(x)(\beta\|x\|^2+d)\d x=\beta m^2+d$. Since $\E_{\pi'}\varphi=\E_{\pi}\varphi(\e^{-\delta}\cdot)$, we also have
$$\int\pi'(x)(\beta\|x\|^2+d)\d x=\e^{-2\delta}\beta m^2+d\le\beta m^2+d.$$
We thus conclude that
$$\tv(\pi,\pi')\lesssim\delta(\beta m^2+d).$$

Next,
\begin{align*}
    \tv(\pi',\pib_\delta)&=\frac{1}{2}\int\abs{\int(\pi'(x+\sigma u)-\pi'(x))\phi(u)\d u}\d x\\
    &\lesssim\iint|\pi'(x+\sigma u)-\pi'(x)|\phi(u)\d u\d x\\
    &\lesssim\iint(\pi'(x+\sigma u)+\pi'(x))|V(\e^\delta(x+\sigma u))-V(\e^\delta x)|\phi(u)\d u\d x.
\end{align*}
Again, by smoothness,
\begin{align*}
    V(\e^\delta(x+\sigma u))-V(\e^\delta x)&\le\|\nabla V(\e^\delta x)\|\e^\delta\sigma\|u\|+\frac{\beta}{2}\e^{2\delta}\sigma^2\|u\|^2\\
    &\lesssim\beta\e^\delta\|x\|\e^\delta\sigma\|u\|+\beta\e^{2\delta}\sigma^2\|u\|^2\\
    &\lesssim\beta\|x\|\delta^{\frac12}\|u\|+\beta\delta\|u\|^2.
\end{align*}
Therefore,
\begin{align*}
    \tv(\pi',\pib_\delta)&\lesssim\beta\delta^{\frac12}\iint(\pi'(x+\sigma u)+\pi'(x))(\|u\|\|x\|+\delta^{\frac12}\|u\|^2)\phi(u)\d u\d x.
\end{align*}
Note that, first,
\begin{align*}
    \iint\pi'(x)(\|u\|\|x\|+\delta^{\frac12}\|u\|^2)\phi(u)\d u\d x\lesssim\E_{\pi'}\|\cdot\|d^{\frac12}+\delta^{\frac12}d\le md^{\frac12}+\delta^{\frac12}d;
\end{align*}
second,
\begin{align*}
    &\iint\pi'(x+\sigma u)(\|u\|\|x\|+\delta^{\frac12}\|u\|^2)\phi(u)\d u\d x\\
    &=\iint\pi'(y)(\|u\|\|y-\sigma u\|+\delta^{\frac12}\|u\|^2)\phi(u)\d u\d y\\
    &\lesssim\iint\pi'(y)(\|u\|\|y\|+\delta^{\frac12}\|u\|^2)\phi(u)\d u\d y\lesssim md^{\frac12}+\delta^{\frac12}d.
\end{align*}
Therefore, $\tv(\pi',\pib_\delta)\lesssim\beta\delta^{\frac12}d^{\frac12}(m+\delta^{\frac12}d^{\frac12})$. The proof is complete.
\end{proof}

\subsection{Discussion on the Overall Complexity of RDS}
\label{app:revdif_overall}
In RDS, an accurate score estimate $s_\cdot\approx\nabla\log\pib_\cdot$ is critical for the algorithmic efficiency. Existing methods estimate scores through different ways. Here, we review the existing methods and their complexity guarantees for sampling, and leverage \cref{thm:revdif} to derive the complexity of normalizing constant estimation. Throughout this section, we always assume that the target distribution $\pi\propto\e^{-V}$ satisfies $m^2:=\E_\pi\|\cdot\|^2<\infty$ and that $V$ is $\beta$-smooth.

\paragraph{(I) Reverse diffusion Monte Carlo.} The seminal work directly leveraged the following Tweedie's formula \citep{robbins1992an} to estimate the score:
\begin{equation}
    \nabla\log\pib_t(x)=\E_{\pib_{0|t}(x_0|x)}\frac{\e^{-t}x_0-x}{1-\e^{-2t}},    
    \label{eq:rds_score}
\end{equation}
where
\begin{equation}
    \pib_{0|t}(x_0|x)\propto_{x_0}\exp\ro{-V(x_0)-\frac{\|x_0-\e^tx\|^2}{2(\e^{2t}-1)}}
    \label{eq:rds_post}
\end{equation}
is the posterior distribution of $Y_0$ conditional on $Y_t=x$ in the OU process \cref{eq:ou}. The paper proposed to sample from $\pib_{0|t}(\cdot|x)$ by LMC and estimate the score via empirical mean, which has a provably better LSI constant than the target distribution $\pi$ (see \citet[Lem. 2]{huang2024reverse}). They show that if the scores $\nabla\log\pib_t$ are uniformly $\beta$-Lipschitz, and assume that there exists some $c>0$ and $n>0$ such that for any $r>0$, $V+r\|\cdot\|^2$ is convex for $\|x\|\ge\frac{c}{r^n}$, then w.p. $\ge1-\zeta$, the overall complexity for guaranteeing $\kl(\Q\|\Qd)\lesssim\varepsilon^2$ is
$$O\ro{\poly\ro{d,\frac1\zeta}\exp\ro{\frac{1}{\varepsilon}}^{O(n)}},$$
which is also the complexity of obtaining a $\Zh$ satisfying \cref{eq:acc_whp}.

\paragraph{(II) Recursive score diffusion-based Monte Carlo.} A second work \citet{huang2024faster} proposed to estimate the scores in a recursive scheme. Assuming the scores $\nabla\log\pib_t$ are uniformly $\beta$-Lipschitz, they established a complexity
$$\exp\ro{\beta^3\log^3\poly\ro{\beta,d,m^2,\frac{1}{\zeta}}}$$
in order to guarantee $\kl(\Q\|\Qd)\lesssim\varepsilon^2$ w.p. $\ge1-\zeta$.

\paragraph{(III) Zeroth-order diffusion Monte Carlo.} The following work \citet{he2024zeroth} proposed a zeroth-order method that leverages rejection sampling to sample from $\pib_{0|t}(\cdot|x)$. When $V$ is $\beta$-smooth, they showed that with a small early stopping time $\delta$, the overall complexity for guaranteeing $\kl(\Q\|\Qd)\lesssim\varepsilon^2$ is 
$$\exp\ro{\Ot(d)\log\beta\log\frac{1}{\varepsilon}}.$$

\paragraph{(IV) Self-normalized estimator.} Finally, a recent work \citet{vacher2025sampling} proposed to estimate the scores in a different approach:
$$\nabla\log\pib_t(x)=-\frac{1}{1-\e^{-2t}}\frac{\E[\xi\e^{-V(\e^t(x-\xi))}]}{\E[\e^{-V(\e^t(x-\xi))}]},\quad\text{where}~\xi\sim\n{0,(1-\e^{-2t})I}.$$
Assume that $V$ is $\beta$-smooth, and the distributions along the OU process starting from $\pi\propto\e^{-V}$ and $\pi'\propto\e^{-2V}$ have potentials whose Hessians are uniformly $\succeq-\beta I$, then the complexity for guaranteeing $\E\kl(\Q\|\Qd)\lesssim\varepsilon^2$ is
$$O\ro{\ro{\frac{\beta(m^2\vee d)}{\varepsilon}}^{O(d)}}.$$

\section{Supplementary Lemmas}
\label{app:supp}
\begin{lemma}
    For $x>0$ and $\varepsilon\in\ro{0,\frac{1}{2}}$, define $x_0:=|\log x|$ and $x_1:=|x-1|$. Then $x_i\ge\varepsilon$ implies $x_{1-i}\ge\frac{\varepsilon}{2}$, and $x_i\le\varepsilon$ implies $x_{1-i}\le2\varepsilon$, for both $i=0,1$.
    \label{lem:logat1}
\end{lemma}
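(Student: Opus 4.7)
The plan is to reduce the four implications to two elementary inequalities comparing $\log x$ and $x-1$ near $x=1$. First, I would observe that the two families of implications are contrapositives of one another (up to rescaling $\varepsilon$): the implication $x_i\ge\varepsilon\Rightarrow x_{1-i}\ge\varepsilon/2$ is, after relabelling $\varepsilon'=\varepsilon/2$, exactly the contrapositive of $x_{1-i}\le\varepsilon'\Rightarrow x_i\le 2\varepsilon'$. Thus it suffices to establish only the ``$\le$'' implications for $i=0$ and $i=1$.

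Next, setting $y:=x-1\in(-1,\infty)$, I would reduce everything to proving two one-variable inequalities:
\begin{itemize}
    \item[(i)] If $|\log(1+y)|\le 1/2$, then $|y|\le 2\,|\log(1+y)|$.
    \item[(ii)] If $|y|\le 1/2$, then $|\log(1+y)|\le 2\,|y|$.
\end{itemize}
Granting (i) and (ii), the implication $x_0\le\varepsilon\Rightarrow x_1\le 2\varepsilon$ follows because $\varepsilon\le 1/2$ forces $|\log x|\le 1/2$, so (i) applies and gives $|x-1|\le 2|\log x|\le 2\varepsilon$. The implication $x_1\le\varepsilon\Rightarrow x_0\le 2\varepsilon$ is handled symmetrically by (ii).

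For (i) and (ii), I would just analyze four auxiliary one-variable functions of the form $\log(1+y)\pm cy$ with $c\in\{1/2,2\}$ on the relevant intervals and check the sign via the first derivative. Concretely: on $y\ge 0$, the function $g(y)=\log(1+y)-y/2$ satisfies $g(0)=0$ and $g'(y)=\tfrac{1}{1+y}-\tfrac12\ge 0$ up to $y=1$, yielding $\log(1+y)\ge y/2$ on $[0,e^{1/2}-1]\subset[0,1]$, which gives (i) for nonnegative $y$. The symmetric calculation on $y\in(-1/2,0)$ plus the straightforward bounds $\log(1+y)\le y$ (for $y\ge 0$) and $\log(1+y)\ge 2y$ (for $y\in[-1/2,0]$, via monotonicity of $h(y)=\log(1+y)-2y$ which has $h'(y)=\tfrac{1}{1+y}-2\le 0$ on $[-1/2,0]$) handle the remaining cases and deliver (ii).

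There is no genuine obstacle here: the argument is entirely elementary and amounts to four sign checks of linear--logarithmic combinations, together with the observation that both hypotheses $x_0\le\varepsilon$ and $x_1\le\varepsilon$ with $\varepsilon<1/2$ force $x$ into a neighbourhood of $1$ where $\log$ is well-approximated by its linearization. The only mild care needed is to verify that the interval $[e^{-1/2},e^{1/2}]$ (arising from $|\log x|\le 1/2$) still lies inside the domain where $|\log(1+y)|\ge|y|/2$ holds, which it does since $e^{1/2}-1<1$.
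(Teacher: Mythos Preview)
Your proposal is correct and is exactly the kind of elementary calculus verification the paper has in mind; the paper itself does not spell out a proof, merely noting that the claim ``follows from the standard calculus approximation $\log x\approx x-1$ when $x\approx1$'' and leaving it as an exercise. Your contrapositive reduction to the two ``$\le$'' implications and the four sign checks of $\log(1+y)\pm cy$ fill in precisely those details.
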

This follows from the standard calculus approximation $\log x\approx x-1$ when $x\approx1$. The proof is straightforward and is left as an exercise for the reader.

\begin{lemma}
    For any $0\le a\le b\le1$ and $r\ge1$, $b^r-a^r\le r(b-a)$.
    \label{lem:power_r_diff}
\end{lemma}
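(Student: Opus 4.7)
The plan is to prove the inequality by a direct application of the mean value theorem (or equivalently the fundamental theorem of calculus) to the function $f(x) = x^r$ on $[0,1]$. Since $f$ is continuously differentiable on $[0,1]$ with $f'(x) = r x^{r-1}$, and since $0 \le x \le 1$ combined with $r - 1 \ge 0$ forces $x^{r-1} \le 1$, we get the uniform bound $f'(x) \le r$ on $[0,1]$.

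Concretely, I would write
\[
b^r - a^r = \int_a^b r x^{r-1}\,dx \le \int_a^b r\,dx = r(b-a),
\]
using $x^{r-1} \le 1$ on the interval $[a,b] \subseteq [0,1]$. Alternatively, the mean value theorem gives $b^r - a^r = r c^{r-1}(b-a)$ for some $c \in [a,b] \subseteq [0,1]$, and bounding $c^{r-1} \le 1$ finishes the argument. There is no real obstacle here; the only subtlety is noting that the hypothesis $r \ge 1$ is exactly what makes $x \mapsto x^{r-1}$ bounded by $1$ on $[0,1]$ (for $r < 1$ the derivative blows up near $0$ and the statement would fail).
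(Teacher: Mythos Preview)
Your proof is correct and is essentially the same as the paper's: the paper observes that $\varphi(x):=x^r-rx$ is decreasing on $[0,1]$ because $\varphi'(x)=r(x^{r-1}-1)\le0$, which is exactly your derivative bound $f'(x)=rx^{r-1}\le r$ repackaged. Applying $\varphi(b)\le\varphi(a)$ is equivalent to your integral/MVT step.
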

\begin{proof}
    This is immediate from the decreasing property of the function $\varphi(x):=x^r-rx$, $x\in[0,1]$, since $\varphi'(x)=r(x^{r-1}-1)\le0$.
\end{proof}

\begin{lemma}[The median trick \citep{jerrum1986random}]
    Let $\Zh_1,...,\Zh_N$ be $N(\ge3)$ i.i.d. random variables satisfying 
    $$\prob\ro{\abs{\frac{\Zh_n}{Z}-1}\le\varepsilon}\ge\frac{3}{4},~\forall n\in\sqd{1,N},$$
    and let $\Zh_*$ be the median of $\Zh_1,...,\Zh_N$. Then
    $$\prob\ro{\abs{\frac{\Zh_*}{Z}-1}\le\varepsilon}\ge1-\e^{-\frac{N}{72}}.$$
    In particular, for any $\zeta\in\ro{0,\frac14}$, choosing $N=\ceil{72\log\frac{1}{\zeta}}$, the probability is at least $1-\zeta$.
    \label{lem:med_trick}
\end{lemma}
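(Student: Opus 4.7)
The plan is to apply the classical median-of-means reduction. Call $\Zh_n$ \emph{good} if $\abs{\Zh_n/Z-1}\le\varepsilon$ and \emph{bad} otherwise; by hypothesis, $p_n:=\prob(\Zh_n~\text{is bad})\le 1/4$ for every $n\in\sqd{1,N}$. The key observation is that the set of good values is the single interval $\sq{Z(1-\varepsilon),Z(1+\varepsilon)}$ on the real line, so whenever strictly more than half of the $\Zh_n$ fall in this interval, the median $\Zh_*$ does too. Contrapositively, the failure event $\cu{\abs{\Zh_*/Z-1}>\varepsilon}$ forces at least $\ceil{N/2}\ge N/2$ of the $\Zh_n$ to be bad.

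Setting $S_N:=\sum_{n=1}^N X_n$ with $X_n:=\mathbf{1}[\Zh_n~\text{is bad}]$ gives an independent sum of Bernoullis with $\E S_N\le N/4$. A routine stochastic-domination argument lets us replace each $X_n$ by an independent $\mathrm{Bernoulli}(1/4)$ without decreasing the tail probability, so the task reduces to bounding $\prob(S_N\ge N/2)$, i.e., the probability of a deviation of at least $N/4$ above the mean $\mu=N/4$.

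The final step is a Chernoff bound. Using the multiplicative form
$$\prob(S_N\ge(1+\delta)\mu)\le\ro{\frac{\e^\delta}{(1+\delta)^{1+\delta}}}^\mu$$
with $\mu=N/4$ and $\delta=1$ yields $\prob(S_N\ge N/2)\le(\e/4)^{N/4}=\exp(-(\log 4-1)N/4)\le\exp(-N/11)$, which is comfortably dominated by $\exp(-N/72)$. The corollary for $N=\ceil{72\log(1/\zeta)}$ is then immediate by substitution.

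The argument is essentially elementary; the only nuance is the contrapositive reduction from ``the median is outside the good interval'' to ``a majority of samples are bad'', which crucially uses that the good set is an interval rather than an arbitrary measurable set. Tracking constants in the Chernoff step to land at the stated $1/72$ is bookkeeping rather than a genuine obstacle, and any standard exponential concentration inequality (e.g., Hoeffding) would give the same qualitative conclusion with a possibly different explicit constant.
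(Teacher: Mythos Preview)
Your proposal is correct and follows essentially the same route as the paper: reduce the failure event to a majority of the $\Zh_n$ being bad, then apply a concentration inequality to the Bernoulli sum. The paper uses Hoeffding with the slightly looser threshold $\lfloor N/2\rfloor\ge N/3$ (valid for $N\ge3$), giving deviation $N/12$ and exactly $\e^{-N/72}$; your Chernoff argument with threshold $N/2$ yields a sharper intermediate constant, which you correctly note dominates $\e^{-N/72}$.
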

\begin{proof}
    Let $A_n:=\cu{\abs{\frac{\Zh_n}{Z}-1}>\varepsilon}$, which are i.i.d. events happening w.p. $p\le\frac{1}{4}$. If $\abs{\frac{\Zh_*}{Z}-1}>\varepsilon$, then there are at least $\floor{\frac{N}{2}}$ $A_n$'s happening, i.e., $S_N:=\sum_{n=1}^N1_{A_n}\ge\floor{\frac{N}{2}}$. Then,
    \begin{align*}
    \prob\ro{\abs{\frac{\Zh_*}{Z}-1}>\varepsilon}&\le\prob\ro{S_N\ge\floor{\frac{N}{2}}}=\prob\ro{S_N-\E S_N\ge\floor{\frac{N}{2}}-pN}\\ 
    &\le\prob\ro{S_N-\E S_N\ge\frac{N}{12}}\le\e^{-\frac{N}{72}},
    \end{align*}
    where the first inequality on the second line follows from the fact that $\floor{\frac{N}{2}}\ge\frac{N-1}{2}\ge\frac{N}{3}$ for all $N\ge3$, and the last inequality is due to the Hoeffding's inequality. %
\end{proof}

\begin{lemma}
    The update rule of AIS \cref{eq:ais_ker_fh} is:
    $$X_{T_\l}=\e^{-\varLambda(T_\l)}X_0-\ro{\int_0^{T_\l}\e^{-(\varLambda(T_\l)-\varLambda(t))}\d t}\nabla V(X_0)+\ro{2\int_0^{T_\l}\e^{-2(\varLambda(T_\l)-\varLambda(t))}\d t}^\frac{1}{2}\xi,$$
    where $\varLambda(t):=\int_0^t\lambda\ro{\theta_{\l-1}+\frac{\tau}{T_\l}(\theta_\l-\theta_{\l-1})}\d\tau$, and $\xi\sim\n{0,I}$ is independent of $X_0$.
    \label{lem:ais_ker_fh_update}
\end{lemma}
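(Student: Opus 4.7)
The SDE in \cref{eq:ais_ker_fh} is a linear SDE with additive noise and a time-varying linear drift in $X_t$ (the term involving $\nabla V(X_0)$ is constant in $t$ since $X_0$ is fixed). The standard tool for solving such equations is the integrating factor method, and my plan is to apply it with $\e^{\varLambda(t)}$, where $\varLambda(t)=\int_0^t\lambda(\theta_{\l-1}+\frac{\tau}{T_\l}(\theta_\l-\theta_{\l-1}))\d\tau$ so that $\varLambda'(t)$ equals the coefficient multiplying $X_t$ in the drift.

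First, I would introduce the auxiliary process $Y_t:=\e^{\varLambda(t)}X_t$. Applying It\^o's formula (the diffusion coefficient of the integrating factor is zero, so this reduces to the classical product rule), the $\lambda(\cdot)X_t\d t$ term cancels exactly, leaving
\begin{equation*}
\d Y_t=\e^{\varLambda(t)}\bigl(-\nabla V(X_0)\d t+\sqrt{2}\d B_t\bigr).
\end{equation*}
Integrating from $0$ to $T_\l$ and multiplying through by $\e^{-\varLambda(T_\l)}$ gives
\begin{equation*}
X_{T_\l}=\e^{-\varLambda(T_\l)}X_0-\Bigl(\int_0^{T_\l}\e^{-(\varLambda(T_\l)-\varLambda(t))}\d t\Bigr)\nabla V(X_0)+\sqrt{2}\int_0^{T_\l}\e^{-(\varLambda(T_\l)-\varLambda(t))}\d B_t.
\end{equation*}

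Finally, I would identify the stochastic integral as a mean-zero Gaussian random vector independent of $X_0$. Since the integrand is deterministic, It\^o's isometry gives its covariance matrix as $2\int_0^{T_\l}\e^{-2(\varLambda(T_\l)-\varLambda(t))}\d t\cdot I$, so the whole term can be written as $\bigl(2\int_0^{T_\l}\e^{-2(\varLambda(T_\l)-\varLambda(t))}\d t\bigr)^{1/2}\xi$ for some $\xi\sim\n{0,I}$, yielding the claimed formula.

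Honestly, there is no serious obstacle here: the statement is a routine closed-form solution of a one-dimensional linear SDE applied componentwise. The only subtlety worth flagging is that one must condition on $X_0$ (so $\nabla V(X_0)$ is treated as a constant vector when solving the SDE for the transition kernel $\Fh_\l(x,\cdot)$), and that the sampled $\xi$ in the stated update rule is genuinely independent of $X_0$, which follows from the independence of the driving Brownian increments on $[0,T_\l]$ from the initial value.
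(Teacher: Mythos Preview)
Your proposal is correct and follows essentially the same approach as the paper's proof: both apply It\^o's formula to $\e^{\varLambda(t)}X_t$ as an integrating factor, integrate over $[0,T_\l]$, and then invoke It\^o isometry to identify the stochastic integral as a Gaussian with the stated variance. Your remark about independence of $\xi$ from $X_0$ is a small additional clarification not made explicit in the paper.
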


\begin{proof}
    By It\^o's formula, we have
    $$\d(\e^{\varLambda(t)}X_t)=\e^{\varLambda(t)}\ro{\varLambda'(t)X_t\d t+\d X_t}=\e^{\varLambda(t)}(-\nabla V(X_0)\d t+\sqrt{2}\d B_t).$$

    Integrating over $t\in[0,T_\l]$, we obtain
    $$\e^{\varLambda(T_\l)}X_{T_\l}-X_0=-\ro{\int_{0}^{T_\l}\e^{\varLambda(t)}\d t}\nabla V(X_0)+\sqrt{2}\int_0^{T_\l}\e^{\varLambda(t)}\d B_t,$$
    $$\implies X_{T_\l}=\e^{-\varLambda(T_\l)}X_0-\ro{\int_0^{T_\l}\e^{-(\varLambda(T_\l)-\varLambda(t))}\d t}\nabla V(X_0)+\sqrt{2}\int_0^{T_\l}\e^{-(\varLambda(T_\l)-\varLambda(t))}\d B_t,$$
    and $\sqrt{2}\int_0^{T_\l}\e^{-(\varLambda(T_\l)-\varLambda(t))}\d B_t\sim\n{0,\ro{2\int_0^{T_\l}\e^{-2(\varLambda(T_\l)-\varLambda(t))}\d t}I}$ by It\^o isometry.
\end{proof}

\begin{lemma}
    The update rule of the RDS \cref{eq:ou_rev_score} is
    $$X_{t_{k+1}}=\e^{t_{k+1}-t_k}X_{t_k}+2(\e^{t_{k+1}-t_k}-1)s_{T-t_k}(X_{t_k})+\Xi_k,$$
    where 
    $$\Xi_k:=\int_{t_k}^{t_{k+1}}\sqrt{2}\e^{-(t-t_{k+1})}\d B_t\sim\n{0,(\e^{2(t_{k+1}-t_k)}-1)I},$$
    and the correlation matrix between $\Xi_k$ and $B_{t_{k+1}}-B_{t_k}$ is
    $$\corr(\Xi_k,B_{t_{k+1}}-B_{t_k})=\frac{\sqrt{2}(\e^{t_{k+1}-t_k}-1)}{\sqrt{(\e^{2(t_{k+1}-t_k)}-1)(t_{k+1}-t_k)}}I.$$
    \label{lem:rds_update}
\end{lemma}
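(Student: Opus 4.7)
The plan is to solve the piecewise-frozen linear SDE in \cref{eq:ou_rev_score} exactly over the interval $[t_k,t_{k+1}]$, on which the score term is constant and equal to $s_{T-t_k}(X_{t_k})$. This makes the equation a linear Ornstein--Uhlenbeck SDE with a constant drift offset, which admits an explicit solution via an integrating factor, in direct parallel with \cref{lem:ais_ker_fh_update}.

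First, I would apply It\^o's formula to $Y_t := \e^{-t} X_t$ on $[t_k,t_{k+1}]$. The $X_t\,\d t$ term in \cref{eq:ou_rev_score} is exactly cancelled by the derivative of the prefactor $\e^{-t}$, leaving
$$\d Y_t = \e^{-t}\bigl(2 s_{T-t_k}(X_{t_k})\,\d t + \sqrt{2}\,\d B_t\bigr).$$
Integrating from $t_k$ to $t_{k+1}$, multiplying by $\e^{t_{k+1}}$, and evaluating the deterministic integral $\int_{t_k}^{t_{k+1}} \e^{t_{k+1}-t}\,\d t = \e^{t_{k+1}-t_k}-1$ yields
$$X_{t_{k+1}} = \e^{t_{k+1}-t_k} X_{t_k} + 2(\e^{t_{k+1}-t_k}-1) s_{T-t_k}(X_{t_k}) + \sqrt{2}\int_{t_k}^{t_{k+1}} \e^{-(t-t_{k+1})}\,\d B_t,$$
which identifies the residual stochastic integral with $\Xi_k$ and establishes the stated update rule.

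Next, I would read off the law of $\Xi_k$ and its joint law with $B_{t_{k+1}}-B_{t_k}$ from It\^o isometry. Each coordinate of $\Xi_k$ is a Wiener integral of a deterministic scalar integrand against a one-dimensional BM, hence centred Gaussian with variance $\int_{t_k}^{t_{k+1}} 2\e^{-2(t-t_{k+1})}\,\d t = \e^{2(t_{k+1}-t_k)}-1$; different coordinates are independent because the component BMs are, which gives $\Xi_k \sim \n{0,(\e^{2(t_{k+1}-t_k)}-1)I}$. For the correlation, the bilinear form of It\^o isometry gives, coordinatewise,
$$\E\bigl[\Xi_{k,i}\,(B_{t_{k+1}}-B_{t_k})_i\bigr] = \int_{t_k}^{t_{k+1}} \sqrt{2}\,\e^{-(t-t_{k+1})}\,\d t = \sqrt{2}(\e^{t_{k+1}-t_k}-1),$$
while cross-coordinate covariances vanish by independence. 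Dividing by $\sqrt{(\e^{2(t_{k+1}-t_k)}-1)(t_{k+1}-t_k)}$ yields the claimed correlation matrix.

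There is no conceptual obstacle here; this is essentially a bookkeeping exercise. The only place where care is needed is in tracking the signs and reference points of the exponentials, i.e., whether the integrating factor is anchored at $t_k$ or $t_{k+1}$, which is easy to verify by a sanity check in the small-step limit $t_{k+1}-t_k\to 0$: the update should reduce to a first-order Euler step, and the correlation should tend to $1$ (since $\Xi_k \approx \sqrt{2}(B_{t_{k+1}}-B_{t_k})$ to leading order), both of which follow from the stated formulas.
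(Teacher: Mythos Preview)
Your proposal is correct and follows essentially the same approach as the paper: apply It\^o's formula to $\e^{-t}X_t$ on $[t_k,t_{k+1}]$ to obtain the update rule, then use It\^o isometry to compute the variance of $\Xi_k$ and its covariance with $B_{t_{k+1}}-B_{t_k}$, and finally normalize to get the correlation matrix.
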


\begin{proof}
    By applying It\^o's formula to \cref{eq:ou_rev_score} for $t\in[t_k,t_{k+1}]$, we have
    \begin{align*}
        \d(\e^{-t}X_t)&=\e^{-t}(-X_t\d t+\d X_t)=\e^{-t}(2s_{T-t_k}(X_{t_k})\d t+\sqrt[]{2}\d B_t)\\
        \implies\e^{-t_{k+1}}X_{t_{k+1}}-\e^{-t_k}X_{t_k}&=2(\e^{-t_k}-\e^{-t_{k+1}})s_{T-t_k}(X_{t_k})+\int_{t_k}^{t_{k+1}}\sqrt{2}\e^{-t}\d B_t.
    \end{align*}
    The covariance between two zero-mean Gaussian random variables $\Xi_k$ and $B_{t_{k+1}}-B_{t_k}$ is
    \begin{align*}
        \cov(\Xi_k,B_{t_{k+1}}-B_{t_k})&=\E\sq{\Xi_k(B_{t_{k+1}}-B_{t_k})\tp}\\
        &=\E\sq{\ro{\int_{t_k}^{t_{k+1}}\sqrt{2}\e^{-(t-t_{k+1})}\d B_t}\ro{\int_{t_k}^{t_{k+1}}\d B_t}\tp}\\
        &=\int_{t_k}^{t_{k+1}}\sqrt{2}\e^{-(t-t_{k+1})}\d t\cdot I=\sqrt{2}(\e^{t_{k+1}-t_k}-1)I.
    \end{align*}
    Finally, $\corr(u,v)=\diag(\cov u)^{-\frac12}\cov(u,v)\diag(\cov v)^{-\frac12}$ yields the correlation.
\end{proof}

\begin{lemma}[{\citet[Lemma 4.E.1]{chewi2022log}}]
    Consider a probability measure $\mu\propto\e^{-U}$ on $\R^d$. 
    \begin{enumerate}[wide=0pt,itemsep=0pt, topsep=0pt,parsep=0pt,partopsep=0pt]
        \item If $\nabla^2U\succeq \alpha I$ for some $\alpha>0$ and $x_\star$ is the global minimizer of $U$, then $\E_{\mu}{\|\cdot-x_\star\|^2}\le\frac{d}{\alpha}$.
        \item If $\nabla^2U\preceq \beta I$ for some $\beta>0$, then $\E_{\mu}{\|\nabla U\|^2}\le\beta d$.
    \end{enumerate}

    \label{lem:2ordmomlogccv}
\end{lemma}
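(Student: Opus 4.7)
The plan is to prove the two claims using classical integration-by-parts arguments against the Gibbs measure $\mu\propto\e^{-U}$, assuming sufficient integrability/regularity so that boundary terms in the divergence theorem vanish (a standard assumption in this setting, and harmless under strong convexity or smoothness plus finite second moments).

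For part \textbf{2}, I would start from the identity $\nabla\cdot(\e^{-U}\nabla U)=\e^{-U}\Delta U-\e^{-U}\|\nabla U\|^2$. Integrating over $\R^d$ and using that the left side vanishes (as $\e^{-U}\nabla U$ decays at infinity), I get
\begin{equation*}
\E_\mu\|\nabla U\|^2=\E_\mu\Delta U=\E_\mu\tr\nabla^2 U\le\beta d,
\end{equation*}
where the last step uses $\nabla^2 U\preceq\beta I$. This is a one-line argument once the integration by parts is justified.

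For part \textbf{1}, I would apply the divergence theorem to the vector field $x\mapsto(x-x_\star)\e^{-U(x)}$: since $\nabla\cdot\bigl((x-x_\star)\e^{-U(x)}\bigr)=d\,\e^{-U}-\langle x-x_\star,\nabla U\rangle\e^{-U}$, integration yields
\begin{equation*}
\E_\mu\inn{\nabla U(x),x-x_\star}=d.
\end{equation*}
On the other hand, $\alpha$-strong convexity of $U$ combined with $\nabla U(x_\star)=0$ gives the pointwise inequality $\inn{\nabla U(x),x-x_\star}=\inn{\nabla U(x)-\nabla U(x_\star),x-x_\star}\ge\alpha\|x-x_\star\|^2$. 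Taking expectations and combining with the identity yields $\alpha\E_\mu\|\cdot-x_\star\|^2\le d$, which is the desired bound.

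The only real obstacle is the technical justification of vanishing boundary terms in the two integrations by parts. Under $\alpha$-strong convexity, $\e^{-U}$ decays at least like a Gaussian at infinity and $\nabla U$ grows only linearly in $\|x-x_\star\|$, so both $(x-x_\star)\e^{-U}$ and $\e^{-U}\nabla U$ are integrable with integrable divergences — a standard regularity check that can be dispatched by truncation to balls $B_R$ and letting $R\to\infty$. Since this is a cited result from Chewi's lecture notes, I would simply invoke these decay properties without elaborating further and present the two identity-plus-inequality arguments above as the core of the proof.
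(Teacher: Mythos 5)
Your proof is correct: the identity $\E_\mu\|\nabla U\|^2=\E_\mu\Delta U\le\beta d$ and the pairing of $\E_\mu\inn{\nabla U(x),x-x_\star}=d$ with $\inn{\nabla U(x),x-x_\star}\ge\alpha\|x-x_\star\|^2$ are exactly the standard integration-by-parts arguments, and your handling of the boundary terms by truncation is the right (and only) technical point. The paper itself gives no proof — it simply cites \citet[Lemma 4.E.1]{chewi2022log} — and your argument matches the proof in that reference, so there is nothing further to reconcile.
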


\begin{lemma}
    Define $\pih_\lambda\propto\exp\ro{-V-\frac{\lambda}{2}\|\cdot\|^2}$, $\lambda\ge0$. Then under \cref{assu:pi}, $\E_{\pih_\lambda}{\|\cdot\|^2}\le m^2$ for all $\lambda\ge0$.%
    \label{lem:2ordmom}
\end{lemma}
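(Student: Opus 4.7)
The plan is to show that $\lambda \mapsto \E_{\pih_\lambda}\|\cdot\|^2$ is non-increasing on $[0,\infty)$, which immediately yields the bound since $\pih_0 = \pi$ and hence $\E_{\pih_0}\|\cdot\|^2 = m^2$ by definition. The intuition is straightforward: increasing $\lambda$ adds a stronger quadratic penalty $\frac{\lambda}{2}\|x\|^2$ to the potential, pulling mass toward the origin and thereby decreasing the second moment.

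To make this rigorous, I would introduce the log-partition function
$$F(\lambda) := \log \int_{\R^d} \exp\!\left(-V(x) - \tfrac{\lambda}{2}\|x\|^2\right) \d x.$$
Note that $F$ is well-defined on $[0,\infty)$ since $\pih_0 = \pi$ is a probability measure and increasing $\lambda$ only makes the integrand smaller pointwise; moreover, for any $\lambda > 0$ the integrand has Gaussian tails, so all moments of $\pih_\lambda$ are finite. Standard dominated convergence arguments justify differentiating under the integral sign, giving
$$F'(\lambda) = -\tfrac{1}{2}\, \E_{\pih_\lambda}\|\cdot\|^2, \qquad F''(\lambda) = \tfrac{1}{4}\,\var_{\pih_\lambda}\bigl(\|\cdot\|^2\bigr) \ge 0.$$

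Combining these,
$$\frac{\d}{\d \lambda}\, \E_{\pih_\lambda}\|\cdot\|^2 = -2 F''(\lambda) = -\tfrac{1}{2}\var_{\pih_\lambda}\bigl(\|\cdot\|^2\bigr) \le 0,$$
so $\lambda \mapsto \E_{\pih_\lambda}\|\cdot\|^2$ is non-increasing. Evaluating at $\lambda = 0$ gives $\E_{\pih_0}\|\cdot\|^2 = \E_\pi \|\cdot\|^2 = m^2$, and the claim follows.

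There is no real obstacle here; the only mild technicality is justifying the interchange of differentiation and integration, which is routine given that the Gibbs-type family $\{\pih_\lambda\}_{\lambda \ge 0}$ has uniformly integrable $\|\cdot\|^4$ over compact $\lambda$-intervals (once $\lambda > 0$ the distribution has super-Gaussian tails, and at $\lambda = 0$ finiteness of $m^2$ is given by \cref{assu:pi}). Alternatively, one could give a coupling or FKG-type proof by noting that $\pih_\lambda$ differs from $\pi$ by multiplication by the decreasing function $x \mapsto e^{-\lambda\|x\|^2/2}$ of $\|x\|^2$, but the derivative computation via the log-partition function is the shortest route.
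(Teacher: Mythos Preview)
Your proposal is correct and essentially identical to the paper's proof: both differentiate $\E_{\pih_\lambda}\|\cdot\|^2$ with respect to $\lambda$ and obtain $-\tfrac{1}{2}\var_{\pih_\lambda}(\|\cdot\|^2)\le 0$ (the paper writes this as $\tfrac{1}{2}\bigl((\E_{\pih_\lambda}\|\cdot\|^2)^2-\E_{\pih_\lambda}\|\cdot\|^4\bigr)$, which is the same thing). Your framing via convexity of the log-partition function is a nice conceptual touch but the computation is the same.
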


\begin{proof}
    Let $V_\lambda:=V+\frac{\lambda}{2}\|\cdot\|^2$, and $Z_\lambda=\int\e^{-V_\lambda}\d x$, so $\pih_\lambda=\e^{-V_\lambda-\log Z_\lambda}$. We have
    \begin{align*}
        \de{}{\lambda}\log Z_\lambda&=\frac{Z'_\lambda}{Z_\lambda}=-\frac{1}{Z_\lambda}\int\e^{-V_\lambda}V_\lambda'\d x=-\frac{1}{2}\E_{\pih_\lambda}\|\cdot\|^2,\\
        \implies\de{}{\lambda}\log\pih_\lambda&=-V'_\lambda-\de{}{\lambda}\log Z_\lambda=\frac{1}{2}\ro{\E_{\pih_\lambda}\|\cdot\|^2-\|\cdot\|^2},\\
        \implies\de{}{\lambda}\E_{\pih_\lambda}{\|\cdot\|^2}&=\int\|\cdot\|^2\ro{\de{}{\lambda}\log\pih_\lambda}\d\pih_\lambda=\frac{1}{2}\ro{\ro{\E_{\pih_\lambda}{\|\cdot\|^2}}^2-\E_{\pih_\lambda}{\|\cdot\|^4}}\le0.  
    \end{align*}

\end{proof}

\section{Review and Discussion on the Error Guarantee \cref{eq:acc_whp}}
\label{app:guarantee}
\subsection{Literature Review of Existing Bounds}
\paragraph{Estimation of $Z$.} Traditionally, the statistical properties of an estimator are typically analyzed through its bias and variance. However, deriving closed-form expressions of the variance of $\Zh$ and $\Fh$ in JE remains challenging. Recall that the estimator $\Zh=Z_0\e^{-W(X)}$, $X\sim\Pr$ for $Z=Z_0\e^{-\Delta F}$, and that JE implies $\bias\Zh=0$. For general (sub-optimally) controlled SDEs, \citet{hartmann2024nonasymptotic} established both upper and lower bounds of the relative error of the importance sampling estimator, yet bounds tailored for JE are not well-studied. Inspired by this, we establish an upper bound on the \emph{normalized variance} $\var\frac{\Zh}{Z}$ in \cref{thm:jar_var} at the end of this section using techniques in R\'enyi divergence. However, we remark that connecting this upper bound to the properties of the curve (e.g., action) is non-trivial, which we leave for future work. 

\paragraph{Estimation of $F$.} Turning to the estimator $\Fh=-\log\Zh$ for $F=-\log Z$, we have
$$\bias\Fh=\E_{\Pr}W-\Delta F=\cW-\Delta F=\cW_\mathrm{diss}.$$
Bounding the average dissipated work $\cW_\mathrm{diss}=\kl(\Pr\|\Pl)=-\E_{\Pr}\int_0^T(\partial_t\log\pit_t)(X_t)\d t$ remains challenging as well, as the law of $X_t$ under $\Pr$ is unknown, thus complicating the bounding of the expectation. To the best of our knowledge, \citet{chen2020stochastic} established a lower bound in terms of $\w_2(\pi_0,\pi_1)$ via the Wasserstein gradient flow, but an upper bound remains elusive. Furthermore, $\E\Fh^2=\E_{\Pr(X)}\ro{\log Z_0-W(X)}^2$ is similarly intractable to analyze. 

For multiple estimators, i.e., $\Fh_K:=-\log\ro{Z_0\frac{1}{M}\sum_{k=1}^K\e^{-W(X^{(k)})}}$ where $X^{(1)},...,X^{(K)}\iid\Pr$, \citet{zuckerman2002theory,zuckerman2004systematic} (see also \citet[Sec. 4.1.5]{lelievre2010free}) derived approximate asymptotic bounds on $\bias\Fh_K$ and $\var\Fh_K$ via the delta method (or equivalently, the central limit theorem and Taylor expansions). Precise and non-asymptotic bounds remain elusive to date.

\subsection{Equivalence in Complexities for Estimating $Z$ and $F$}
We prove the claim in \cref{rmk:guarantee} that estimating $Z$ with $O(\varepsilon)$ relative error and estimating $F$ with $O(\varepsilon)$ absolute error share the same complexity up to absolute constants. This follows directly from \cref{lem:logat1}: for any $\varepsilon\in\ro{0,\frac{1}{2}}$,
$$\mbox{\cref{eq:acc_whp}}\implies\prob\ro{|\Fh-F|\le2\varepsilon}\ge\frac{3}{4},\quad\text{and}\quad\mbox{\cref{eq:acc_whp}}\impliedby\prob\ro{|\Fh-F|\le\frac{\varepsilon}{2}}\ge\frac{3}{4}.$$

\subsection{\cref{eq:acc_whp} is Weaker than Bias and Variance}
We demonstrate that \cref{eq:acc_whp} is a weaker criterion than controlling bias and variance, which is an immediate result from the Chebyshev inequality:
$$\prob\ro{\abs{\frac{\Zh}{Z}-1}\ge\varepsilon}\le\frac{1}{\varepsilon^2}\E\ro{\frac{\Zh}{Z}-1}^2=\frac{\bias^2\Zh+\var\Zh}{\varepsilon^2Z^2},$$
$$\prob\ro{|\Fh-F|\ge\varepsilon}\le\frac{\E(\Fh-F)^2}{\varepsilon^2}=\frac{\bias^2\Fh+\var\Fh}{\varepsilon^2}.$$

On the other hand, suppose one has established a bound in the following form: 
$$\prob\ro{\abs{\frac{\Zh}{Z}-1}\ge\varepsilon}\le p(\varepsilon),\quad\text{for some}~p:[0,\infty)\to[0,1],$$
and assume that $\Zh$ is unbiased. Then this implies
$$\var\frac{\Zh}{Z}=\E\ro{\frac{\Zh}{Z}-1}^2=\int_0^\infty\prob\ro{\ro{\frac{\Zh}{Z}-1}^2\ge\varepsilon}\d\varepsilon\le\int_0^\infty p(\sqrt\varepsilon)\d\varepsilon.$$

\subsection{An Upper Bound on the Normalized Variance of $\Zh$ in Jarzynski Equality}
\begin{proposition}
    Under the setting of JE (\cref{thm:jar}), let $(v_t)_{t\in[0,T]}$ be any vector field that generates $(\pit_t)_{t\in[0,T]}$, and define $\P$ as the path measure of \cref{eq:jar_p}. Then, 
    $$\var\frac{\Zh}{Z}\le\sq{\E_\P\exp\ro{14\int_0^T\|v_t(X_t)\|^2\d t}}^\frac{1}{2}-1.$$
    \label{thm:jar_var}
\end{proposition}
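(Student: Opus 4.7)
The plan is to reduce the variance to a second moment, change measure from $\Pr$ to $\P$, invoke Girsanov to recast the resulting integrand as an exponential of the action-like functional $I:=\int_0^T\|v_t(X_t)\|^2\,\d t$ plus a stochastic-integral term, and close the bound via Cauchy--Schwarz and the Dol\'eans--Dade exponential martingale identity.

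By the Crooks fluctuation identity (\cref{eq:jar_rn}) from the proof of \cref{thm:jar}, $\Zh/Z=\e^{\Delta F-W(X)}=(\d\Pl/\d\Pr)(X)$ for $X\sim\Pr$, so the variance equals $\E_{\Pr}(\d\Pl/\d\Pr)^2-1$. Changing measure to $\P$ yields
\begin{equation*}
    \E_{\Pr}(\d\Pl/\d\Pr)^2=\E_{\P}\,\e^{L_1-2L_2},\qquad L_1:=\log\tfrac{\d\P}{\d\Pr},\;\;L_2:=\log\tfrac{\d\P}{\d\Pl}.
\end{equation*}
Applying \cref{lem:rn_path_measure} to the forward representation of $\P$ (drift $\nabla\log\pit_t+v_t$, $X_0\sim\pit_0$) and \cref{lem:rn_path_measure_contd} to the backward representations of $\P$ and $\Pl$ given by Nelson's relation (drifts $-\nabla\log\pit_t+v_t$ and $-\nabla\log\pit_t$, both with $X_T\sim\pit_T$) puts each $L_i$ into the canonical Dol\'eans form $\tfrac14 I+\tfrac{1}{\sqrt 2}\mathcal{M}_i$, where $\mathcal{M}_1$ is a forward and $\mathcal{M}_2$ a backward continuous $\P$-martingale, both with quadratic variation $I$.

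The core estimate then splits via Cauchy--Schwarz as $\E_\P\e^{L_1-2L_2}\le\sqrt{\E_\P\e^{2L_1}}\sqrt{\E_\P\e^{-4L_2}}$ and invokes the general inequality
\begin{equation*}
    \E\,\e^{\lambda\mathcal{M}_T-c[\mathcal{M}]_T}\le\sqrt{\E\,\e^{2(\lambda^2-c)[\mathcal{M}]_T}}
\end{equation*}
valid for any continuous (forward or backward) local martingale $\mathcal{M}$ satisfying Novikov's condition. This follows by writing $\e^{\lambda\mathcal{M}-c[\mathcal{M}]}=\mathcal{E}(2\lambda\mathcal{M})^{1/2}\cdot\e^{(\lambda^2-c)[\mathcal{M}]}$, applying Cauchy--Schwarz, and using $\E\mathcal{E}(2\lambda\mathcal{M})_T=1$. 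Applied to $2L_1=\tfrac12 I+\sqrt 2\mathcal{M}_1$ with $(\lambda,c)=(\sqrt 2,-\tfrac12)$ it produces the coefficient $2(\lambda^2-c)=5$; applied to $-4L_2=-I-2\sqrt 2\mathcal{M}_2$ with $(\lambda,c)=(-2\sqrt 2,1)$ it produces $2(\lambda^2-c)=14$---this is where the constant $14$ in the statement arises. Combining,
\begin{equation*}
    \E_{\Pr}(\d\Pl/\d\Pr)^2\le(\E_\P\e^{5I})^{1/4}(\E_\P\e^{14I})^{1/4}.
\end{equation*}

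A final Jensen step consolidates the two factors: since $\e^{5I}=(\e^{14I})^{5/14}$ with $5/14<1$, concavity gives $(\E_\P\e^{5I})^{1/4}\le(\E_\P\e^{14I})^{5/56}$, so the product is at most $(\E_\P\e^{14I})^{5/56+14/56}=(\E_\P\e^{14I})^{19/56}$; because $\E_\P\e^{14I}\ge 1$ and $19/56<1/2$, this is $\le\sqrt{\E_\P\e^{14I}}$, which yields the claim upon subtracting $1$. The main obstacle is the Girsanov bookkeeping for $L_2$: since $\P$ and $\Pl$ are most naturally compared in their backward representations, one must handle the backward stochastic integral with the same Dol\'eans calculus as its forward counterpart, and the symmetric treatment of forward and backward Girsanov formulas in \cref{app:pre} is precisely what makes this step go through.
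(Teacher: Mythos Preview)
Your proof is correct and essentially the same as the paper's, just packaged more directly. The paper invokes the weak triangle inequality for R\'enyi divergences, $\renyi_2(\Pl\|\Pr)\le\tfrac{3}{2}\renyi_4(\Pl\|\P)+\renyi_3(\P\|\Pr)$, and then bounds $\E_\P(\d\Pr/\d\P)^q$ and $\E_\P(\d\Pl/\d\P)^q$ by the same Cauchy--Schwarz/exponential-martingale trick you use; exponentiating their triangle inequality with $q=2$ recovers exactly your change-of-measure step $\E_\P\e^{L_1-2L_2}\le(\E_\P\e^{2L_1})^{1/2}(\E_\P\e^{-4L_2})^{1/2}$, and their moment bound with $q=-2$ and $q=4$ gives precisely your coefficients $5$ and $14$. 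Your final consolidation via Jensen even yields the marginally sharper exponent $19/56$ before relaxing to $1/2$; the paper simply uses $\E_\P\e^{5I}\le\E_\P\e^{14I}$. In short, you have unpacked the R\'enyi triangle inequality into its underlying Cauchy--Schwarz step, so the arguments coincide.
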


\begin{proof}
The proof is inspired by \citet{chewi2022analysis}. Note that
$$\var\frac{\Zh}{Z}=\E\ro{\frac{\Zh}{Z}}^2-1=\E_{\Pr}\ro{\e^{-W(X)+\Delta F}}^2-1=\E_{\Pr}\ro{\de{\Pl}{\Pr}}^2-1,$$
which is the $\chi^2$ divergence from $\Pl$ to $\Pr$. Recall the $q(>1)$-R\'enyi divergence defined as $\renyi_q(\mu\|\nu)=\frac{1}{q-1}\log\E_\nu\ro{\de{\mu}{\nu}}^q$, and that $\chi^2(\Pl\|\Pr)=\e^{\renyi_2(\Pl\|\Pr)}-1$. By the weak triangle inequality of R\'enyi divergence \citep[Lem. 6.2.5]{chewi2022log}: 
$$\renyi_2(\Pl\|\Pr)\le\frac{3}{2}\renyi_4(\Pl\|\P)+\renyi_3(\P\|\Pr).$$

We now bound $\E_\P\ro{\de{\Pr}{\P}}^q$ for any $q\in\R$. By Girsanov's theorem (\cref{lem:rn_path_measure}),
$$\log\de{\Pr}{\P}(X)=\int_0^T\ro{-\frac{1}{\sqrt{2}}\inn{v_t(X_t),\d B_t}-\frac{1}{4}\|v_t(X_t)\|^2\d t},~\text{a.s.}~X\sim\P.$$

Therefore,
\begin{align*}
    &\E_\P\ro{\de{\Pr}{\P}}^q\\
    &=\E_\P\exp\int_0^T\ro{-\frac{q}{\sqrt{2}}\inn{v_t(X_t),\d B_t}-\frac{q}{4}\|v_t(X_t)\|^2\d t}\\
    &=\E_\P\exp\sq{\int_0^T\ro{-\frac{q}{\sqrt{2}}\inn{v_t(X_t),\d B_t}-\frac{q^2}{2}\|v_t(X_t)\|^2\d t}+\int_0^T\ro{\frac{q^2}{2}-\frac{q}{4}}\|v_t(X_t)\|^2\d t}\\
    &\le\ro{\E_\P\exp\sq{\int_0^T\ro{-\sqrt{2}q\inn{v_t(X_t),\d B_t}-q^2\|v_t(X_t)\|^2\d t}}}^\frac12\\
    &\cdot\ro{\E_\P\exp\sq{\ro{q^2-\frac{q}{2}}\int_0^T\|v_t(X_t)\|^2\d t}}^\frac12,
\end{align*}
where the last line is by the Cauchy-Schwarz inequality. Let $M_t:=-\sqrt{2}q\int_0^t\inn{v_r(X_r),\d B_r}$, $X\sim\P$ be a continuous martingale with quadratic variation $[M]_t=\int_0^t2q^2\|v_r(X_r)\|^2\d r$. By \citet[Chap. 3.5.D]{karatzas1991brownian}, the process $t\mapsto\e^{M_t-\frac{1}{2}[M]_t}$ is a super martingale, and hence $\E\e^{M_T-\frac{1}{2}[M]_T}\le1$. Thus, we have 
$$\E_\P\ro{\de{\Pr}{\P}}^q\le\ro{\E_\P\exp\sq{\ro{q^2-\frac{q}{2}}\int_0^T\|v_t(X_t)\|^2\d t}}^\frac12$$
From Girsanov's theorem (\cref{lem:rn_path_measure_contd}), we can similarly obtain the following RN derivative:
$$\log\de{\Pl}{\P}(X)=\int_0^T\ro{-\frac{1}{\sqrt{2}}\inn{v_t(X_t),*\d\Bl_t}-\frac{1}{4}\|v_t(X_t)\|^2\d t},~\text{a.s.}~X\sim\P.$$
and use the same argument to show that $\E_\P\ro{\de{\Pl}{\P}}^q$ has exactly the same upper bound as $\E_\P\ro{\de{\Pr}{\P}}^q$. In particular, we can use the same martingale argument, whereas now the \emph{backward} continuous martingale is defined as $M'_t:=-\sqrt{2}q\int_t^T\inn{v_r(X_r),*\d\Bl_r}$, $X\sim\P$, with quadratic variation $[M']_t=\int_t^T2q^2\|v_r(X_r)\|^2\d r$. Therefore, we conclude that
\begin{align*}
    \renyi_2(\Pl\|\Pr)&\le\frac{1}{4}\log\E_\P\exp\ro{14\int_0^T\|v_t(X_t)\|^2\d t}+\frac{1}{4}\log\E_\P\exp\ro{5\int_0^T\|v_t(X_t)\|^2\d t}\\
    &\le\frac{1}{2}\log\E_\P\exp\ro{14\int_0^T\|v_t(X_t)\|^2\d t}.
\end{align*}
\end{proof}

\section{Related Works}
\label{app:rel_work}
\subsection{Thermodynamic Integration}
\label{app:rel_work_ti}
\paragraph{(I) Review of TI.} We first briefly review the thermodynamic integration (TI) algorithm. Its essence is to write the free-energy difference as an integral of the derivative of free energy. Consider the general curve of probability measures $(\pi_\theta)_{\theta\in[0,1]}$ defined in \cref{eq:pi_theta}. Then, 
\begin{equation}
    \de{}{\theta}\log Z_\theta=-\frac{1}{Z_\theta}\int\e^{-V_\theta(x)}\partial_\theta V_\theta(x)\d x=-\E_{\pi_\theta}\partial_\theta V_\theta\implies\log\frac{Z}{Z_0}=-\int_0^1\E_{\pi_\theta}\partial_\theta V_\theta\d\theta.
    \label{eq:ti}    
\end{equation}
One may choose time points $0=\theta_0<...<\theta_M=1$ and approximate \cref{eq:ti} by a Riemann sum:
\begin{equation}
    \log\frac{Z}{Z_0}\approx-\sum_{\l=0}^{M-1}(\theta_{\l+1}-\theta_\l)\E_{\pi_{\theta_\l}}\partial_\theta|_{\theta=\theta_\l}V_{\theta},
    \label{eq:ti_approx}
\end{equation}
where the expectation under each $\pi_{\theta_\l}$ can be estimated by sampling from $\pi_{\theta_\l}$. Nevertheless, there is a way of writing the exact equality instead of the approximation in \cref{eq:ti_approx}: since 
\begin{align*}
    \log\frac{Z_{\theta_{\l+1}}}{Z_{\theta_\l}}&=\log\int\frac{1}{Z_{\theta_\l}}\e^{-V_{\theta_\l}(x)}\e^{-(V_{\theta_{\l+1}}(x)-V_{\theta_\l}(x))}\d x=\log\E_{\pi_{\theta_\l}}\e^{-(V_{\theta_{\l+1}}-V_{\theta_\l})},
\end{align*}
by summing over $\l=0,...,M-1$, we have
\begin{equation}
    \log\frac{Z}{Z_0}=\sum_{\l=0}^{M-1}\log\E_{\pi_{\theta_\l}}\e^{-(V_{\theta_{\l+1}}-V_{\theta_\l})},
    \label{eq:ti_exact}
\end{equation}
which constitutes the estimation framework used in \citet{brosse2018normalizing,ge2020estimating,chehab2023provable,kook2025sampling}. Hence, we also use TI to name this algorithm. 

\paragraph{(II) TI as a special case of AIS.} We follow the notations used in \cref{thm:ais} to demonstrate the following claim: \emph{TI \cref{eq:ti_exact} is a special case of AIS with every transition kernel $F_\l(x,\cdot)$ chosen as the perfect proposal $\pi_{\theta_\l}$}.

\begin{proof}
In AIS, with $F_\l(x,\cdot)=\pi_{\theta_\l}$ in the forward path $\Pr$, we have $\Pr(x_{0:M})=\prod_{\l=0}^{M}\pi_{\theta_\l}(x_\l)$. In this special case, 
$$W(x_{0:M})=\log\prod_{\l=0}^{M-1}\frac{\e^{-V_{\theta_\l}(x_\l)}}{\e^{-V_{\theta_{\l+1}}(x_\l)}},$$
and hence the AIS equality becomes the following identity, exactly the same as \cref{eq:ti}:
\begin{equation}
    \frac{Z}{Z_0}=\e^{-\Delta\cF}=\E_{\Pr}{\e^{-W}}=\prod_{\l=0}^{M-1}\E_{\pi_{\theta_\l}}\e^{-(V_{\theta_{\l+1}}-V_{\theta_\l})}.
    \label{eq:ais_fl_pil}
\end{equation}

\end{proof}

\paragraph{(III) The distinction between \textit{equilibrium} and \textit{non-equilibrium} methods.} In our AIS framework, the distinction lies in the choice of the transition kernels $F_\l(x,\cdot)$ within the AIS framework.

In equilibrium methods, the transition kernels are ideally set to the perfect proposal $\pi_{\theta_\l}$. However, in practice, exact sampling from $\pi_{\theta_\l}$ is generally infeasible. Instead, one can apply multiple MCMC iterations targeting $\pi_{\theta_\l}$, leveraging the mixing properties of MCMC algorithms to gradually approach the desired distribution $\pi_{\theta_\l}$. Nonetheless, unless using exact sampling methods such as rejection sampling -- which is exponentially expensive in high dimensions -- the resulting sample distribution inevitably remains biased with a finite number of MCMC iterations. 

In contrast, non-equilibrium methods employ transition kernels specifically designed to transport $\pi_{\l-1}$ toward $\pi_{\l}$, often following a curve of probability measures. This distinguishes them as inherently non-equilibrium. A key advantage of this approach over the equilibrium one is its ability to provide unbiased estimates, as demonstrated in JE and AIS.

\paragraph{(IV) Complexity bounds for TI.} For the TI algorithm in \cref{alg:ais} used to estimate $Z_0=\int\e^{-V-\beta\|\cdot\|^2}\d x$, the analysis \citet{ge2020estimating} indicates that it suffices to choose $K=\Thetat(\sqrt{d})$ intermediate distributions and $N=\Thetat\ro{\frac{\sqrt{d}}{\varepsilon^2}}$ particles with multilevel estimation, which leads to a total complexity of $\Ot\ro{\frac{d^{\frac43}}{\varepsilon^2}}$ to achieve the requirement in \cref{eq:ais_main_z0} (note that the condition number of the potential $V+\beta\|\cdot\|^2$ is $O(1)$).

\subsection{Path Integral Sampler and Controlled Monte Carlo Diffusion}
\label{app:rel_work_pis_cmcd}
In this section, we briefly discuss two learning-based samplers used for normalizing constant estimation and refer readers to the original papers for detailed derivations. The path integral sampler (PIS) shares structural similarities with the RDS framework discussed in \cref{thm:revdif}, using the time-reversal of a universal noising process that transforms any distribution into a prior -- such as the OU process in RDS that converges to the standard normal or the Brownian bridge in PIS that converges to the delta distribution at zero. In contrast, the controlled Monte Carlo diffusion (CMCD) extends the JE framework from \cref{sec:jar}, focusing on learning the compensatory drift term along an arbitrary interpolating curve $(\pi_\theta)_{\theta\in[0,1]}$, as long as the density of each intermediate distribution $\pi_\theta$ is known up to a constant.

\paragraph{Path integral sampler (PIS, \citet{zhang2022path}).} The PIS learns the drift term of a reference SDE that interpolates the delta distribution at $0$ and the target distribution $\pi$, which is closely connected with the Brownian bridge and the F\"ollmer drift \citep{chewi2022log}.

Fix a time horizon $T>0$. For any drift term $(u_t)_{t\in[0,T]}$, let $\cQ^u$ be the path measure of the following SDE:
$$\d X_t=u_t(X_t)\d t+\d B_t,~t\in[0,T];~X_0\aseq0.$$

In particular, when $u\equiv0$, the marginal distribution of $X_T$ under $\cQ^0$ is $\n{0,TI}=:\phi_T$. Define another path measure $\cQ^*$ by
\begin{equation*}
    \cQ^*(\d\xi_{[0,T]}):=\cQ^0(\d\xi_{[0,T)}|\xi_T)\pi(\d\xi_T)=\cQ^0(\d\xi_{[0,T]})\de{\pi}{\phi_T}(\xi_T),~\forall\xi\in C([0,T];\R^d)%
    \label{eq:pis_qstar}
\end{equation*}
and consider the problem
$$u^*=\argmin_u\kl(\cQ^u\|\cQ^*)\implies\cQ^{u^*}=\cQ^*.$$ 
One can calculate the KL divergence between these path measures via Girsanov's theorem (\cref{lem:rn_path_measure}): 
\begin{align*}
    \log\de{\cQ^u}{\cQ^*}(X)&=W^u(X)+\log Z,~\text{a.s.}~X\sim\cQ^u,~\text{where}\\
    W^u(X)&=\int_0^T\inn{u_t(X_t),\d B_t}+\frac{1}{2}\int_0^T\|u_t(X_t)\|^2\d t-\frac{\|X_T\|^2}{2T}+V(X_T)-\frac{d}{2}\log2\pi T,
\end{align*}
which implies $Z=\E_{\cQ^u}{\e^{-W^u}}$, and $\kl(\cQ^u\|\cQ^*)=\E_{\cQ^u}{W^u}+\log Z$. On the other hand, directly applying \cref{lem:rn_path_measure} gives 
$$\kl(\cQ^u\|\cQ^*)=\frac{1}{2}{\int_0^T\E_{\cQ^u}\|u_t(X_t)-u^*_t(X_t)\|^2\d t}.$$

In \citet[Theorem 3]{zhang2022path}, the authors considered the effective sample size (ESS) defined by $\mathrm{ESS}^{-1}=\E_{\cQ^u}{\ro{\de{\cQ^*}{\cQ^u}}^2}$ as the convergence criterion, and stated that $\mathrm{ESS}\ge1-\varepsilon$ as long as $\sup_{t\in[0,T]}\|u_t-u^*_t\|^2_{L^\infty}\le\frac{\varepsilon}{T}$. However, this condition is generally hard to verify since the closed-form expression of $u^*$ is unknown, and the $L^\infty$ bound might be too strong. Using the criterion (\cref{eq:acc_whp}) and the same methodology in proving the convergence of JE (\cref{thm:jar_complexity}), we can establish an improved result on the convergence guarantee of this estimator, relating the relative error to the training loss of $u$, which is defined as
$$\min_u L(u):=\E_{\cQ^u}\sq{\frac{1}{2}\int_0^T\|u_t(X_t)\|^2\d t-\frac{\|X_T\|^2}{2T}+V(X_T)}=\kl(\cQ^u\|\cQ^*)-\log Z+\frac{d}{2}\log2\pi T$$
\begin{proposition}
    \label{thm:pis_complexity}
    Consider the estimator $\Zh:=\e^{-W^u(X)}$, $X\sim\cQ^u$ for $Z$. To achieve both $\kl(\cQ^u_T\|\pi)\lesssim\varepsilon^2$ (with $\cQ^u_T$ representing the law of $X_T$ in the sampled trajectory $X\sim\cQ^u$) and $\prob\ro{\abs{\frac{\Zh}{Z}-1}\le\varepsilon}\ge\frac{3}{4}$, it suffices to choose $u$ that satisfies
    $$L(u)=-\log Z+\frac{d}{2}\log2\pi T+O(\varepsilon^2).$$
\end{proposition}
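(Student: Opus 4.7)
The proposal is to reduce both desired guarantees to a single KL bound on path measures, namely $\kl(\cQ^u\|\cQ^*)\lesssim\varepsilon^2$, and then mimic the argument already used for JE (cf. the proof of \cref{thm:jar_complexity}). The starting observation is that the hypothesis on the training loss is just a restatement of this KL bound: using the identity $L(u)=\kl(\cQ^u\|\cQ^*)-\log Z+\frac{d}{2}\log2\pi T$ recorded right above the statement, the assumption $L(u)=-\log Z+\frac{d}{2}\log2\pi T+O(\varepsilon^2)$ is equivalent to $\kl(\cQ^u\|\cQ^*)\lesssim\varepsilon^2$. From here the two conclusions decouple cleanly.

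For the sampling guarantee $\kl(\cQ^u_T\|\pi)\lesssim\varepsilon^2$, the plan is to apply the data-processing inequality for KL under the projection $\xi\mapsto\xi_T$. Since $\cQ^*$ is constructed so that its time-$T$ marginal is exactly $\pi$ (that is how $\cQ^*$ is defined via $\cQ^0(\d\xi_{[0,T)}|\xi_T)\pi(\d\xi_T)$), this immediately yields $\kl(\cQ^u_T\|\pi)\le\kl(\cQ^u\|\cQ^*)\lesssim\varepsilon^2$. No further work is needed.

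For the normalizing-constant guarantee, the plan is to express the normalized estimator as a path-measure likelihood ratio and then use the same Markov–Pinsker chain as in \cref{eq:jar_acc_bound}. Concretely, from $\log\de{\cQ^u}{\cQ^*}(X)=W^u(X)+\log Z$ one reads off $\frac{\Zh}{Z}=\frac{\e^{-W^u(X)}}{Z}=\de{\cQ^*}{\cQ^u}(X)$ for $X\sim\cQ^u$. Applying Markov's inequality together with $\E_{\cQ^u}|\de{\cQ^*}{\cQ^u}-1|=2\tv(\cQ^u,\cQ^*)$ and Pinsker's inequality yields
\begin{align*}
\prob\!\left(\left|\tfrac{\Zh}{Z}-1\right|\ge\varepsilon\right)
&=\cQ^u\!\left(\left|\tfrac{\d\cQ^*}{\d\cQ^u}-1\right|\ge\varepsilon\right)
\le\tfrac{1}{\varepsilon}\,\E_{\cQ^u}\!\left|\tfrac{\d\cQ^*}{\d\cQ^u}-1\right|
=\tfrac{2}{\varepsilon}\tv(\cQ^u,\cQ^*)
\le\tfrac{\sqrt{2}}{\varepsilon}\sqrt{\kl(\cQ^u\|\cQ^*)}.
\end{align*}
Plugging in $\kl(\cQ^u\|\cQ^*)\lesssim\varepsilon^2$ with a small enough absolute constant bounds the right-hand side by $\tfrac14$, which is exactly the target \cref{eq:acc_whp}.

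There is no real obstacle here; the only subtlety is book-keeping. One has to verify the identity $\frac{\Zh}{Z}=\de{\cQ^*}{\cQ^u}(X)$ carefully (which is why the excerpt spells out the Girsanov-derived expression for $W^u$), and one must be mindful that the constant hidden in ``$\lesssim\varepsilon^2$'' needs to be chosen small enough so that $\frac{\sqrt{2\kl(\cQ^u\|\cQ^*)}}{\varepsilon}\le\tfrac14$. Both points are routine, so the proof is essentially the JE argument repackaged for the F\"ollmer-type reference measure $\cQ^*$.
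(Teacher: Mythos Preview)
Your proposal is correct and follows essentially the same approach as the paper: translate the loss condition into $\kl(\cQ^u\|\cQ^*)\lesssim\varepsilon^2$, then apply the Markov--Pinsker chain (as in \cref{eq:jar_acc_bound}) for the normalizing-constant guarantee and the data-processing inequality for the sampling guarantee.
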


\begin{proof}
    \begin{align*}
        \prob\ro{\abs{\frac{\Zh}{Z}-1}\ge\varepsilon}=\cQ^u\ro{\abs{\de{\cQ^*}{\cQ^u}-1}\ge\varepsilon}              \lesssim\frac{\tv(\cQ^u,\cQ^*)}{\varepsilon}\lesssim\frac{\sqrt{\kl(\cQ^u\|\cQ^*)}}{\varepsilon}.
    \end{align*}
    Therefore, ensuring $\kl(\cQ^u\|\cQ^*)\lesssim\varepsilon^2$ up to some sufficiently small constant guarantees that the above probability remains bounded by $\frac{1}{4}$. Furthermore, by the data-processing inequality, $\kl(\cQ^u_T\|\pi)\le\kl(\cQ^u\|\cQ^*)\lesssim\varepsilon^2$.
\end{proof}

\paragraph{Controlled Monte Carlo Diffusion (CMCD, \citet{vargas2024transport}).} We borrow the notations from \cref{sec:jar} due to its similarity with JE.

Given $(\pit_t)_{t\in[0,T]}$ and the ALD \cref{eq:jar_pr}, we know from the proof of \cref{thm:jar} that to make $X_t\sim\pit_t$ for all $t$, the compensatory drift term $(v_t)_{t\in[0,T]}$ must generate $(\pit_t)_{t\in[0,T}$. %
Now, consider the task of learning such a vector field $(u_t)_{t\in[0,T]}$ by matching the following forward and backward SDEs:
\begin{align*}
    \cPr:~~&\d X_t=(\nabla\log\pit_t+u_t)(X_t)\d t+\sqrt{2}\d B_t,~X_0\sim\pit_0,\\
    \cPl:~~&\d X_t=(-\nabla\log\pit_t+u_t)(X_t)\d t+\sqrt{2}\d\Bl_t,~X_T\sim\pit_T,
\end{align*}
where the loss is $\kl(\cPr\|\cPl)$, discretized in training. Obviously, when trained to optimality, both $\cPr$ and $\cPl$ share the marginal distribution $\pit_t$ at every time $t$. By Girsanov's theorem (\cref{lem:rn_path_measure_contd}), one can prove the following identity for a.s. $X\sim\cPr$: $\log\de{\cPr}{\cPl}(X)=W(X)+C^u(X)-\Delta F$, where $\Delta F$ and $W(X)$ are defined as in \cref{thm:jar}, and
$$C^u(X):=-\int_0^T(\inn{u_t(X_t),\nabla\log\pit_t(X_t)}+\nabla\cdot u_t(X_t))\d t.$$
We refer readers to \citet[Prop. 3.3]{vargas2024transport} for the detailed derivation. By $\E_{\cPr}\de{\cPl}{\cPr}=1$, we know that $\E_{\cPr}\e^{-W(X)-C^u(X)}=\e^{-\Delta F}$. As the paper has not established inference-time performance guarantee given the training loss, we prove the following result characterizing the relationship between the training loss and the accuracy of the sampled distribution as well as the estimated normalizing constant.

\begin{proposition}
    Let $\Zh=Z_0\e^{-W(X)-C^u(X)}$, $X\sim\cPr$ be an unbiased estimator of $Z=Z_0\e^{-\Delta F}$. Then, to achieve both $\kl(\cPr_T\|\pi)\lesssim\varepsilon^2$ (where $\cPr_T$ is the law of $X_T$ in the sampled trajectory $X\sim\cPr$) and $\prob\ro{\abs{\frac{\Zh}{Z}-1}\le\varepsilon}\ge\frac{3}{4}$, it suffices to choose $u$ that satisfies $\kl(\cPr\|\cPl)\lesssim\varepsilon^2$.
\end{proposition}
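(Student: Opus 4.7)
\begin{sketchofproof}
The plan is to mirror the strategy used for \cref{thm:jar_complexity} and for the PIS proposition above, leveraging the RN derivative identity $\log\de{\cPr}{\cPl}(X)=W(X)+C^u(X)-\Delta F$ already quoted from \citet[Prop. 3.3]{vargas2024transport}. From this identity, the estimator satisfies $\frac{\Zh}{Z}=\de{\cPl}{\cPr}(X)$ for a.s.\ $X\sim\cPr$, so the question of accuracy reduces directly to a statement about the Radon-Nikod\'ym derivative between the sampling and reference path measures.

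First I would translate the relative-error event into a TV bound: by Markov's inequality combined with the identity $\tv(\mu,\nu)=\frac{1}{2}\E_\mu\abs{\de{\nu}{\mu}-1}$,
\begin{align*}
\prob\ro{\abs{\frac{\Zh}{Z}-1}\ge\varepsilon}=\cPr\ro{\abs{\de{\cPl}{\cPr}-1}\ge\varepsilon}\le\frac{1}{\varepsilon}\E_{\cPr}\abs{\de{\cPl}{\cPr}-1}=\frac{2}{\varepsilon}\tv(\cPr,\cPl).
\end{align*}
Applying Pinsker's inequality then gives $\tv(\cPr,\cPl)\le\sqrt{\kl(\cPr\|\cPl)/2}$, so $\kl(\cPr\|\cPl)\lesssim\varepsilon^2$ (with a sufficiently small absolute constant) forces the failure probability below $\frac{1}{4}$, yielding the high-probability guarantee \cref{eq:acc_whp}.

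Second, I would handle the sampling error $\kl(\cPr_T\|\pi)$ through the data-processing inequality. Since $\cPl$ is defined as the path measure of a backward SDE initialized from $X_T\sim\pit_T=\pi$, its terminal marginal $\cPl_T$ equals $\pi$ exactly. Hence the projection onto the coordinate $t=T$ gives
\begin{align*}
\kl(\cPr_T\|\pi)=\kl(\cPr_T\|\cPl_T)\le\kl(\cPr\|\cPl)\lesssim\varepsilon^2,
\end{align*}
which is the desired sampling bound. Both claims thus follow from the single hypothesis $\kl(\cPr\|\cPl)\lesssim\varepsilon^2$.

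I do not foresee a substantive obstacle: the RN derivative identity is imported from prior work, and the remaining steps are standard Markov/Pinsker/data-processing manipulations, exactly parallel to the PIS argument. The only minor care point is that Pinsker requires the KL divergence in the direction $\kl(\cPr\|\cPl)$ (not $\kl(\cPl\|\cPr)$), which matches the hypothesis because $\tv$ is symmetric; likewise, data-processing must be applied to $\kl(\cPr_T\|\cPl_T)$ with $\cPr$ as the first argument, which again aligns with the available bound.
\end{sketchofproof}
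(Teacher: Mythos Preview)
Your proposal is correct and follows essentially the same approach as the paper: translate $\frac{\Zh}{Z}$ into the RN derivative $\de{\cPl}{\cPr}$, apply Markov's inequality plus the $L^1$ characterization of TV, then Pinsker, and finally invoke data-processing for the marginal at $T$. The paper's proof is a one-line reference to the PIS argument (\cref{thm:pis_complexity}) that carries out exactly this chain.
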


\begin{proof}
    The proof of this theorem follows the same reasoning as that of \cref{thm:pis_complexity}. For normalizing constant estimation,
    $$\prob\ro{\abs{\frac{\Zh}{Z}-1}\ge\varepsilon}=\cPr\ro{\abs{\de{\cPl}{\cPr}-1}\ge\varepsilon}\lesssim\frac{\tv(\cPr,\cPl)}{\varepsilon}\lesssim\frac{\sqrt{\kl(\cPr\|\cPl)}}{\varepsilon}\lesssim1.$$
    For sampling, the result is an immediate corollary of the data-processing inequality.
\end{proof}

\section{Details of Experimental Results}
\label{app:exp}

\subsection{Modified M\"uller Brown distribution}
The M\"uller Brown potential energy surface is a canonical example of a potential surface used in molecular dynamics. Here, we consider a modified version of this distribution as defined in \citet[App. D.5]{he2024zeroth}. For $x=(x_1,x_2)\in\R^2$, the target distribution is $\pi(x)=\frac{1}{Z}\exp(-0.1(V_q(x)+V_m(x)))$, where
\begin{align*}
    V_q(x)&=35.0136(\xb_1+0.033923)^2+59.8399(\xb_2-0.465694)^2,\\
    V_m(x)&=\sum_{i=1}^4 A_i\exp( a_i (\xb_1 - X_i)^2 + b_i (\xb_1 - X_i)(\xb_2 - Y_i) + c_i (\xb_2 - Y_i)^2).
\end{align*}
In the above equations, $\xb_1=0.2(x_1-3.5)$, $\xb_2=0.2(x_2+6.5)$, $A=(-200,-100,-170,15)$, $a=(-1,-1,-6.5,0.7)$, $b=(0,0,11,0.6)$, $c=(-10,-10,-6.5,0.7)$, $X=(1,0,-0.5,-1)$, $Y=(0,0.5,1.5,1)$. The ground truth value of the normalizing constant computed by numerical integral (\texttt{scipy.integrate.dblquad}) is $Z=22340.9983$ with estimated absolute error $0.0001$.

We run each method with approximately the same oracle complexity. Aside from the quantitative results in \cref{tab:exp}, we also visualize the samples drawn from each method against the level curves of the potential in \cref{fig:exp_mueller}. It is clear from the table and figure that TI and AIS fail to provide accurate estimates of the normalizing constant or sample from the target distribution due to the deficiency of the exploration of different modes. All four RDS-based methods provide accurate estimates of the normalizing constant, with SNDMC and ZODMC being the two best methods.

\begin{figure}[h]
    \centering
    \includegraphics[width=\textwidth]{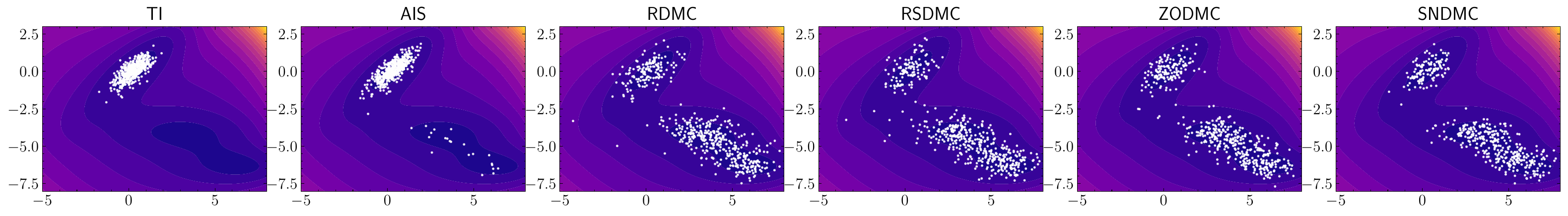}
    \caption{Visualization of the samples from the modified M\"uller Brown distribution. The generated samples are displayed on top of the level curves of the potential energy surface (darker color corresponds to lower potential energy, i.e., higher probability density).}
    \label{fig:exp_mueller}
\end{figure}

\subsection{Gaussian Mixture Distribution}
We now consider a Gaussian mixture distribution $\pi$ in $\R^2$ with $4$ components, having weights $0.1, 0.2, 0.3, 0.4$, means 
$$\begin{pmatrix}0\\0\end{pmatrix}, \begin{pmatrix}0\\11\end{pmatrix}, \begin{pmatrix}9\\9\end{pmatrix}, \begin{pmatrix}11\\0\end{pmatrix},$$
and covariances
$$\begin{pmatrix}1 & 0.5\\0.5 & 1\end{pmatrix}, \begin{pmatrix}0.3 & -0.2\\-0.2 & 0.3\end{pmatrix}, \begin{pmatrix}1 & 0.3\\0.3 & 1\end{pmatrix}, \begin{pmatrix}1.2 & -1\\-1 & 1.2\end{pmatrix}.$$
As the p.d.f. is available in closed form, the ground truth value of the normalizing constant is $Z=1$. Due to the separation of the modes and the imbalance of the weights, this distribution is more challenging to sample from. In the quantitative results shown in \cref{tab:exp}, we report the mean and standard deviation of $\frac{\Zh}{Z}$ as well as two metrics for the quality of the samples: maximum mean discrepancy (MMD) and Wasserstein-2 distance ($\text{W}_\text{2}$) between the generated samples $\pihsamp$ and ground truth samples from $\pi$. The visualization of the samples is shown in \cref{fig:exp_mog}. Again, TI and AIS are confined to mode at zero where the initial samples are located, and fail to provide accurate estimates of the normalizing constant. All RDS-based methods provide accurate estimates of the normalizing constant and high quality samples.

\begin{figure}[h]
    \centering
    \includegraphics[width=\textwidth]{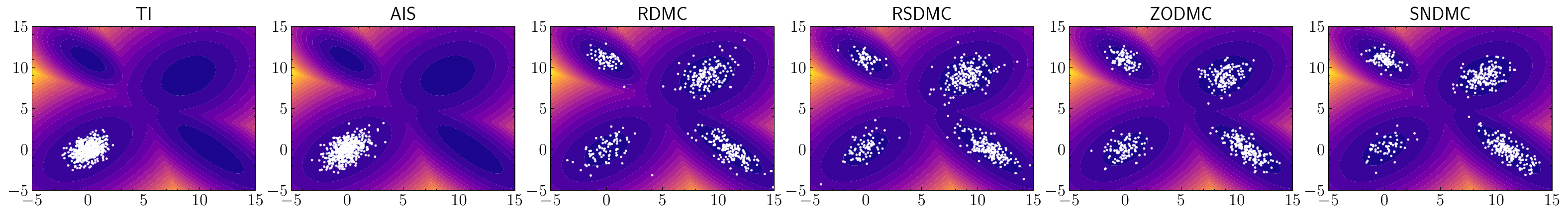}
    \caption{Visualization of the samples from the Gaussian mixture distribution. The generated samples are displayed on top of the level curves of the potential (darker color corresponds to lower potential, i.e., higher probability density).}
    \label{fig:exp_mog}
\end{figure}

\subsection{Implementation Details}
\paragraph{General implementation details.} For both experiments, we run each method for $1024$ rounds and output the mean and standard deviation of all $1024$ estimates of $\frac{\Zh}{Z}$. In each round, we parallelly run $1024$ i.i.d. trajectories, which produces $1024$ i.i.d. samples from $\pi$ and $1024$ i.i.d. estimates of the normalizing constant, and we treat the average of the estimates as the final estimate of that round. We record the oracle complexity of each algorithm and tune the hyperparameters to make sure that the oracle complexity for producing \textit{each sample} from $\pi$ is between $50000$ and $60000$ for a fair comparison. For TI, we choose $\lambda_0=100$, $\lambda_{i+1}=\frac{1.45}{1+1/\sqrt{d}}\lambda_i$ until $\lambda_i\le\frac{1}{2\sqrt{d}}$, and $N=32$ i.i.d. samples. For AIS, we choose $\lambda_0=100$, $M=60000$ steps, and ALMC step size $T_\ell=0.01$. For all RDS-based methods, we choose the total time duration $T=5$, early stopping time $\delta=0.005$, and $N=50$ uniformly spaced time points $t_n=\frac{n}{N}(T-\delta)$. Specifically, for RDMC, we use $64$ samples from $\pib_{0|t}(\cdot|x)$ to estimate the score $\nabla\log\pib_t(x)$, and run LMC for $16$ steps with step size $0.01$, initialized by importance sampling from $\pib_{0|t}(\cdot|x)\propto\e^{-V(\cdot)}\n{\cdot|\e^tx,(\e^{2t}-1)I}$ with proposal $\n{\e^tx,(\e^{2t}-1)I}$; for RSDMC, we choose the number of recursive steps as $2$, use $16$ samples from $\pib_{0|t}(\cdot|x)$ to estimate the score $\nabla\log\pib_t(x)$, and run LMC for $10$ steps with step size $0.01$ using the same initialization based on importance sampling; finally, for both ZODMC and SNDMC, we use $1024$ samples from $\pib_{0|t}(\cdot|x)$ to estimate the score $\nabla\log\pib_t(x)$.

\paragraph{Evaluation metrics for sampling.} 
In the experiment of Gaussian mixture distribution, in each round, we draw $1024$ samples from both the algorithm and the target distribution, and compute the following two metrics to evaluate the quality of the samples. For two sets of samples $\cX=\{x_i\}_{i=1}^n$ and $\cY=\{y_j\}_{j=1}^m$, the MMD is defined as
$$\mmd(\cX,\cY) := \sqrt{
    \frac{1}{n^2} \sum_{1\le i, i'\le n} k(x_i, x_{i'})
    -\frac{2}{nm} \sum_{1\le i\le m,1\le j\le n} k(x_i, y_j)
    +\frac{1}{m^2} \sum_{1\le j, j'\le m} k(y_j, y_{j'})
},$$
where $k(x, y) = \frac{1}{K} \sum_{i=1}^K \exp\ro{-\frac{\|x - y\|^2}{2\sigma_i^2}}$ is a multiscale radial basis function (RBF) kernel. Following the implementation in \citet{he2024zeroth}, we set $K=10$ and $\{\sigma_i\}_{i=1}^{10}=\{-4,-2,0,...,12,14\}$. Second, the $\text{W}_\text{2}$ distance is computed by \texttt{ot.emd2(a, b, M) ** 0.5} using the Python Optimal Transport (POT) package \citep{flamary2021pot}, where $\mathtt{a}=\frac{1}{n}1_n$, $\mathtt{b}=\frac{1}{m}1_m$, and $\mathtt{M}=(\|x_i - y_j\|^2)_{1\le i\le n,1\le j\le m}$. $1_n$ represents the vector of all ones with length $n$.

\newpage

\end{document}